%% file: HAL.tex
\documentclass{article} 
\usepackage{iclr2027_conference,times}

\usepackage{amsmath}
\usepackage{amsthm}
\usepackage{amsfonts}
\usepackage{amssymb}

\usepackage[utf8]{inputenc}
\usepackage[T1]{fontenc}

\usepackage{float}
\usepackage{hyperref}
\usepackage{url}
\usepackage{booktabs}
\usepackage{cellspace}
\usepackage{nicefrac}
\usepackage{microtype}
\usepackage{xcolor}
\usepackage{breqn}
\usepackage{graphicx}
\usepackage{xfrac}
\usepackage{multirow}
\usepackage{pifont}
\usepackage{algorithm}
\usepackage{algpseudocode}
\usepackage{comment}

\newcommand{\crit}{\mathrm{crit}\,}

\newcommand{\timeentry}{t_{\,\mathrm{entry}}}
\newcommand{\timeentryLB}{t_{\,\mathrm{entry,LB}}}
\newcommand{\timeentryMaj}{t_{\,\mathrm{entry,maj}}}

\newcommand{\Zmaj}{Z_{\,\mathrm{maj-adv}}}

\newcommand{\Zmin}{Z_{\,\mathrm{min}}}
\newcommand{\Kmin}{\mathcal{K}_{\mathrm{min}}}
\newcommand{\Kmax}{\mathcal{K}_{\mathrm{maj}}}
\newcommand{\Qrob}{\mathcal{Q}}

\newcommand{\hinv}{h_{\mathrm{inv}}}

\newcommand{\Spec}{{\mathcal S}}
\newcommand{\supp}{\mathrm{supp}}
\newcommand{\Clip}{\mathrm{Clip}}

\newcommand{\thetaInit}{\theta_{\mathrm{init}}}
\newcommand{\ThetaInit}{\Theta_{\mathrm{init}}}

\newcommand{\jm}[1]{[[\textcolor{red}{#1}]]}

\newcommand{\gradnorm}[2]{\operatorname{grad}_{#1}#2}
\newcommand{\ngradnorm}[2]{\rho\bigl(\operatorname{grad}_{#1}#2\bigr)}

\newcommand{\jer}[1]{\textcolor{magenta}{#1}}
\newcommand{\jml}[1]{\textcolor{blue}{#1}}

\usepackage{aliascnt}
\usepackage[capitalize]{cleveref}

\newtheorem{theorem}{Theorem}[section]
\crefname{theorem}{theorem}{theorems}
\Crefname{theorem}{Theorem}{Theorems}

\newaliascnt{lemma}{theorem}
\newtheorem{lemma}[lemma]{Lemma}
\aliascntresetthe{lemma}
\crefname{lemma}{lemma}{lemmas}
\Crefname{lemma}{Lemma}{Lemmas}

\newaliascnt{corollary}{theorem}
\newtheorem{corollary}[corollary]{Corollary}
\aliascntresetthe{corollary}
\crefname{corollary}{corollary}{corollaries}
\Crefname{corollary}{Corollary}{Corollaries}

\newaliascnt{fact}{theorem}

\aliascntresetthe{fact}
\crefname{fact}{fact}{facts}
\Crefname{fact}{Fact}{Facts}

\newtheorem{definition}{Definition}[section]
\crefname{definition}{definition}{definitions}
\Crefname{definition}{Definition}{Definitions}

\newtheorem{assumption}{Assumption}[section]
\crefname{assumption}{assumption}{assumptions}
\Crefname{assumption}{Assumption}{Assumptions}

\crefname{example}{example}{examples}
\Crefname{example}{Example}{Examples}

\newtheorem{remark}{Remark}[section]
\crefname{remark}{remark}{remarks}
\Crefname{remark}{Remark}{Remarks}

\newtheorem{proposition}{Proposition}[section]
\crefname{proposition}{proposition}{propositions}
\Crefname{proposition}{Proposition}{Propositions}

\input{math_commands.tex}

\usepackage{hyperref}
\usepackage{url}

\title{On the Comparison of Optimizers for Imbalanced Learning}

\author{Clément Lezane  \\
	Université de Toulouse \\
Toulouse School of Economics \\
ANITI \\
France 
\And
François Bachoc \\
University of Lille \\
Institut Universitaire de France (IUF) \\
France
\And 
Jérôme Bolte \\
Toulouse School of Economics \\
ANITI  \\
France
\And
 Jean-Michel Loubes \\
Université de Toulouse \\
ANITI \\
 Regalia INRIA \\
 France
}

\iclrfinalcopy 
\begin{document}

\maketitle

\begin{abstract}



Data imbalance is pervasive in machine learning, from rare words and anomalies to
underrepresented patterns in heterogeneous or cross-tabulated data. We study idealized optimizers geometries in continuous time to model small-step training in deep learning. 
We assume that the source of imbalance is unobserved: the optimizer has only access to the aggregate training loss ignoring the exact contributions of the majority and minority groups. In this setting, we characterize a region where majority losses are optimized regardless of the admissible minority structure. We derive explicit equations of this zone and bounds on the time needed to leave it. These bounds exhibit a milder dependence on minority amplitude for sign, spectral, and Newton descent than for Euclidean gradient descent. Experiments with AdamW and Muon suggest similar advantages over SGD across language, tabular, and image tasks.

\end{abstract}

\section{Introduction}    
\label{section:intro}

Training data are rarely well balanced. Even when the observed class labels are
carefully supervised, the data may contain latent minority groups associated with acquisition
conditions,  rare subclasses, demographic attributes, or
rare linguistic patterns. Such groups arise for instance in 
hidden-subclass learning, long-tailed recognition
\cite{kearns2018subgroup,sohoni2020subclass,liu2019longtail}, tabular data, or subgroup fairness. Related
phenomena also occur in language modelling, where token and context
frequencies are strongly heterogeneous by nature 
\cite{kunstner2024heavytailed,kunstner2025scaling}. These groups may
be unknown before training or irreducibly imbalanced (rare words, many-features subgroups in tabular data), and in addition, in some cases methods that require their identities for 
resampling or reweighting are doomed to failure.

Building on the study of gradient descent bias in \cite{bachoc2025majorityrulesminorityloses}, we focus on training with imbalanced data and study when the optimization trajectory leaves the region of parameter space in which the majority is systematically favored.

In our training model, the user need not know the precise majority--minority decomposition \(L=L_1+L_0\), where \(L_0\) is a minority component and \(L_1\) its associated majority. Only an amplitude \(\psi>0\), measuring the impact of \(L_0\), is  known. This assumption fits many real-life scenarios, from tabular data to LLMs as presented in \Cref{section:model:imbalance}. Within this idealized training setting, our protocol  defines a minority awareness region associated with an optimizer (modelled by an ODE) and then studies the first time the resulting trajectory enters this region, leaving the place where majority is systematically favored. This entry time provides an indication of the optimizer’s ability to deal with imbalanced data. We now describe this procedure more precisely.

\paragraph{Idealizing optimizer geometry.}
A direct theoretical comparison of modern optimizers is out of reach, and
our modelling proceeds through two distinct idealizations. First, we retain
their dominant update geometries while discarding other algorithmic
features as weight decay, momentum, minibatching. Thus, classically,  gradient descent represents  SGD, sign
descent matches 
Adam-type methods, and spectral
descent corresponds to Muon and Scion-type methods \citep{pethick25a,oikonomidis2026,pethick2026}. 
Although intractable in high dimension Newton's descent is studied to understand the impact of second-order information.  Secondly, as typical learning rates are lower than $10^{-2}$, we study optimizer vector fields in
continuous time, that is solutions of $\dot{\theta}(t) \in -V_L(\theta(t))$ with $\theta(t) \in \mathbb{R}^d$ with $V_L:\mathbb{R}^d\rightrightarrows \mathbb{R}^d$ being multivalued.

\paragraph{Comparing ODEs clocks}
Continuous time
introduces a potential ambiguity in time comparisons: scaling the vector field $V_L$ scales time by the same factor, so the minority awareness time described above is not intrinsically meaningful. To solve this apparent issue, we return to discrete methods,
$\theta_{k+1} \in \theta_k-\eta V_L(\theta_k),$
for which the natural clock is \(t\simeq \eta k\). Over a range of admissible step sizes,
$$
\eta\in[\eta_{\min},\eta_{\max}],
\qquad
\kappa:=\frac{\eta_{\max}}{\eta_{\min}},
$$
the resulting clock discrepancies are therefore controlled by a factor \(\kappa\). Key to our approach is the fact that deep learning losses have rather narrow learning rates regimes across optimizers, e.g. for networks for vision tasks, learning rates typically range from
\(\eta_{\min}\simeq10^{-3}\) to \(\eta_{\max}\simeq10^{-2}\), giving
\(\kappa\simeq 20\) (see \Cref{sec-range-of-kappa} for discussions on tabular data and LLM).
Our continuous-time model is therefore justified on two grounds: as explained earlier the small-step regime makes the continuous dynamics an acceptable model, while the moderate \(\kappa\) value for DL losses ensures that the corresponding time clocks remain comparable up to a controlled factor.

\paragraph{Minority awareness and entry time.}
As in \cite{bachoc2025majorityrulesminorityloses}, we study an initial ``minority-blind'' regime in which majority learning can proceed regardless of the admissible latent minority structure. For each idealized optimizer (gradient descent, sign descent, spectral descent, or Newton's method) and each trajectory, we define \(t_{\mathrm{entry}}\) as the first entry into the minority-aware zone. Within this zone, at least one admissible decomposition yields minority improvement at the majority's expense. This need not correspond to learning the true latent subgroup: awareness concerns at least one admissible minority at the prescribed level \(\psi\). We therefore use \(t_{\mathrm{entry}}\) as an indicator of exposure to imbalance, comparing how rapidly different optimizer geometries leave the initial majority-dominated regime.


\paragraph{Main results.} Explicit equations of minority aware zones and entry-time estimates (both upper and lower) are derived for many optimizers: gradient descent,  
gradient clipping, sign descent, spectral descent, and Newton descent. Their
dependence on the minority scale \(\psi\) differs sharply: normalized
geometries can enjoy a substantially milder deterioration than Euclidean
gradient descent as imbalance becomes severe, with additional gains governed
by gradient support or matrix rank. Both are empirically related to the global dimension. Newton descent provides an idealized
second-order reference confirming that clever geometric rescaling  is key for balanced training.

\paragraph{Empirical evidences.}
We assess the relevance of entry times as indicators for imbalanced learning through two complementary approaches. First, in linear regression, where the latent decomposition and entry times can be computed, we empirically compare entry-time ratios across optimizers and find close agreement with our theoretical predictions. Second, in classification with deep networks and language modelling, where the minority level \(\psi\) and the entry time \(t_{\mathrm{entry}}\) are not directly observable, we evaluate the practical counterparts of the optimizers studied in our analysis—SGD, AdamW, and Muon—by measuring their performance on minority and majority data. Across these experiments, rescaled methods show a systematic advantage over SGD in minority learning, while the relative performance of AdamW and Muon depends on the problem geometry. In language modelling, these results are consistent with the advantage of Adam-type normalization under heavy-tailed frequencies reported in \citep{kunstner2024heavytailed,kunstner2025scaling} while corroborating the geometric benefits of Muon highlighted in recent literature \citep{vasudeva2025, wang2025muon}.

\section{A model of imbalance in supervised learning}
\label{section:model:imbalance}


\paragraph{Minorities in supervised learning}
We consider supervised learning samples
$(x_i,y_i)\in\mathcal{X}\times\mathcal{Y}$, for $1\leq i\leq n$, where $\mathcal{X}$ and $\mathcal{Y}$ are general sets.
We are interested in the situation where a small number of pairs $(x_i,y_i)$ (or equivalently of indices $i$) belong to an unknown minority group.

Typically, in binary classification where $\mathcal{Y} = \{0,1\}$, there could be a small number of $(x_i,y_i)$ with $y_i=1$, that would form a minority group (that would be easily identified). Nevertheless, we are interested in the more challenging case where this minority group cannot be identified easily.
In binary classification, we would have  
$\#\{i:y_i=0\}\simeq \#\{i:y_i=1\}$, but in the features $(x_i)_{i=1}^n$, we would have hidden minority structures. 

To model and motivate these minority structures, we 
introduce latent descriptors of the form
\[
z_i=\phi(x_i,y_i)\in\mathcal{Z},
\]
where $\phi:\mathcal{X}\times\mathcal{Y}\to\mathcal{Z}$ is an unknown map into some set $\mathcal{Z}$. Note that $\phi$ may only depend on  the features, in which case it would hold
$\phi(x_i,y_i)=\phi(x_i)$.
Fixing a set $\mathcal{A}\subset\mathcal{Z}$, the index set
\begin{equation}
\label{eq:IndAset}
I_{\mathcal{A}}:=\{i\in\{1,\dots,n\}:\phi(x_i,y_i)\in\mathcal{A}\}
\end{equation}
may yield a latent minority group $(x_i,y_i)_{i \in I_{\mathcal{A}}}$.
Let us give concrete cases.

\textbf{a) Tabular Data.} Suppose $z_{\phi} \in \mathcal{X} = \mathbb{R}^d  $, define
$\phi(x,y)= z_{\phi}^{\top}x \in \R$ and let $\phi(x,y)$ be a score such as age, weight, or wealth.
Then for each threshold $t\in\mathbb{R}$ one may take
$\mathcal{A}_t=(-\infty,t]$. In that case \(
    I_{\mathcal{A}_t}=\{i:\phi(x_i)\le t\}
\) is a minority set of individuals with particularly low scores.
   
\textbf{b) Hidden subclasses in classification.} Consider multi-class classification where the labels \(y_i \in \{1,\dots,C\}\) are balanced. Nevertheless, it is still possible that each sample also belongs to a subclass \(h_i \in \{1,\dots,M\}\), corresponding to a viewpoint, shooting condition, scene configuration, scanner type, or image resolution. Fix a rare regime \(h_\star \in \{1,\dots,M\}\), and define \(\phi(x,y) \in \{1,\dots,M\}\) with \(\mathcal{A}=\{h_\star\}\). Then
\(
 I_{\mathcal{A}}=\{i : \phi(x_i,y_i) = h_\star\}.
\)
The minority group is then the rare visual regime \(h_\star\). This corresponds to the hidden-subclass setting of \cite{sohoni2020subclass}: the imbalance does not lie in the labels themselves, but within each class; see also \cite{kearns2018subgroup}. 
 

\textbf{c) Rare-word counts in autoregressive language models.} Let
$(s_t)_{1\leq t\leq N}$ be a tokenized corpus with vocabulary
$\mathcal{V}$. Each training sample is $ x_t=(s_1,\dots,s_{t-1}), y_t=s_t.$ For each token $v\in\mathcal{V}$, let
$N_v:=\#\{t:y_t=v\}$ and assume sorted counts $N_{(1)} \geq \cdots \geq N_{(|\mathcal{V}|)} $. We set $\phi(x_t,y_t)=N_{y_t}$, and for a given rank $r$, $\mathcal{A}_r=\{1,\ldots,N_{(r)}\}$ isolates a minority
\[ I_{\mathcal{A}_r}=\{t:N_{y_t}\le N_{(r)}\}\]
whose target tokens fall into the long tail of the distribution \citep{piantadosi2014zipf, kunstner2024heavytailed,kunstner2025scaling, radford2019language}. If the empirical frequencies exhibit a power-law decay, $N_{(r)} \propto r^{-\alpha} $, then
\[
\frac{|\mathcal{I}_{ \mathcal{A}_r} |}{N} \approx \frac{\sum_{k=r}^{|\mathcal{V}|} k^{-\alpha}}{\sum_{k=1}^{|\mathcal{V}|} k^{-\alpha}}.
\]

\paragraph{Empirical risk minimization}

Let $\ell(\theta;x,y) \in \mathbb{R}$ be a loss (typically associated to a neural network with parameters $\theta \in\R^d$). Define the empirical risk  $ L(\theta)=\frac{1}{n}\sum_{i=1}^n \ell(\theta;x_i,y_i). $
Given a minority group $I_{\mathcal{A}}$ as above, we can define the minority loss and the corresponding majority loss
\begin{equation} \label{eq:Lzero}
L_0(\theta)=\frac{1}{n}\sum_{i \in I_{\mathcal{A}}} \ell(\theta;x_i,y_i), \quad L_1(\theta)=\frac{1}{n}\sum_{i \in \{1,\dots,n\} \backslash I_{\mathcal{A}}} \ell(\theta;x_i,y_i).
\end{equation}
We have the loss decomposition $L = L_0 + L_1$, and, as discussed above, this decomposition is unknown in general, and even more, multiple notions of minority simultaneously give rise to multiple decompositions.
This structural ambiguity is inherent to many learning problems; it appears in subgroup fairness \citep{kearns2018subgroup}, hidden-subclass problems \citep{oakden2020hidden, sohoni2020subclass}, and
long-tailed recognition \citep{kang2020decoupling, liu2019longtail}.
In the sequel, we adopt a general yet theoretically tractable viewpoint, considering a general loss function $L$ and a general set of admissible minority losses $L_0$.

\paragraph{The learning loss model}

With the above motivation in mind, for $d \in \mathbb{N}^*$, consider a
loss $L: \mathbb{R}^d \to \mathbb{R}$, for which we make the standing assumption
\begin{assumption}[Loss assumptions]
\label{ass-cost}
$L$ is $C^2$ with
$L_\star := \inf_{\theta \in \mathbb{R}^d} L(\theta) > -\infty$.
\end{assumption}

Motivated by the above, we assume that $L$ is subject to an unknown
decomposition
\[
\forall \theta \in \mathbb{R}^d, \quad L(\theta) = L_1(\theta) + L_0(\theta),
\]
where $L_1,L_0: \mathbb{R}^d \to \mathbb{R}$ are twice continuously differentiable
functions and, for some $\psi,D>0$,
\begin{equation}
\label{def-K}
L_0\in \Kmin
:=
\left\{
M \in \mathcal{C}^2(\mathbb{R}^d, \mathbb{R}) \;:\;
\forall \theta \in \mathbb{R}^d,\;
\|\nabla M(\theta)\|_2 \leq \psi\big(\|\theta\|_2 + D\big)
\right\}.
\end{equation}

The abstract decomposition $L = L_1 + L_0$ should therefore be interpreted as a way to model uncertainty over minority structures while keeping the total objective fixed. 

In view of the above, $\psi$ may be understood as a bound on the
effective mass of the latent minority group. Coming back to Equations~\ref{eq:IndAset} and \ref{eq:Lzero}, it is informally interpreted as
\[
\psi \approx 
\frac{n_0}{n_0+n_1} G,
\]
where $n_0$ is the cardinality of $\mathcal{I}_{\mathcal{A}}$, $n_1 = n - n_0$ and $G$ is the ``average'' Lipschitz constant of the gradients of $\ell(\cdot;x_i,y_i)$, for $i \in \mathcal{I}_{\mathcal{A}}$.
When the loss function has bounded Hessian, we consider $\psi := \sigma_{\max}(\nabla^2 L_0)$  (largest singular value). Hence, as further discussed and quantified in \cite{bachoc2025majorityrulesminorityloses}, $L_0$ can be small because the  minority losses $\ell(\cdot;x_i,y_i)$ are scarce or because they have a small variability.


\section{Learning process and minority aware zone}
\label{section:learning:process:majority}

\subsection{Learning with descent methods}
\label{subsection:learning:with:descent}

The learning process is modeled by a continuous-time dynamical system driven by
an optimizer $V_L:\mathbb{R}^d\rightrightarrows \mathbb{R}^d,$ 
which may be multivalued, that is $V_L(\theta) \subset \mathbb{R}^d$ for $\theta \in \mathbb{R}^d$. 
We refer to \citet{aubin1984differential} for the theory.
In the remainder we  assume the following.
\begin{assumption}[Dynamics assumptions]
\label{ass-VL}
$V_L$ is nonempty convex compact valued, has closed graph and is locally bounded.
\end{assumption} 
From \Cref{Th:aubincell},
for any $\theta_{\text{init}} \in \mathbb{R}^d$,
there exists an absolutely continuous curve $\theta$ and $T>0$ satisfying
\begin{equation}
\label{eq-dynamical-system}
\dot{\theta}(t) \in -V_L(\theta(t)) \quad \text{ for a.e. } t\in [0,T), \qquad \theta(0) = \theta_{\text{init}}.
\end{equation}
In the various settings we adopt, we will see that $T=+\infty$. 

Denote by $dL(\theta)$ the differential mapping of $L$ at $\theta$.
In the sequel, we shall consider various choices for $V_L$, for which the 
key mechanism of descent property holds:
\begin{eqnarray}
\label{eqdescent}
&&dL(\theta)=0\Leftrightarrow V_L(\theta)\ni 0 \\
&&dL(\theta)(v)=\langle v,\nabla L(\theta)\rangle> 0, 
 \: \forall v\in V_L(\theta), \: \forall \theta\in\mathbb{R}^d\setminus dL^{-1}(\{0\}),\label{eqdescent1}
\end{eqnarray}
which ensures that the loss  decreases along trajectories of
\Eqref{eq-dynamical-system}.

Five idealized descent mechanisms are considered: gradient descent,
gradient clipping, sign descent, spectral descent, and Newton descent.
They share a common variational construction: gradient directions
are selected by maximizing the differential over a norm ball,
and the speed may then be adjusted through clipping or normalization.
For simplicity, we first present this construction for a fixed norm
\(\|\cdot\|\) on \(\mathbb{R}^d\). Parameter-dependent
norms \(\|\cdot\|_\theta\) would accommodate Newton descent.

\paragraph{Normal gradients} For a linear form \(\ell\) on \(\mathbb{R}^d\), define its dual norm by 
$ \|\ell\|_{\ast}
:=
\max_{\|z\|\leq1}\ell[z].$
For a differentiable function \(L\), the corresponding norm-gradient writes
\begin{equation}
\label{eq:scaled-steepest-template}
\gradnorm{\|\cdot\|}{L(\theta)}
:=
\argmax_{\|z\|\leq \|\mathrm{d}L(\theta)\|_{\ast}}
\mathrm{d}L(\theta)[z],
\end{equation}
while the unit norm-gradient directions are
$\displaystyle
\ngradnorm{\|\cdot\|}{L(\theta)}
:=
\argmax_{\|z\|\leq1}\mathrm{d}L(\theta)[z].
$
The gradient $\operatorname{grad}_{\|\cdot\|}$ could be multivalued depending on the curvature properties of the unit ball of the norm $\|\cdot\|$. From the definitions, we infer:
\begin{equation}\label{eq:l_op}
   \forall v \in  \gradnorm{\|\cdot\|}{L(\theta)}, \,\,\,\, \|v\|=\|\mathrm{d}L(\theta)\|_{\ast} \,\,\,\, \mbox{ with  } \,\,\,\,\mathrm{d}L(\theta)[v]=\|\mathrm{d}L(\theta)\|_{\ast}^2.
\end{equation}

 By standard arguments the multivalued mappings $\operatorname{grad}_{\|\cdot\|}$ and $\ngradnorm{\|\cdot\|}{L(\theta)}$ are closed maps with nonempty convex compact values. 

 In the remainder the normal gradient descent and the normalized gradient descent will play a key role: 
\begin{equation*}
    \dot \theta(t)\in -\gradnorm{\|\cdot\|}{L(\theta(t))} \mbox{ and }\dot \theta(t)\in -\ngradnorm{\|\cdot\|}{L(\theta(t))}.
\end{equation*}
We have the following well-known basic properties.

\paragraph{Gradient descent:}
For Euclidean gradient descent, choose \(\|\cdot\|=\|\cdot\|_2\) so that \Eqref{eq:scaled-steepest-template} gives
\[
V_L(\theta)=\gradnorm{\|\cdot\|_2}{L(\theta)}=\nabla L(\theta)
,
\qquad
\dot{\theta}(t)=-\nabla L(\theta(t)).
\]

\paragraph{Gradient clipping:}
Fix \(\tau_{\rm clip}>0\), and define
\(\displaystyle 
\Clip_{\tau_{\rm clip}}(g)
:=
\begin{cases}
g, & \|g\|_2\leq \tau_{\rm clip},\\
\tau_{\rm clip}\dfrac{g}{\|g\|_2}, & \|g\|_2>\tau_{\rm clip},
\end{cases}
\)
whose dynamics is $\dot\theta(t)=-\Clip_{\tau_{\rm clip}}(\nabla L(\theta(t))). $

\paragraph{Sign descent:}
Choose \(\|\cdot\|=\|\cdot\|_\infty\), and consider $V_L=\ngradnorm{\|\cdot\|_{\infty}}{L(\theta)}$. Define the scalar
multivalued sign by
$ \sign(s)=1$ for $s>0$, $ \sign(s)=-1$ for $s<0$, and $\sign(0)=[-1,1]$
 and extend it component-wise. 
 The dynamics given by $V_L$ is called {\em sign descent}:
\[
\dot\theta(t)\in-\sign(\nabla L(\theta(t))).
\]

\paragraph{Spectral descent:}
Assume the parameter space is of the form $\mathbb{R}^{a\times b}$. The differential is
represented by the matrix gradient through
\[
\mathrm{d}L(\Theta)[H]=\operatorname{tr}\bigl(\nabla L(\Theta)^\top H\bigr)
\qquad \forall \Theta \in\mathbb{R}^{a\times b}
\qquad \forall H\in\mathbb{R}^{a\times b}.
\] 
The  matrix Spectral norm (w.r.t. to Euclidean norm) is denoted by $\|\cdot\|_{\rm sp}$, its dual norm --the nuclear norm-- is denoted by 
$\|\cdot\|_{\mathrm{nuc}}$. Spectral descent is the corresponding normalized spectral-norm descent:
\begin{equation} \label{eq:normalized:spec:desc}
\dot\Theta(t)\in
-\ngradnorm{\|\cdot\|_{\mathrm{sp}}}{ L(\Theta(t))}.
\end{equation}
More details about spectral descent are provided in \Cref{section:appendix-spectral-Descent}

\paragraph{Newton descent:} 
Assume there exists \(\mu>0\) such that
\begin{equation}\label{ass-newton}
\langle u,\nabla^2L(\theta)u\rangle\geq \mu\|u\|_2^2
\qquad
\forall \theta,u\in\mathbb{R}^d
\end{equation}
and define the {\em Hessian norm} at \(\theta\) in $\R^d$ by
\(
\|u\|_\theta^2:=\langle u,\nabla^2L(\theta)u\rangle, \mbox{ for any $u$ in $\R^d$.}
\)
\Eqref{eq:scaled-steepest-template} yields the Newton field
$
V_L(\theta)
=
\nabla^2L(\theta)^{-1}\nabla L(\theta)$ with $
\dot{\theta}(t)
=
-\nabla^2L(\theta(t))^{-1}\nabla L(\theta(t)).
$

\subsection{Majority adverse zone and minority aware zone}


For a fixed loss $L$, we use the admissible minority class \(\Kmin\) defined in
\Cref{def-K}. The corresponding admissible ``majority losses"\footnote{The term majority refers to the potential magnitude of the losses in $\Kmax$ not necessarily to their actual magnitude as they may be for instance null functions (in the case where $L \in \Kmin$ and $L_0 = L$).} are
\begin{equation}
\label{eq:Kmax}
\Kmax(L):=\{L-M:\ M\in\Kmin\}.
\end{equation}

\medskip

For the sequel, fix a decomposition $L=L_1+L_0$ with $L_0\in\Kmin$ and
$L_1=L-L_0\in\Kmax(L)$, and consider a possibly multivalued optimizer
$V_L:\mathbb{R}^d\rightrightarrows\mathbb{R}^d$.  Based on the majority adverse zone introduced in
\cite{bachoc2025majorityrulesminorityloses}, we introduce the following
variation:

\begin{definition}[Majority adverse zone]
\label{def-Maj}
Consider a loss function $L\in \mathcal{C}^2(\mathbb{R}^d,\mathbb{R})$, an
optimizer $V_L:\mathbb{R}^d\rightrightarrows\mathbb{R}^d$ satisfying \Cref{ass-VL}, and
$L_0\in\Kmin$, yielding a fixed decomposition $L=L_1+L_0$. The majority adverse zone associated with
$(L,V_L,L_0)$ is
\begin{equation}
\label{eq:Zmaj-Q}
\Zmaj(L_0)
:=
\left\{\theta\in\mathbb{R}^d:
\sup_{v\in V_L(\theta)}
\langle v,\nabla L_1(\theta)\rangle<0
\right\}.
\end{equation}
\end{definition}

Equivalently, $\theta\in\Zmaj(L_0)$ iff
$
\langle v,\nabla L_1(\theta)\rangle<0$ for all $v\in V_L(\theta)
$, which matches the definition in
\cite{bachoc2025majorityrulesminorityloses} when $V_L=\nabla L$ and with $\leq$ in \Eqref{eq:Zmaj-Q}, see \Cref{rem:strict:ineq}.

As the majority-minority  decomposition is unknown,
we take the union of the majority adverse zones over all admissible
minority components, to obtain what we called the minority aware zone. 


\begin{definition}[Minority aware zone]
\label{def-RMaj}
Consider a loss function $L\in \mathcal{C}^2(\mathbb{R}^d,\mathbb{R})$ and an
optimizer $V_L:\mathbb{R}^d\rightrightarrows\mathbb{R}^d$satisfying \Cref{ass-VL}. The minority aware zone is
\begin{equation*}
\Zmin
:=
\bigcup_{L_0\in\Kmin}\Zmaj(L_0).
\end{equation*}
\end{definition}

 As explained in the next lemma,  the complement of $\Zmin$ is, roughly speaking, a
minority-blind regime: the learning dynamic remains
majority-favouring whatever the decomposition. Alternatively  the minority aware zone may be seen as the locus of points where the optimizer favors at least one  minority in some decomposition. In $\Zmin$ complete blindness ceases; see \Cref{lem:minority-activation}.

\begin{lemma}[Uniform majority dominance]
\label{thm:uniform-majority-dominance}
Under Assumptions~2.1 and~3.1, suppose that $V_L$ is
single-valued. Then, for every $\theta\in\mathbb{R}^d$,
\[
\theta\notin\Zmin
\quad\Longleftrightarrow\quad
dL_1(\theta)[-V_L(\theta)]\leq 0
\quad\text{for every }L_1\in\Kmax(L).
\]
Thus, every admissible majority loss is nonincreasing
along a trajectory that remains outside $\Zmin$.
\end{lemma}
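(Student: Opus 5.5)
This is essentially an unwinding of the definitions, so the plan is to prove both directions directly from \Cref{def-Maj}, \Cref{def-RMaj} and \Eqref{eq:Kmax}, using that $V_L(\theta)$ is a single vector $v:=V_L(\theta)$. First I will record that, by \Eqref{eq:Kmax}, the maps $L_0\mapsto L-L_0$ and $L_1\mapsto L-L_1$ are mutually inverse bijections between $\Kmin$ and $\Kmax(L)$. Also, for $C^1$ functions $dL_1(\theta)[-v]=-\langle v,\nabla L_1(\theta)\rangle$. Since $V_L(\theta)=\{v\}$, the supremum in \Eqref{eq:Zmaj-Q} reduces to the single value $\langle v,\nabla L_1(\theta)\rangle$. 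So $\theta\in\Zmaj(L_0)$ iff $\langle v,\nabla (L-L_0)(\theta)\rangle<0$, i.e. iff $dL_1(\theta)[-v]>0$ with $L_1=L-L_0$.

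For the equivalence, I will argue by negation. $\theta\in\Zmin$ iff there is some $L_0\in\Kmin$ with $\theta\in\Zmaj(L_0)$. By the bijection this holds iff there is some $L_1\in\Kmax(L)$ with $dL_1(\theta)[-V_L(\theta)]>0$. Negating both sides gives $\theta\notin\Zmin$ iff $dL_1(\theta)[-V_L(\theta)]\le 0$ for every $L_1\in\Kmax(L)$. The strict inequality in \Eqref{eq:Zmaj-Q} is exactly what turns into the non-strict $\le 0$ after negation, so no closure issue arises.

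For the trajectory consequence, take an absolutely continuous solution of \Eqref{eq-dynamical-system} with $\theta(t)\notin\Zmin$ for $t$ in an interval $I$, and fix $L_1\in\Kmax(L)$. Since $L_1$ is $C^2$ (hence $C^1$) and $\theta$ is absolutely continuous, $t\mapsto L_1(\theta(t))$ is absolutely continuous. Its derivative at a.e.\ $t$ is $dL_1(\theta(t))[\dot\theta(t)]=dL_1(\theta(t))[-V_L(\theta(t))]$, which is $\le 0$ by the equivalence just proved. Integrating then shows $L_1(\theta(t))$ is nonincreasing on $I$.

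The only mildly delicate point is this last step: justifying the chain rule for a $C^1$ function composed with an absolutely continuous curve. It holds because $\theta$ is locally bounded and $\nabla L_1$ is continuous, so $L_1$ is locally Lipschitz along the curve. Everything else is bookkeeping with the definitions.
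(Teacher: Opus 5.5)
Your proof is correct, and it is more direct than the paper's route to this lemma. The paper obtains the lemma as the single-valued case of \Cref{thm:uniform-majority-dominance-general}. For the forward direction there, it forms the closed set $C_\theta$ of admissible majority gradients at $\theta$. This set is bounded because every $\nabla L_1(\theta)$ lies within $\psi(\|\theta\|_2+D)$ of $\nabla L(\theta)$, and convex because $\Kmin$ is. The paper then applies a bilinear minimax theorem on $C_\theta\times V_L(\theta)$ to produce one velocity $v_\theta\in V_L(\theta)$ with $\langle v_\theta,\nabla L_1(\theta)\rangle\geq 0$ for every $L_1\in\Kmax(L)$. Only at the end does it note that $v_\theta$ must equal $V_L(\theta)$ when $V_L$ is single-valued.

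You instead observe that for a singleton the supremum in \Eqref{eq:Zmaj-Q} is a single inner product. The equivalence is then just the negation of the existential quantifier in the union defining $\Zmin$, carried through the bijection $L_0\mapsto L-L_0$ between $\Kmin$ and $\Kmax(L)$.

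Your version is shorter and needs neither convexity or compactness of $V_L(\theta)$, nor convexity of $\Kmin$, nor the closure step. It makes clear that the single-valued lemma is purely definitional. The paper's version yields the stronger multivalued statement: outside $\Zmin$, one admissible velocity is majority-favouring for all decompositions simultaneously. That requires exchanging $\inf_{L_1}$ and $\sup_{v}$, which quantifier negation cannot do once $V_L(\theta)$ has more than one element. The trajectory consequence (chain rule for a $C^1$ function along an absolutely continuous curve, then integration) is handled identically in both.
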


See also \Cref{thm:uniform-majority-dominance-general} and the Appendix for other considerations on the minority awareness zone. An illustration is provided in Figure \ref{fig:robust-zones}.



\begin{figure}
    \centering
    \includegraphics[scale=0.22]{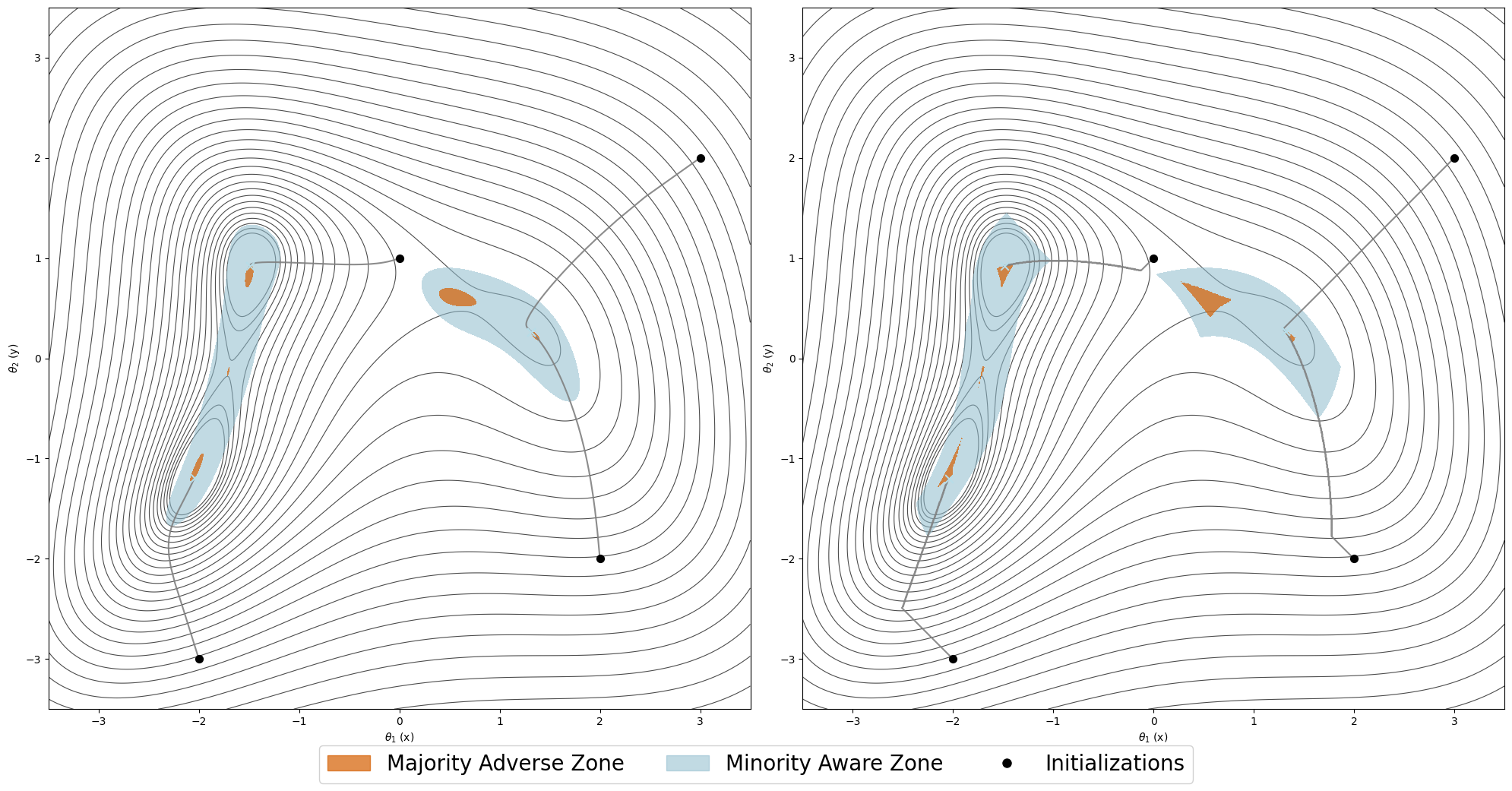}
    \caption{Landscape $L$, majority adverse, minority aware zones, learning curves for gradient descent (left) and sign descent (right). 
    }
       \label{fig:robust-zones}
\end{figure}

\subsection{Variational characterization of the minority aware zone}

\paragraph{Variational representations}
Minority aware zones can be represented as level sets of functions that, in
all cases considered below, admit explicit formulas.
For $L_0\in\Kmin$, set $L_1=L-L_0$ and define
\begin{equation*}
Q_{L_0}(\theta)
:=
\sup_{v\in V_L(\theta)}
\langle v,\nabla L_1(\theta)\rangle .
\end{equation*}

This function measures the worst, over all admissible velocities,
instantaneous variation of the majority loss. It is made to detect zones where $L_1$ does not decrease, indeed: 
$\Zmaj(L_0)=\{\theta\in\mathbb{R}^d:\ Q_{L_0}(\theta)<0\}$. We now aggregate over all
admissible minority losses by setting
\begin{equation}
\label{eq:calQ-multivalued}
\Qrob(\theta)
:=
\inf_{L_0\in\Kmin}Q_{L_0}(\theta)
=
\inf_{L_0\in\Kmin}
\sup_{v\in V_L(\theta)}
\left\langle v,\nabla (L-L_0)(\theta)\right\rangle,
\end{equation}
which is finite-valued due to the definition of $\Kmin$. We immediately prove the following lemma.
\begin{lemma}[Minority zone and \texorpdfstring{$\Qrob$}{Q}]
\label{lemma:robust-zone-calQ-main} Under \Cref{ass-VL}, and by definition of $\Kmin$, we have:
\[
\theta\in\Zmin
\quad\Longleftrightarrow\quad
\Qrob(\theta)<0, \quad\mbox{ for every $\theta\in\mathbb{R}^d$}.
\]
\end{lemma}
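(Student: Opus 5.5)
The plan is to unfold the definitions and use the elementary characterization of a strict infimum; no analytic estimate is needed. First, at a fixed $\theta\in\mathbb{R}^d$, check that every quantity involved is well defined. By \Cref{ass-VL}, $V_L(\theta)$ is nonempty and compact. For each $L_0\in\Kmin$, the map $v\mapsto\langle v,\nabla(L-L_0)(\theta)\rangle$ is continuous, so $Q_{L_0}(\theta)$ is a finite supremum, and it is in fact attained. The class $\Kmin$ is nonempty, since it contains the zero function. Hence $\Qrob(\theta)$ is an infimum over a nonempty family of real numbers, with values in $[-\infty,+\infty)$. The finiteness asserted in the text, which follows from the gradient bound $\|\nabla L_0(\theta)\|_2\le\psi(\|\theta\|_2+D)$ together with boundedness of $V_L(\theta)$, is not needed for the equivalence.

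Second, rewrite membership in $\Zmin$ via \Cref{def-Maj} and \Cref{def-RMaj}:
\[
\theta\in\Zmin\iff \exists\,L_0\in\Kmin:\ \theta\in\Zmaj(L_0)\iff \exists\,L_0\in\Kmin:\ Q_{L_0}(\theta)<0 .
\]
The second equivalence is just the identity $\Zmaj(L_0)=\{Q_{L_0}<0\}$ noted after the definition of $Q_{L_0}$.

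Third, prove the two implications.

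($\Rightarrow$) If some $L_0$ satisfies $Q_{L_0}(\theta)<0$, then $\Qrob(\theta)\le Q_{L_0}(\theta)<0$.

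($\Leftarrow$) If $\Qrob(\theta)<0$, pick any real $c$ with $\Qrob(\theta)<c<0$; if the infimum is $-\infty$, take $c=-1$. Since $c$ is strictly larger than the infimum, $c$ is not a lower bound of $\{Q_{L_0}(\theta):L_0\in\Kmin\}$. So some $L_0\in\Kmin$ satisfies $Q_{L_0}(\theta)<c<0$, and therefore $\theta\in\Zmaj(L_0)\subset\Zmin$.

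The only delicate point is the strictness of the inequalities. The argument works because both $\Zmaj$ and the criterion use strict inequality, so no attainment of the infimum over $\Kmin$ is required. With non-strict inequalities, as in the variant discussed in \Cref{rem:strict:ineq}, one would instead need the infimum to be attained, and that step would become the real obstacle. Here it does not arise, and the proof is a direct unfolding of definitions.
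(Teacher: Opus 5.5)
Your proof is correct and follows the paper's approach: the paper calls the lemma immediate from the definitions of $\Zmaj(L_0)$, $\Zmin$ and $\Qrob$, which is exactly your two steps, namely $\Qrob(\theta)\le Q_{L_0}(\theta)$ for one direction and the fact that a strict upper bound on an infimum is undercut by some member of the family for the other. Your closing remark overstates the difficulty of the non-strict variant: by \Cref{lemma:pointwise-envelope-K}, the infimum reduces to minimizing a continuous function of $\xi$ over a compact ball, so it is attained, but this does not affect your argument.
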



\paragraph{Equations for the minority aware zone}
In order to obtain explicit equations defining $\mathcal Q$, we shall use the following key results:
\begin{lemma}[Pointwise gradient envelope for \texorpdfstring{$\Kmin$}{Kmin}]
\label{lemma:pointwise-envelope-K}
Under \Cref{ass-cost}, one has
\begin{equation}
\label{eq:pointwise-envelope-K}
\{\nabla L_0(\theta):L_0\in\Kmin\}
=
\{\xi\in\mathbb{R}^d:\|\xi\|_2\leq \psi(\|\theta\|_2+D)\}\,\mbox{ for
every $\theta\in\mathbb{R}^d$.}\end{equation}
\end{lemma}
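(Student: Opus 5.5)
The plan is to prove the two inclusions separately. Fix $\theta\in\mathbb{R}^d$ and write $r:=\psi(\|\theta\|_2+D)$.

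The inclusion ``$\subseteq$'' is immediate from the definition of $\Kmin$ in \Eqref{def-K}. For any $L_0\in\Kmin$, the constraint $\|\nabla L_0(\theta)\|_2\leq \psi(\|\theta\|_2+D)$ holds at every point, in particular at $\theta$. Hence $\nabla L_0(\theta)$ lies in the closed ball $\bar B(0,r)$.

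For the inclusion ``$\supseteq$'', we build an explicit witness. Given $\xi$ with $\|\xi\|_2\leq r$, we need some $M\in\mathcal{C}^2(\mathbb{R}^d,\mathbb{R})$ with $\nabla M(\theta)=\xi$ and $\|\nabla M(\vartheta)\|_2\leq\psi(\|\vartheta\|_2+D)$ for all $\vartheta$.

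The naive linear choice $M(\vartheta)=\langle\xi,\vartheta\rangle$ has constant gradient $\xi$. It fails wherever $\psi(\|\vartheta\|_2+D)<\|\xi\|_2$, which can happen near the origin when $\|\theta\|_2$ is large.

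Instead, we use a gradient that grows with $\|\vartheta\|_2$ and equals $\xi$ exactly at $\theta$. Set
\[
M(\vartheta):=\langle \xi,\vartheta\rangle\, g(\vartheta),
\]
or, more cleanly, take a radial profile along a fixed direction. The cleanest option I would try first uses a $C^2$ scalar function $h:[0,\infty)\to[0,\infty)$ with $h(s)\leq \psi(s+D)$ and $h(\|\theta\|_2)=\|\xi\|_2$. For instance $h(s)=\lambda\,\psi\sqrt{s^2+D^2}$ with $\lambda:=\|\xi\|_2/(\psi\sqrt{\|\theta\|_2^2+D^2})$. Then $\lambda\leq 1$, because $\sqrt{\|\theta\|^2+D^2}\geq (\|\theta\|+D)/\sqrt2$ is not quite enough. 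So I would rather take $h(s)=\lambda\psi(\sqrt{s^2+\epsilon^2}+D)$ with $\epsilon$ small, or simply tune it so that $\lambda\leq1$.

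The main obstacle is that a function whose gradient has a prescribed norm profile must also have a prescribed direction $\xi$ at $\theta$, while staying a genuine gradient field. A rotation-based construction is awkward. So I would use the simplest gradient field: the linear $M_1(\vartheta)=\langle \xi,\vartheta\rangle\phi(\vartheta)$ fails because of the product rule. The alternative I would commit to is
\[
M(\vartheta):=\lambda\,\psi\,\Bigl(\tfrac12\|\vartheta-c\|_2^2\Bigr)+\langle b,\vartheta\rangle\quad\text{(a quadratic)},
\]
but it is simplest to choose $M(\vartheta)=\tfrac{\alpha}{2}\|\vartheta\|_2^2+\langle b,\vartheta\rangle$. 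Its gradient is $\alpha\vartheta+b$, and we require $\alpha\theta+b=\xi$. Taking $\alpha\in[0,\psi]$ and $b=\xi-\alpha\theta$, the triangle inequality gives $\|\alpha\vartheta+b\|\leq\alpha\|\vartheta\|+\|b\|$. It therefore suffices to choose $\alpha\leq\psi$ and $\|\xi-\alpha\theta\|_2\leq\psi D$.

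This is possible as follows. Decompose $\xi$ and pick $\alpha=\min(\psi,\ \max(0,\langle\xi,\theta\rangle)/\|\theta\|^2)$. Alternatively, and more robustly, choose $\alpha\theta$ as the point of the segment $[0,\psi\theta]$ closest to $\xi$. Since $\xi$ lies in the ball of radius $\psi\|\theta\|+\psi D$, which is the Minkowski sum of the segment-containing ball $\bar B(0,\psi\|\theta\|)$ and $\bar B(0,\psi D)$, that projection step still needs care. If the segment alone does not suffice, I would replace $\alpha\vartheta$ by $A\vartheta$ with a symmetric matrix $A$, $\|A\|_{\rm sp}\leq\psi$, chosen so that $A\theta=\frac{\|\theta\|}{\|\theta\|+D}\,\xi$ (a scaled reflection/rotation-free symmetric map sending $\theta$ to a vector of norm $\leq\psi\|\theta\|$, e.g. a Householder reflection times a scalar). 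I would then set $b=\frac{D}{\|\theta\|+D}\xi$, so that $\|b\|\leq\psi D$. Then $M(\vartheta)=\tfrac12\langle A\vartheta,\vartheta\rangle+\langle b,\vartheta\rangle$ is $C^2$, $\nabla M(\theta)=\xi$, and $\|\nabla M(\vartheta)\|\leq\psi\|\vartheta\|+\psi D$, which concludes the proof. The case $\theta=0$ is trivial, taking $A=0$ and $b=\xi$.
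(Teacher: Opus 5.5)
Your final construction is correct and coincides with the paper's proof. Both split $\xi=\frac{\|\theta\|_2}{\|\theta\|_2+D}\xi+\frac{D}{\|\theta\|_2+D}\xi$, take $A$ to be a scaled Householder reflection with $\|A\|_{\rm sp}\le\psi$ sending $\theta$ to the first piece, and use the witness $\frac12\langle A\vartheta,\vartheta\rangle+\langle b,\vartheta\rangle$ with $b$ the second piece.

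In a write-up, drop the exploratory detours, since the scalar choice $\alpha\vartheta+b$ genuinely fails when $\xi\perp\theta$ and $\|\xi\|_2>\psi D$. Also state the trivial subcases where the reflection is replaced by $0$ (when $\xi=0$) or by $I$ (when $\theta/\|\theta\|_2=\xi/\|\xi\|_2$).
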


\begin{theorem}[Multivalued minority aware zone]
\label{thm-calQ}
Under Assumptions \ref{ass-cost} and \ref{ass-VL}, one has, for every
$\theta\in\mathbb{R}^d$,
\begin{equation}
\label{eq:calQ-gradient-multivalued}
\Qrob(\theta) = \sup_{v\in V_L(\theta)}  \Big( \langle v, \nabla L(\theta) \rangle- \psi(\|\theta\|_2+D) \| v\|_2 \Big).
\end{equation}
Consequently, $\theta\in\Zmin
\quad\Longleftrightarrow\quad
\sup_{v\in V_L(\theta)}  \Big( \langle v, \nabla L(\theta) \rangle- \psi(\|\theta\|_2+D) \| v\|_2 \Big) < 0 .$
\end{theorem}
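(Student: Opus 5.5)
Fix $\theta$ and write $r:=\psi(\|\theta\|_2+D)$, $K:=V_L(\theta)$, which is nonempty, convex and compact by \Cref{ass-VL}. Since $\langle v,\nabla(L-L_0)(\theta)\rangle$ depends on $L_0$ only through $\xi=\nabla L_0(\theta)$, \Cref{lemma:pointwise-envelope-K} lets us rewrite the infimum in \Eqref{eq:calQ-multivalued} as an infimum over the ball:
\[
\Qrob(\theta)=\inf_{\|\xi\|_2\le r}\ \sup_{v\in K}\ \big(\langle v,\nabla L(\theta)\rangle-\langle v,\xi\rangle\big).
\]
This reduces the statement to a finite-dimensional min–max problem with the bilinear coupling $\varphi(v,\xi):=\langle v,\nabla L(\theta)-\xi\rangle$.

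The key step is to swap $\inf$ and $\sup$. The function $\varphi$ is affine in $v$, so concave, and affine in $\xi$, so convex. It is continuous, and both $K$ and the closed ball $\bar B(0,r)$ are convex and compact. Sion's (or von Neumann's) minimax theorem therefore gives
\[
\Qrob(\theta)=\sup_{v\in K}\ \inf_{\|\xi\|_2\le r}\ \big(\langle v,\nabla L(\theta)\rangle-\langle v,\xi\rangle\big).
\]
For fixed $v$, the inner infimum is attained at $\xi=r\,v/\|v\|_2$ (any $\xi$ if $v=0$). It equals $\langle v,\nabla L(\theta)\rangle-r\|v\|_2$ by Cauchy–Schwarz, which gives \Eqref{eq:calQ-gradient-multivalued}. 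The ``consequently'' part then follows directly from \Cref{lemma:robust-zone-calQ-main}.

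The main obstacle is justifying the interchange, and this is exactly where convexity and compactness of $V_L(\theta)$ from \Cref{ass-VL} are needed. Without convexity, only the weak inequality $\inf\sup\ge\sup\inf$ would hold.

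The other point to check is that the reparametrization by $\xi$ is exact. \Cref{lemma:pointwise-envelope-K} gives the equality of sets $\{\nabla L_0(\theta):L_0\in\Kmin\}=\bar B(0,r)$. Since the objective depends on $L_0$ only through $\nabla L_0(\theta)$, the infimum over $\Kmin$ equals the infimum over the closed ball, with no loss from a merely dense or strict subset.

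Finally, the supremum over $K$ is attained. The map $v\mapsto\langle v,\nabla L(\theta)\rangle-r\|v\|_2$ is continuous and $K$ is compact, so the resulting expression for $\Qrob(\theta)$ is finite.
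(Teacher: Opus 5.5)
Your proposal is correct and follows the paper's proof essentially step for step. Like the paper, you use the pointwise envelope lemma to reduce the infimum to the ball $\{\|\xi\|_2\le\psi(\|\theta\|_2+D)\}$, exchange $\inf$ and $\sup$ by the von Neumann/Sion minimax theorem (using convexity and compactness of both sets), and evaluate the inner infimum as $-\psi(\|\theta\|_2+D)\|v\|_2$, with the equivalence for $\Zmin$ then read off from the lemma relating $\Zmin$ and $\Qrob$.
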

\Cref{lemma:pointwise-envelope-K} and \Cref{thm-calQ} are proved in \Cref{sec-appendix-proof-section-3}. Next, we define two core quantities. For a norm \(\|\cdot\|\) on \(\mathbb{R}^d\), set
\begin{equation}
\label{mtheta-cnorm}
m_{\|\cdot\|}(\theta)
:=
\inf_{v\in \ngradnorm{\|\cdot\|}{L(\theta)}}\|v\|_2, \quad c_{\| \cdot \|} := \inf_{v \in \mathbb{R}^d \setminus\{0\}} \frac{\|v\|_2}{\| v\|}. 
\end{equation}

Using \Cref{thm-calQ} , we may now derive the explicit expressions of the minority aware zones.  Results are gathered  in \Cref{table:zoology:minority:zone} and details can be found in \Cref{cor-robust-zone-norm-gradient-fields} and \Cref{sec-appendix-proof-section-3}.
Here $\crit$ denotes the set of critical points, $|\supp(\cdot)| = \| \cdot \|_0$ and $\operatorname{rank}(\cdot)$ is the rank of a matrix.

\begin{table}[h]
    \centering
    \begin{tabular}{ll}
        \toprule
         \textbf{Dynamics} &  \textbf{Minority zones} $\Zmin $ \hfill $\theta\notin \crit L$\\
         \midrule 
        $V_L(\theta)=\ngradnorm{\|\cdot\|}{L(\theta)}$& $\|\mathrm{d}L(\theta)\|_{\ast}< \psi m_{\|\cdot\|}(\theta)(\|\theta\|_2+D).$ \\
        $V_L(\theta)=\gradnorm{\|\cdot\|}{L(\theta)}$ & $\|\mathrm{d}L(\theta)\|_{\ast}^2< \psi m_{\|\cdot\|}(\theta) \|\mathrm{d}L(\theta)\|_{\ast}(\|\theta\|_2+D)$ \\
        \midrule
         Gradient descent & $\|\nabla L(\theta)\|_{2}< \psi (\|\theta\|_2+D)$ \\
        Gradient clipping & $\|\nabla L(\theta)\|_{2}< \psi (\|\theta\|_2+D)$\\
        Sign descent & $\|\nabla L(\theta)\|_1 < \psi\sqrt{|\supp(\nabla L(\theta))|}(\|\theta\|_2+D)$\\
        Spectral descent & $\|\nabla L(\Theta)\|_{\mathrm{nuc}}<\psi\sqrt{\operatorname{rank}(\nabla L(\Theta))}(\|\Theta\|_F+D) $\\
        Newton descent & $\left\langle \nabla^2L(\theta)^{-1}\nabla L(\theta),\nabla L(\theta)\right\rangle < \psi\|\nabla^2L(\theta)^{-1}\nabla L(\theta) \|_2(\|\theta\|_2+D)$ (\textbf{$\ast$})\\
        \bottomrule
    \end{tabular}
    \caption{\footnotesize Zoology of minority aware zones for different dynamics. ($\ast$) Strongly convex loss.}
    \label{table:zoology:minority:zone}
\end{table}

\section{Entry time and group minority awareness}
\label{section:training:times}


\begin{definition}[Entry time]
\label{def-entry-time}
Under \Cref{ass-cost,ass-VL}, let $\theta$ be an absolutely continuous solution  to 
\[
\theta(0)=\thetaInit,
\qquad
\dot\theta(t)\in -V_L(\theta(t))
\quad\text{for a.e. }t\geq0.
\]
The entry time along the trajectory \(\theta\) is defined as
\(
\timeentry(\theta)
:=
\inf\{t\geq0:\theta(t)\in\Zmin\} \in [0,+\infty]
\). 
When $\theta$ is unique, we may write $\timeentry(\theta)=\timeentry(\thetaInit)$.
\end{definition}

Entry time is a critical marker that structures the learning process in phases.  Before $\timeentry$, there is a {\em majority dominance} phase: majority traits are learned independently of the minority structure, see \Cref{thm:uniform-majority-dominance} and \Cref{lemma-majority-increase}. Then when the trajectory enters $\Zmin$, some minority structures start being  visible,  see \Cref{lem:minority-activation}. It is quite easy to see that the finiteness of $\timeentry(\theta)$ does not mean that the trajectory stays indefinitely in the minority aware zone, it may go out, but then we can provide similar estimates with new initializations. Entry times serve as theoretical indicators  for the onset of majority-adverse behavior as described in \cite{bachoc2025majorityrulesminorityloses}.

\paragraph{Theoretical bounds for entry time}
Recall \(m_{\|\cdot\|}\) in \Eqref{mtheta-cnorm}, and for a trajectory \(\theta\), consider the {\em $m_{\|\cdot\|}$-average along $\theta$}:
\begin{align}\label{eq:maverage}
&\bar m(t):=\frac1t\int_0^t m_{\|\cdot\|}(\theta(s))\,ds
\quad(t>0),\\
\label{eq:barMtheta}
&\bar M_\theta:=
\frac1{\timeentry(\theta)}
\int_0^{\timeentry(\theta)}
m_{\|\cdot\|}(\theta(s))\,ds  \mbox{ whenever \(0<\timeentry(\theta)<\infty\).}
\end{align}

\begin{assumption}[Quadratic growth at infinity]
\label{assumption-l-smooth} 
Let the constant $D$ be the same as in the definition of $\Kmin$ (see \Eqref{def-K}). Assume that there exists a positive constant $\lambda_{\max} >0$ such that $\forall \theta \in \mathbb{R}^d, \quad L(\theta) - L_\star  \leq \lambda_{\max}( \| \theta\|_2 + D)^2.$ 
\end{assumption}

\begin{theorem}[Entry time for normalized norm-gradient]
\label{thm-ngrad-entry-time-upper-1}
Under \Cref{ass-cost,ass-VL}, let \(\theta\) solve
\[
\dot\theta(t)\in
-\ngradnorm{\|\cdot\|}{L(\theta(t))},
\qquad
\theta(0)=\thetaInit.
\]
If \(\nabla L(\thetaInit)\neq 0\), then
$ \timeentry(\theta)
\leq
\frac{L(\thetaInit)-L_\star}
{\psi D\bar M_\theta}.
$
If \Cref{assumption-l-smooth} also holds and
\(\timeentry(\theta)>0\), then
$ \timeentry(\theta)
\leq
\frac{2\sqrt{\lambda_{\max}}}
{\psi\bar M_\theta}
\sqrt{L(\thetaInit)-L_\star}.$
\end{theorem}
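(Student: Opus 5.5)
The plan is an energy-dissipation argument. We track $L(\theta(t))$ along the trajectory and compare its rate of decrease with the threshold that defines $\Zmin$ in \Cref{table:zoology:minority:zone}.

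\emph{Step 1 (dissipation identity).} Take $v\in\ngradnorm{\|\cdot\|}{L(\theta)}$. By definition $\mathrm{d}L(\theta)[v]=\|\mathrm{d}L(\theta)\|_\ast$. Since $L$ is $C^2$ and $\theta$ is absolutely continuous, the chain rule gives, for a.e.\ $t$,
\[
\tfrac{d}{dt}L(\theta(t))=-\|\mathrm{d}L(\theta(t))\|_\ast .
\]
This holds for whatever selection $\dot\theta(t)$ takes, because every element of the argmax attains the same value.

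\emph{Step 2 (outside the zone).} For $t<\timeentry(\theta)$ we have $\theta(t)\notin\Zmin$. If $\theta(t)$ is not critical, the first row of \Cref{table:zoology:minority:zone} (equivalently \Cref{thm-calQ}, with $\|v\|_2\ge m_{\|\cdot\|}(\theta)$ taken in the sup) gives
\[
\|\mathrm{d}L(\theta(t))\|_\ast\ \ge\ \psi\, m_{\|\cdot\|}(\theta(t))\,(\|\theta(t)\|_2+D).
\]
Critical points need a separate check. There $V_L\ni 0$, so $\Qrob\ge 0$ and the point lies outside $\Zmin$. But the trajectory cannot reach a critical point in finite time before entry, since the ODE would then stall, and the bound remains compatible. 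I expect to handle this by noting that if $\nabla L(\theta(t))=0$, the inequality above can fail only if $m>0$. To be safe, I will restrict to times where the gradient is nonzero, or argue that $\nabla L(\thetaInit)\ne0$ and that reaching a critical point makes the right-hand integral still bounded, because $m$ at such a point is $\inf$ over a ball and equals $0$. This is exactly the main obstacle.

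\emph{Step 3 (first bound).} Integrate from $0$ to $T<\timeentry$ and use $\|\theta\|_2+D\ge D$:
\[
L(\thetaInit)-L_\star\ \ge\ \int_0^T\|\mathrm{d}L\|_\ast\,ds\ \ge\ \psi D\int_0^T m(\theta(s))\,ds .
\]
Let $T\uparrow\timeentry$. If $\timeentry=\infty$ the claim is vacuous or interpreted accordingly; otherwise the integral equals $\timeentry\,\bar M_\theta$, which yields the first bound.

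\emph{Step 4 (second bound).} Now use \Cref{assumption-l-smooth} in the form $\|\theta\|_2+D\ge\sqrt{(L(\theta)-L_\star)/\lambda_{\max}}$. Setting $E(t)=L(\theta(t))-L_\star$, Step 2 gives
\[
\dot E\le-\frac{\psi}{\sqrt{\lambda_{\max}}}\,m(\theta(t))\sqrt{E}
\quad\text{a.e. on }[0,\timeentry).
\]
Hence $\frac{d}{dt}\,2\sqrt{E}\le-\psi m/\sqrt{\lambda_{\max}}$ wherever $E>0$; if $E$ hits $0$, we are at a minimizer and can stop there. Integrating yields
\[
\psi\,\timeentry\,\bar M_\theta/\sqrt{\lambda_{\max}}\le 2\sqrt{E(0)},
\]
which is the claim.
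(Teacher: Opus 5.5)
Your dissipation argument (Steps 1, 3, 4) is the paper's argument. Assuming $0<\timeentry(\theta)<\infty$, it correctly gives $\int_0^{\timeentry} m_{\|\cdot\|}(\theta(s))\,ds\le (L(\thetaInit)-L_\star)/(\psi D)$ and its square-root analogue.

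\textbf{The gap: finiteness of $\timeentry(\theta)$.} You dismiss the case $\timeentry=\infty$ as ``vacuous or interpreted accordingly''. It is not vacuous. The theorem is an upper bound on the entry time, and the hypothesis $\nabla L(\thetaInit)\neq0$ is there precisely to guarantee $\timeentry<\infty$. If $\thetaInit$ is critical, the constant trajectory is admissible and never enters $\Zmin$. Your proof never actually uses this hypothesis, which is the symptom of the missing step. The step goes as follows.
\begin{itemize}
\item The map $t\mapsto\|\nabla L(\theta(t))\|_2$ is continuous and positive at $t=0$.
\item By \Cref{cor-non-emptyness-robust-zones}, every $\theta$ with $0<\|\nabla L(\theta)\|_2<\psi(\|\theta\|_2+D)$ lies in $\Zmin$. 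So before entry the gradient norm cannot reach $0$ without first passing through $(0,\psi D)$.
\item Hence there is no critical point on $[0,\timeentry)$, and \Cref{lem-lower-bound-m} gives $m_{\|\cdot\|}(\theta(t))\ge c_{\|\cdot\|}>0$ there.
\item Your Step 3 inequality then becomes $\psi D\,c_{\|\cdot\|}\,t\le L(\thetaInit)-L_\star$ for all $t<\timeentry$, which forces $\timeentry<\infty$.
\end{itemize}
The same fact gives $E>0$ on $[0,\timeentry)$, so the ``if $E$ hits $0$'' case in Step 4 never occurs.

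\textbf{Step 2 misidentifies the obstacle and gives a wrong reason.} Critical points pose no problem for the pre-entry inequality. At such a point $\ngradnorm{\|\cdot\|}{L(\theta)}$ is the whole unit ball, so $m_{\|\cdot\|}(\theta)=0$ and $\|\mathrm{d}L(\theta)\|_\ast\ge\psi\, m_{\|\cdot\|}(\theta)(\|\theta\|_2+D)$ reads $0\ge 0$. Indeed, \Cref{cor-robust-zone-norm-gradient-fields} gives $\Qrob(\theta)=\|\mathrm{d}L(\theta)\|_\ast-\psi(\|\theta\|_2+D)\,m_{\|\cdot\|}(\theta)$ at every $\theta$.

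Your claim that the trajectory cannot reach a critical point ``since the ODE would then stall'' is false for normalized flows. They can reach critical points in finite time, and at such a point the inclusion permits any velocity in the unit ball. The correct reason is the continuity argument above. It is needed for finiteness and for $E>0$, not for the inequality.
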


\begin{theorem}[Entry time for norm-gradient]
\label{thm-gradnorm-entry-time-upper-1}
Under \Cref{ass-cost,ass-VL}, let \(\theta\) solve
\[
\dot\theta(t)\in-\gradnorm{\|\cdot\|}{L(\theta(t))},
\qquad
\theta(0)=\thetaInit.
\]
If $\nabla L(\thetaInit)\neq 0$, one has 
$ \timeentry(\theta)
\leq
\frac{L(\thetaInit)-L_\star}{\psi^2 D^2 \bar M_\theta^2 }.
$
If \Cref{assumption-l-smooth} also holds, and \(\timeentry(\theta)>0\), then
\[
\timeentry(\theta)
\leq
\frac{\lambda_{\max}}{\psi^2\bar M_\theta^2}
\begin{cases} 
\frac{L(\thetaInit)-L_\star}{\lambda_{\max}D^2}
& ~ ~ \text{if}  ~ ~
\frac{L(\thetaInit)-L_\star}{\lambda_{\max}D^2} \le e
\\
e
\log \left(  \frac{L(\thetaInit)-L_\star}{\lambda_{\max}D^2} \right)
& ~ ~ \text{if}  ~ ~
\frac{L(\thetaInit)-L_\star}{\lambda_{\max}D^2} > e
\end{cases}.
\]
\end{theorem}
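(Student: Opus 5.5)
The plan is an energy-dissipation argument: I will integrate the decrease of $u(t):=L(\theta(t))-L_\star$ up to $\timeentry(\theta)$, and use the fact that before entry the trajectory lies outside $\Zmin$, which forces the dual gradient norm to be large. Throughout, $\timeentry$ is assumed finite and positive, since $\bar M_\theta$ is only defined in that case.

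\textbf{Step 1 (dissipation identity).} Since $\theta$ is absolutely continuous and $L$ is $C^1$, the chain rule gives, for a.e.\ $t$,
\[
\dot u(t)=\mathrm{d}L(\theta(t))[\dot\theta(t)].
\]
Here $\dot\theta(t)=-v$ with $v\in\gradnorm{\|\cdot\|}{L(\theta(t))}$. By \Eqref{eq:l_op}, $\mathrm{d}L(\theta)[v]=\|\mathrm{d}L(\theta)\|_\ast^2$, so $\dot u(t)=-\|\mathrm{d}L(\theta(t))\|_\ast^2$ a.e.

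\textbf{Step 2 (lower bound outside $\Zmin$).} For $s<\timeentry$ we have $\theta(s)\notin\Zmin$. By the second row of \Cref{table:zoology:minority:zone} (which follows from \Cref{thm-calQ}), at a non-critical point this means
\[
\|\mathrm{d}L(\theta)\|_\ast\ \ge\ \psi\, m_{\|\cdot\|}(\theta)\,(\|\theta\|_2+D).
\]
Every $v\in\gradnorm{\|\cdot\|}{L(\theta)}$ equals $\|\mathrm{d}L(\theta)\|_\ast$ times an element of $\ngradnorm{\|\cdot\|}{L(\theta)}$. Dividing the $\Zmin$-condition of the table by $\|\mathrm{d}L\|_\ast>0$ therefore gives exactly the displayed inequality.

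At a critical point, $\ngradnorm{\|\cdot\|}{L(\theta)}$ is the whole unit ball, so $m_{\|\cdot\|}(\theta)=0$ and the inequality holds trivially.

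Hence, for a.e.\ $s<\timeentry$,
\[
\dot u(s)\le -\psi^2 m(s)^2(\|\theta(s)\|_2+D)^2,
\qquad m(s):=m_{\|\cdot\|}(\theta(s)).
\]
Finally, Cauchy--Schwarz gives $\int_0^{\timeentry} m^2\ge \timeentry\,\bar M_\theta^2$. This is how the average $\bar M_\theta$ enters.

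\textbf{Step 3 (first bound).} Use $(\|\theta\|_2+D)^2\ge D^2$ and $u\ge 0$. Integrating over $[0,\timeentry]$ gives
\[
L(\thetaInit)-L_\star\ \ge\ \psi^2D^2\int_0^{\timeentry}m^2\ \ge\ \psi^2D^2\,\timeentry\,\bar M_\theta^2,
\]
which is the first claim.

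\textbf{Step 4 (refined bound under \Cref{assumption-l-smooth}).} The assumption gives $(\|\theta\|_2+D)^2\ge \max(D^2,u/\lambda_{\max})$, hence a.e.
\[
\dot u\le -\psi^2 m^2\max\bigl(D^2,\,u/\lambda_{\max}\bigr).
\]
Set $r:=u(0)/(\lambda_{\max}D^2)$.

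\emph{Phase 1.} On the interval where $u>\lambda_{\max}D^2$ (nonempty only if $r>1$), $\log u$ is absolutely continuous and satisfies $\frac{d}{dt}\log u\le -\psi^2m^2/\lambda_{\max}$. Integrating, the $m^2$-mass accumulated in this phase is at most $\frac{\lambda_{\max}}{\psi^2}\log r$.

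\emph{Phase 2.} Afterwards $u\le\lambda_{\max}D^2$, and $\dot u\le-\psi^2D^2m^2$ gives $m^2$-mass at most $\lambda_{\max}/\psi^2$. If $r\le1$, only this phase occurs, with mass at most $\lambda_{\max}r/\psi^2$.

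\emph{Combining.} Since $u$ is nonincreasing, the phases occur in this order. This gives
\[
\timeentry\,\bar M_\theta^2\le \int_0^{\timeentry}m^2\le \frac{\lambda_{\max}}{\psi^2}\cdot
\begin{cases} r & \text{if } r\le 1,\\ 1+\log r & \text{if } r>1.\end{cases}
\]
This reduces to the stated cases by two elementary inequalities. For $1<r\le e$, we have $1+\log r\le r$. For $r>e$, we have $1+\log r\le e\log r$, because $(e-1)\log r>e-1>1$.

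\textbf{Main obstacle.} The main care is in the phase decomposition: justifying the differential inequality for $\log u$ only a.e.\ along an absolutely continuous trajectory, and arguing the phases cleanly via the monotonicity of $u$. I would do this with integrated (Gr\"onwall-type) forms rather than pointwise derivatives. The possibility of reaching critical points before entry also needs attention; it is handled by the observation that $m=0$ there.
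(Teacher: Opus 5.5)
Your proof is correct. Steps 1--3 follow the paper's argument for the first bound: the dissipation identity, the pre-entry inequality $\|\mathrm{d}L(\theta)\|_\ast\ge\psi D\,m_{\|\cdot\|}(\theta)$, and Jensen to pass from $\int_0^{\timeentry}m^2$ to $\timeentry\bar M_\theta^2$.

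For the refined bound you take a different route. The paper fixes one exponent $\beta\in(0,2]$ and splits $\|\mathrm{d}L\|_\ast^2=\|\mathrm{d}L\|_\ast^{2-\beta}\|\mathrm{d}L\|_\ast^{\beta}$. It bounds the two factors below by $\psi m\sqrt{G/\lambda_{\max}}$ and $\psi Dm$, and integrates $\frac{d}{dt}G^{\beta/2}$ to get $\int_0^{\timeentry}m^2\le\frac{\lambda_{\max}}{\psi^2}\cdot\frac{2C^{\beta/2}}{\beta}$ with $C=r$. Minimizing over $\beta$ then produces the two regimes $C$ and $e\log C$.

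You instead keep the pointwise maximum $\max(D^2,u/\lambda_{\max})$ rather than a weighted geometric mean of the two quantities, and you split time at the level $u=\lambda_{\max}D^2$. This is exactly the comparison integral $\lambda_{\max}\int_0^{u(0)}du/\max(\lambda_{\max}D^2,u)$. It gives the factor $1+\log r$ for $r>1$ (and $r$ for $r\le 1$). That factor is never larger than the paper's piecewise factor in either regime, and the paper's bound exceeds yours by a factor tending to $e$ as $r\to\infty$. The paper's version gives up this constant in exchange for a single integration with no phase bookkeeping. Yours needs the monotonicity of $u$ to order the phases, but in the second phase it does not need $u>0$.

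One thing the paper proves and you only assume is $\timeentry<\infty$. This is needed for $\bar M_\theta$ to be defined, and the paper derives it as part of the first claim. As written, your argument cannot supply it: you allow $m_{\|\cdot\|}=0$ at critical points, and then a bounded $\int_0^t m^2$ does not force $t$ to stay bounded.

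The missing observation is that critical points cannot be reached before entry. Suppose $\nabla L(\thetaInit)\neq0$ and $\nabla L$ first vanished at some $t_0$. Continuity of $t\mapsto\|\nabla L(\theta(t))\|_2$ would give $t'<t_0$ with $0<\|\nabla L(\theta(t'))\|_2<\psi D$, hence $\theta(t')\in\Zmin$ by \Cref{cor-non-emptyness-robust-zones}. So on $[0,\timeentry)$ one has $m_{\|\cdot\|}\ge c_{\|\cdot\|}>0$ by \Cref{lem-lower-bound-m}. Your Step 3 inequality then yields $t\,c_{\|\cdot\|}^2\le(L(\thetaInit)-L_\star)/(\psi^2D^2)$ for every $t<\timeentry$, which is the finiteness. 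Your handling of critical points via $m=0$ is therefore harmless but not needed.
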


By applying \Cref{thm-ngrad-entry-time-upper-1,thm-gradnorm-entry-time-upper-1} for the specific optimization dynamics introduced in \Cref{subsection:learning:with:descent}, we derive various upper bounds of entry time presented in \Cref{table-entry-time} (see details in Sections \ref{Proof-entry-time-1}, \ref{sec:appendix-entry-time-clipped}, \ref{sec:appendix-entry-time-newton} and \ref{sec:appendix-entry-time-applications}). 
\begin{table}[H]
    \centering
    \begin{tabular}{lll}
        \toprule
         \textbf{Dynamics} &  \textbf{Entry times in general}  &  \textbf{Entry times with}  \Cref{assumption-l-smooth}  \\
        \midrule
         Gradient descent & $\timeentry^{\rm GD} = \mathcal{O}  \left(  \frac{L(\thetaInit)-L_\star}{\psi^2 D^2} \right) (\star) $ & 
$\timeentry^{\rm GD} = \mathcal{O} \left(\frac{\lambda_{\max}}{\psi^2}
\log (  \frac{L(\thetaInit)-L_\star}{\lambda_{\max}D^2} ) \right)$  \\
        Gradient clipping & $\timeentry^{\rm GC} = \mathcal{O} \left( \frac{L(\thetaInit)-L_\star }{\psi D \min \{\tau_{\rm clip},\psi D \} }\right) $  & 
$\timeentry^{\rm GC}
= \mathcal{O} 
\left( \frac{\sqrt{\lambda_{\max} (L(\thetaInit)-L_\star)}}{\psi \min \{\tau_{\rm clip},\psi D \}} \right)
$\\
        Sign descent & $\timeentry^{\rm SD} = \mathcal{O}  \left(
\frac{L(\thetaInit)-L_\star }{\psi D \bar M_{\rm SD}}
\right) (\star)$  & 
$\timeentry^{\rm SD}
= \mathcal{O} \left( 
\frac{\sqrt{\lambda_{\max} (L(\thetaInit)-L_\star)}}{\psi\bar M_{\rm SD}} \right)$\\
        Spectral descent & $\timeentry^{\rm Spec} = \mathcal{O}   \left(
\frac{L(\Theta_{\rm init})-L_\star}{\psi D \bar R} \right) (\star)$  & 
$\timeentry^{\rm Spec}
= \mathcal{O} \left( 
\frac{\sqrt{\lambda_{\max} (L(\Theta_{\rm init})-L_\star)}}{\psi\bar R} \right) $ \\
        Newton descent & $ \timeentry^{\rm Newton} = \mathcal{O}\left( \log ( \frac{\| \nabla L(\thetaInit) \|_2 }{\psi D } )\right)$  & 
$ \timeentry^{\rm Newton} = \mathcal{O} \left( \log ( \frac{\| \nabla L(\thetaInit) \|_2 }{\psi D } )\right)$\\
        \bottomrule
    \end{tabular}
    \caption{\footnotesize Zoology of entry times for different dynamics (see also \Cref{table-reference-entry-time} in \Cref{sec-appendix-c-3}). 
    The term $\bar M_{\rm SD}$ denotes the time-average of square root of the support size of $\nabla L$ prior to the entry time, and  $\bar R$ denotes the time-average of square root of rank of $\nabla L$ over the same period. For the entry times in $ (\star)$, we also provide lower bounds, that is, examples of $L$ matching the upper bounds in \Cref{sec-lower-bounds}.}
    \label{table-entry-time}
\end{table}
\begin{remark}
The relevance of entry times regarding the optimization of the actual geometries is also discussed in a particular case in \Cref{derdeder}.
\end{remark}

Within our continuous-time idealization of the small-step regime,
\Cref{table-entry-time} highlights two ways in which optimizer
geometry can facilitate departure from majority dominance:\\
--- The entry-time bounds
for sign and spectral descent scale as $\mathcal{O}(\psi^{-1})$,
rather than $\mathcal{O}(\psi^{-2})$ for gradient descent.
This suggests that normalization can mitigate the delay associated
with increasingly weak minority components. Newton descent exhibits very weak 
dependence, of order $\log(1/\psi)$. 
In conclusion, the more severe the imbalance, the larger the advantage of normalized geometries of the optimizers.
\\
--- The factors $\bar M_{\rm SD}$ and $\bar R$ may further tighten
the sign and spectral descent bounds when trajectories maintain
large gradient support or rank before entry.
We note that comparing upper bounds does not establish that sign or spectral descent always enters earlier than GD on the same problem.  

As a conclusion, when imbalance is suspected but group labels are unavailable ---hidden subclasses, latent data, rare patterns--- reweighting and resampling cannot be applied directly. Our results identify optimizer choice as a practical response: normalized and spectral geometries yield minority-awareness time bounds with a milder dependence on minority amplitude than Euclidean gradient descent. Together with the experiments in \Cref{section:numerical}, these findings support favoring rescaled methods as for instance AdamW or Muon for training under imbalance without explicit knowledge of minority groups.

\section{Numerical Experiments}
\label{section:numerical}

We first provide a sanity check of our $\timeentry$ bounds on quadratic problems. We then test whether normalized gradient methods yield more balanced outcomes  as our theory suggests. In practical deep learning settings, theoretical quantities like  $\psi$ and $\timeentry$ are intractable,  we thus measure empirical accuracy and loss to assess performances with respect to imbalance.\\
\textbf{Entry times through linear regression.}
In this experiment, we seek empirical scaling relations linking
entry times to minority amplitude $\psi$ and dimension $d$.
Using the Superconductor and YearPredictionMSD datasets, we apply
principal component analysis (PCA) to construct a coherent family
of linear-regression problems across dimensions. Within each
problem, we vary the minority-group size, yielding different
amplitudes $\psi := \sigma_{\max}(\nabla^2 L_0)$.
We then measure the entry times of gradient descent (GD) and sign
descent (SD), and fit their ratio as a function of $\psi$ and $d$
using log--log regression. This yields the following empirical
scaling relations for Superconductor (left) and YearPredictionMSD
(right):
\[
\timeentry(\mathrm{GD})/\timeentry(\mathrm{SD})
\propto d^{0.65}/\psi^{0.90},
\qquad
\timeentry(\mathrm{GD})/\timeentry(\mathrm{SD})
\propto d^{0.51}/\psi^{0.37}.
\]
These estimates show that the entry time of gradient descent effectively grows faster than that of Sign Descent in high-dimensional ($d>>1$) and highly imbalanced ($\psi<<1$) regimes. Detailed experimental protocols are provided in \Cref{sec-appendix-linear-regression}.\\
\textbf{Hidden subclass classification - Forest Covertype.} We predict forest cover type from $54$ cartographic variables ($\sim$$581,000$ samples). Following \cite{svmtorch}, we use balanced binary supervision (Lodgepole Pine vs.\ all other classes). In this formulation, the \textit{Aspen} class forms the hidden minority, comprising just $1.6\%$ of the dataset. Our results demonstrate that AdamW's early learning advantage over SGD is significantly magnified for this minority class ($63.93\%$ vs. $51.40\%$) compared to the majority class ($90.99\%$ vs $88.25\%$). This phenomenon also occurs below. \\
\textbf{Catch-up time with CIFAR-10.} To facilitate direct comparison with prior work, we replicate the experimental setup of \cite{bachoc2025majorityrulesminorityloses}, wherein the \textit{car} class is subsampled to form a minority group. 

\begin{table}[h]
\centering
\begin{tabular}{lccc}
\toprule
\textbf{Metrics} &  AdamW & SGD & Muon  \\
\midrule 
Average $L_0$ accuracy &  $91.82\%$ & $86.04\%$ & $96.48\%$  \\
Average $L_1$ accuracy &  $95.45\%$ & $92.92\%$ & $98.39\%$  \\
\bottomrule
\end{tabular}
\caption{Average accuracy over iterations $1-30$. This evaluation window was chosen because both $L_0$ and $L_1$ accuracies reach $99\%$ after iteration 40 in our simulations.}
\end{table}

\textbf{Language modelling - SmallGPT2 on WikiText-103.} Adopting the setup of \citet{kunstner2024heavytailed}, we train a SmallGPT-2 model from scratch with AdamW, SGD, and Muon, tracking both majority and minority (the $5\%$ rarest tokens) losses (\Cref{fig:smallgpt2}). 

While the majority loss steadily decreases across all optimizers, the minority loss trajectories diverge significantly. The AdamW and Muon minority losses initially spike before decreasing, whereas the SGD minority loss strictly increases throughout the experiment. This behavior corroborates our theoretical finding that standard gradient dynamics inherently delay optimization of the minority signal $L_0$. The setting of the experiments and detailed results are given in \Cref{tab-llm-hyperparameters}.

\begin{figure}[h]
    \centering
    \includegraphics[width=0.4\linewidth]{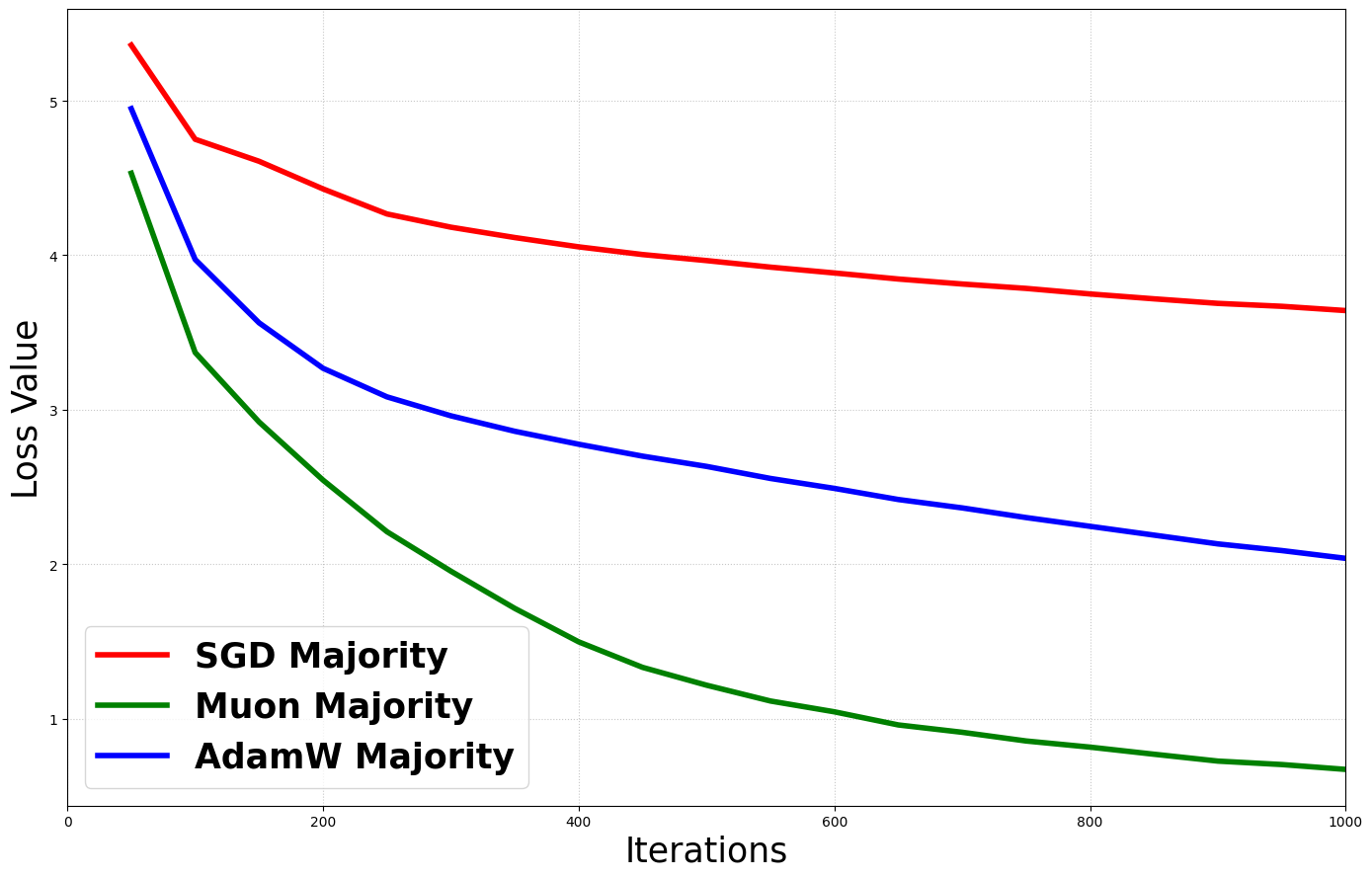}
    \includegraphics[width=0.4\linewidth]{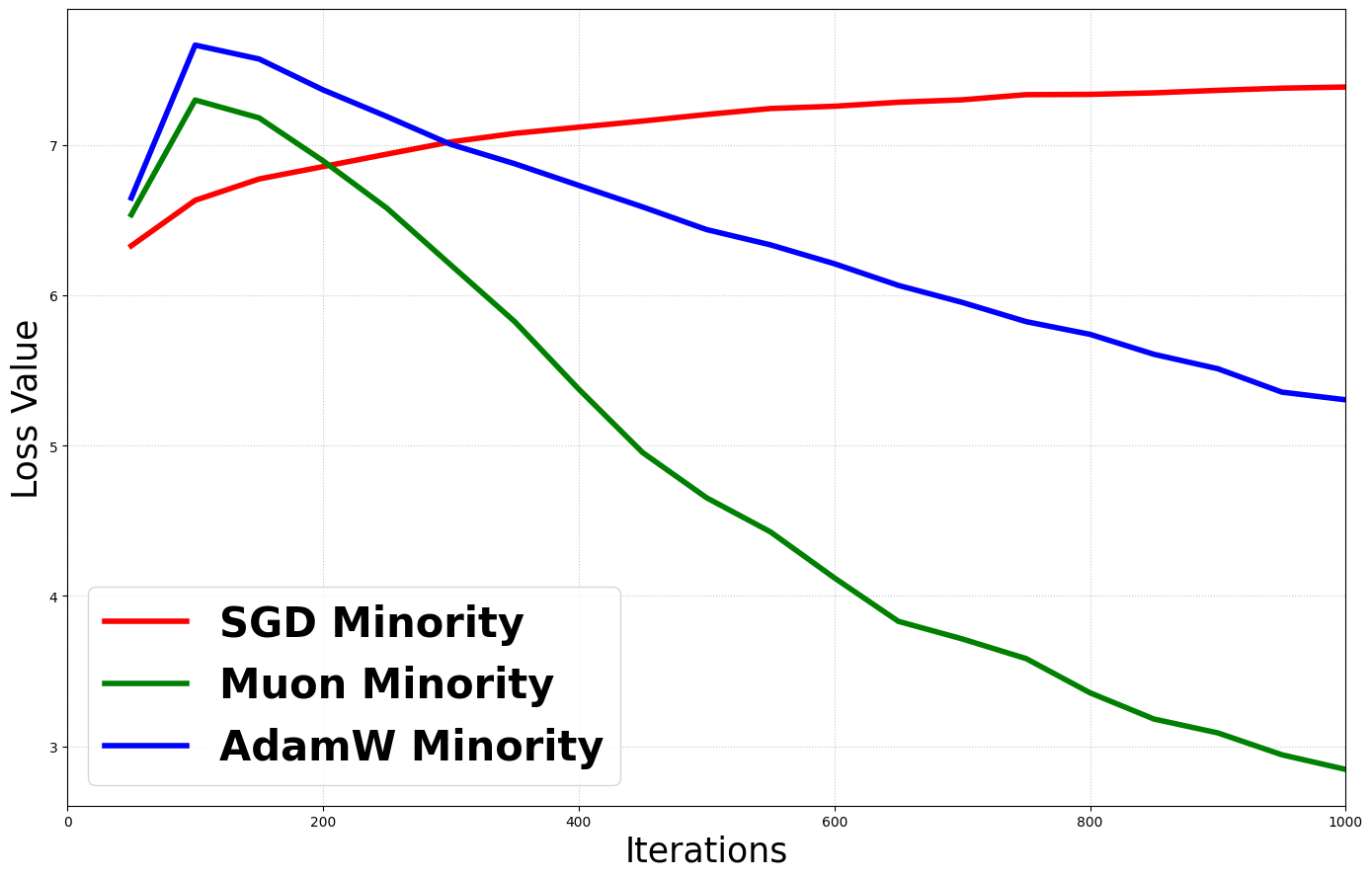}
    \caption{Evolution of training loss of SmallGPT2 from scratch on WikiText-103.}
    \label{fig:smallgpt2}
\end{figure}

\newpage
\subsection*{AI use statement}
In this work, we used generative AI tools to formulate mathematical claims, to assist in the writing of proofs, to provide critical ingredients for proving mathematical claims and to assist with translation. We have not used generative AI tools to help develop theoretical models or conceptual frameworks, propose or refine hypotheses, design or provide feedback on research  methodology or experiments, implement methods or interpret results. Generating synthetic data sets, cleaning and reformatting datasets to support qualitative and thematic data analysis is not applicable to this work.
Additionally, we used generative AI tools for transcribe recording of research material, summarizing or analyzing existing literature and editing the paper to improve readability. We have reviewed all AI-assisted work. LLM-generated code was verified and tested for correctness. We take responsibility for the final content of this work, including text, claims or artifacts produced with the aid of generative AI.

\subsection*{Funding}

This work was supported by the ANR project Regul IA and by the
Chairs TRIAL and UQPhysAI of the Toulouse ANITI AI Cluster.


\appendix

\section{Auxiliary results and comments}

\subsection{General comments}
We summarize here several key aspects of our approach, with the aim of delineating  what has been established and what remains to be investigated in a few points.

--- We introduce a majority--minority model under limited prior knowledge. Within this framework, it is difficult to guarantee that all minority components will eventually be learned, since this also depends on the initialization. Nevertheless, by detecting the onset of minority learning, we obtain an informative indicator of when minority components begin to emerge. Our key contributions are an equation for a minority awareness zone for most optimizers and estimates for entry times.

  This reconnects with the work of \cite{bachoc2025majorityrulesminorityloses} on the origin of bias of gradient methods and dominance of a majority learning region.

--- We work with idealized dynamics rather than genuine optimizers. The purpose is to retain the core vector fields governing the dynamics while stripping away secondary algorithmic effects.

--- We restrict attention to a class of functions arising from deep-learning losses, for which the relevant step sizes are small but vary over a moderate range, and for which the amplitude factor $\kappa$ introduced in \Cref{section:intro} remains modest. Thus, our analysis is intended to capture a regime that is simplified but still representative of the scales encountered in {\em deep learning} problems.

--- All else being equal, entry times are comparable across the dynamics we consider. They therefore provide an indication, for a given initial condition, of when majority dominance begins to cease.

--- Among the scaled optimizers considered, three clearly stand out: Adam, Muon and Newton's method. This suggests a promising direction, but the phenomenon remains to be understood more systematically.

Many aspects remain to be investigated. In particular:

--- extending the analysis from idealized dynamics to genuine optimizers;

--- developing more precise markers for minority detection,

--- understanding more systematically why certain scaled optimizers exhibit markedly different minority-learning behavior.

\subsection{Results on differential inclusions}

\begin{lemma}[Aubin--Cellina local existence theorem]
\label{Th:aubincell}
Let \(\Gamma:\mathbb{R}^d\rightrightarrows\mathbb{R}^d\) be a set-valued map
with nonempty compact convex values. Assume that \(\Gamma\) has a closed graph and is locally bounded. Then, for every \(\thetaInit\in\mathbb{R}^d\),
there exist \(T_{\max}>0\) and an absolutely continuous map
\(\theta:[0,T_{\max})\to\mathbb{R}^d\) such that
\[
\theta(0)=\thetaInit,
\qquad
\dot\theta(t)\in-\Gamma(\theta(t))
\quad\text{for a.e. }t\in[0,T_{\max}).
\]
This is a standard consequence of the existence theory for differential
inclusions; see, for instance, \cite[Chapter~2, Section~1]{aubin1984differential}.
\end{lemma}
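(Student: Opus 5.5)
This statement is the classical local existence theorem for upper semicontinuous convex-valued differential inclusions, so the plan is to reproduce the standard Aubin--Cellina argument. We approximate solutions, extract a uniformly convergent subsequence, and pass to the limit in the inclusion using convexity and the closed graph. Replacing \(\Gamma\) by \(-\Gamma\) preserves every hypothesis, so we work with \(\dot\theta\in F(\theta)\), where \(F:=-\Gamma\).

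\textbf{Step 1 (a priori bounds).} Fix \(r>0\). Local boundedness gives \(M:=\sup\{\|v\|_2: v\in F(\theta),\ \|\theta-\thetaInit\|_2\le r\}<\infty\); if this set is only locally bounded, we cover the compact ball by finitely many neighbourhoods. Set \(T:=r/(M+1)\). Every curve whose velocity is bounded by \(M\) then stays in the ball \(\bar B(\thetaInit,r)\) on \([0,T]\).

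\textbf{Step 2 (Euler polygons).} For \(h=T/N\), we define piecewise-linear curves \(\theta_N\) with \(\theta_N(0)=\thetaInit\) and \(\dot\theta_N(t)=v_k\in F(\theta_N(kh))\) on \([kh,(k+1)h)\). By Step 1 these curves are well defined, stay in the ball, and are \(M\)-Lipschitz. By Arzel\`a--Ascoli, a subsequence converges uniformly to an \(M\)-Lipschitz curve \(\theta\). The derivatives \(\dot\theta_N\) are bounded in \(L^\infty\), hence bounded in \(L^2(0,T)\), so after extracting again they converge weakly in \(L^2\) to some \(w\). Passing to the limit in \(\theta_N(t)=\thetaInit+\int_0^t\dot\theta_N\) gives \(\dot\theta=w\) almost everywhere.

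\textbf{Step 3 (limit inclusion, the main obstacle).} Set \(\delta_N(t):=\|\theta_N(t)-\theta_N(\lfloor t/h\rfloor h)\|_2\le Mh\to0\). Then \((\theta_N(\lfloor t/h\rfloor h),\dot\theta_N(t))\) lies in the graph of \(F\) at a point within \(\epsilon_N\to0\) of \(\theta(t)\), uniformly in \(t\).

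We now use the closed graph together with local boundedness, which on a compact set amounts to upper semicontinuity. For each \(t\) and \(\varepsilon>0\), this gives \(\dot\theta_N(t)\in F(\theta(t))+\varepsilon B\) for all large \(N\). Indeed, otherwise a subsequence of velocities would stay at distance at least \(\varepsilon\) from \(F(\theta(t))\). These velocities are bounded, so they have a cluster point, which lies in \(F(\theta(t))\) by closedness of the graph — a contradiction.

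By Mazur's lemma, \(w\) is a strong \(L^2\) limit of convex combinations of the tails \(\{\dot\theta_N\}_{N\ge n}\). A further subsequence of these combinations converges almost everywhere. Since \(F(\theta(t))+\varepsilon B\) is convex, the convex combinations stay in it, and closedness of the set passes this to the pointwise limit, giving \(w(t)\in F(\theta(t))+\varepsilon B\) for a.e. \(t\). Letting \(\varepsilon\downarrow0\) along a countable sequence and using that \(F(\theta(t))\) is compact gives \(\dot\theta(t)\in F(\theta(t))\) almost everywhere.

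Taking \(T_{\max}:=T\) concludes the proof, with the statement understood on \([0,T_{\max})\). Alternatively, the proof can simply cite \cite[Chapter~2, Section~1, Theorem~3]{aubin1984differential}, after checking that closed graph plus local boundedness with compact values is exactly the upper semicontinuity hypothesis used there.
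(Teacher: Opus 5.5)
Your proposal is correct. The paper gives no proof of its own and simply cites Aubin--Cellina (Chapter~2, Section~1), whose argument is exactly the scheme you reproduce in full: approximate (Euler) solutions with a priori bounds from local boundedness on a compact ball, Arzel\`a--Ascoli together with weak $L^2$ compactness of the derivatives, and a Mazur-lemma convergence step that uses convexity of the values and the closed graph.
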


\begin{corollary}[Global existence for norm-gradient systems]
\label{cor:global-existence-norm-gradient}
Under \Cref{ass-cost}, consider any of the following dynamical systems presented in \Cref{subsection:learning:with:descent}:
gradient descent, gradient clipping, sign descent, spectral descent and Newton descent (in this last case, assume in addition that \Eqref{ass-newton} holds). 
Then:

(i) any of this dynamical system has at least one solution, 

(ii) every maximal absolutely continuous solution of any of the above is global, that is,
\[
T_{\max}=+\infty.
\]
\end{corollary}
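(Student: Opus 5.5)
Part (i) follows from \Cref{Th:aubincell}: for each of the five dynamics we check \Cref{ass-VL} for the corresponding field $V_L$ and then invoke the Aubin--Cellina theorem.

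For gradient descent and clipping, $V_L(\theta)=\nabla L(\theta)$ and $V_L(\theta)=\Clip_{\tau_{\rm clip}}(\nabla L(\theta))$ are single-valued and continuous. Here we use that $L$ is $C^2$ by \Cref{ass-cost} and that $\Clip_{\tau_{\rm clip}}$ is the (continuous) Euclidean projection onto the ball of radius $\tau_{\rm clip}$. A continuous single-valued map trivially has closed graph, compact convex values and is locally bounded.

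For Newton descent, \Eqref{ass-newton} makes $\nabla^2 L(\theta)$ invertible with $\|\nabla^2L(\theta)^{-1}\|\le 1/\mu$. Since inversion is continuous on invertible matrices, $\theta\mapsto \nabla^2L(\theta)^{-1}\nabla L(\theta)$ is continuous.

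For sign and spectral descent, $V_L(\theta)=\ngradnorm{\|\cdot\|}{L(\theta)}=\argmax_{\|z\|\le 1}\langle \nabla L(\theta),z\rangle$ is the face of the unit ball of $\|\cdot\|$ exposed by $\nabla L(\theta)$. It is nonempty, convex and compact as a maximizer set of a linear form over a compact convex set. It is uniformly bounded since it lies in the unit ball. Its graph is closed by Berge's maximum theorem: if $\theta_k\to\theta$, $z_k\to z$ with $z_k$ maximizing, then $\|z\|\le1$ and passing to the limit in $\langle\nabla L(\theta_k),z_k\rangle=\|\nabla L(\theta_k)\|_\ast$ uses continuity of the dual norm. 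The same reasoning covers $\gradnorm{\|\cdot\|}{L}$, which is just the scaling $\|\mathrm dL\|_\ast\cdot\ngradnorm{\|\cdot\|}{L}$. This gives local solutions.

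Part (ii) will follow from the standard continuation argument. A maximal solution with $T_{\max}<\infty$ must leave every compact set as $t\to T_{\max}$; otherwise local boundedness of $V_L$ makes $\dot\theta$ bounded, so $\theta(t)$ has a limit at $T_{\max}$ and the solution can be extended by restarting \Cref{Th:aubincell} there. It therefore suffices to show that $\|\theta(t)\|_2$ stays bounded on bounded time intervals.

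\begin{itemize}
\item Sign descent, spectral descent and clipping: the velocity is uniformly bounded (by $\sqrt d$, by a Frobenius bound $\sqrt{\min(a,b)}$, and by $\tau_{\rm clip}$ respectively). Hence $\|\theta(t)-\thetaInit\|_2\le Ct$.
\item Gradient descent: the descent identity $\frac{d}{dt}L(\theta(t))=-\|\nabla L(\theta(t))\|_2^2$ together with $L\ge L_\star$ gives $\int_0^t\|\dot\theta\|_2^2\le L(\thetaInit)-L_\star$. By Cauchy--Schwarz, $\|\theta(t)-\thetaInit\|_2\le\sqrt{t\,(L(\thetaInit)-L_\star)}$.
\item Newton descent: the derivative is $\frac{d}{dt}L(\theta)=-\langle \nabla L,(\nabla^2L)^{-1}\nabla L\rangle$. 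Since $L$ is $\mu$-strongly convex, $\nabla^2L(\theta)^{-1}\preceq \mu^{-1}I$, hence $\langle \nabla L,(\nabla^2L)^{-1}\nabla L\rangle\ge \mu\|(\nabla^2L)^{-1}\nabla L\|_2^2$ (this is the bound $\langle w,\nabla^2L\,w\rangle\ge\mu\|w\|^2$ with $w=(\nabla^2L)^{-1}\nabla L$). Integrating gives $\mu\int_0^t\|\dot\theta\|_2^2\le L(\thetaInit)-L_\star$, and Cauchy--Schwarz concludes as for gradient descent.
\end{itemize}

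The main obstacle is the chain rule for $L\circ\theta$ in the multivalued normalized cases. Fortunately it is not needed there, because boundedness of the field alone suffices. For the $\gradnorm{}{}$ variants one would use that $L\circ\theta$ is absolutely continuous with derivative $\langle\nabla L(\theta),\dot\theta\rangle$ (since $L$ is $C^1$ and $\theta$ is absolutely continuous), together with \Eqref{eq:l_op}.
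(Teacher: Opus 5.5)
Your proof is correct and follows essentially the paper's argument. Local existence comes from the Aubin--Cellina theorem once closed graph and local boundedness of each field are checked, and global existence comes from bounded velocities in the sign, spectral and clipped cases and from the energy estimate $\int_0^t\|\dot\theta\|_2^2\le C\,(L(\thetaInit)-L_\star)$ combined with Cauchy--Schwarz for gradient and Newton descent. The differences are cosmetic: the paper invokes Peano and Cauchy--Lipschitz for Newton and clipping and phrases continuation via the Cauchy property at $T_{\max}$. Your explicit Newton bound $\langle \nabla L,(\nabla^2L)^{-1}\nabla L\rangle\ge\mu\|(\nabla^2L)^{-1}\nabla L\|_2^2$ fills in the step the paper leaves as ``similar reasoning''.
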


\begin{proof}
{\sc Existence}. The multifunction
\[
\ell\mapsto
\arg\max_{\|z\|\leq 1}\ell[z]
\]
has nonempty compact convex values,
because it is the set of maximizers of a linear form over the compact convex
unit ball of \(\|\cdot\|\). Since \(\theta\mapsto \mathrm{d}L(\theta)\) is
continuous, both
\[
\theta\mapsto
\ngradnorm{\|\cdot\|}{L(\theta)}
\qquad\text{and}\qquad
\theta\mapsto
\gradnorm{\|\cdot\|}{L(\theta)}
\]
are upper semicontinuous and locally bounded. Hence \Cref{Th:aubincell}
provides a local absolutely continuous solution for both gradient and normalized gradient methods. Note that for Newton's method we may  apply instead Arzela existence theorem (without uniqueness) and for gradient clipping it is more natural to apply Cauchy-Lipschitz.

{\sc Maximal solutions}. We argue for gradient dynamics in an arbitrary norm $\|\cdot\|$, the other methods follow by similar arguments.

\medskip

Let \(\theta:[0,T_{\max})\to\mathbb{R}^d\) be a maximal solution. For a.e. \(t\),
write
\[
\dot\theta(t)
\in -\gradnorm{\|\cdot\|}{L(\theta(t))},\] so
\begin{equation}
 \label{eq-descent-grad}
\frac{d}{dt} L(\theta(t))
= \mathrm{d}L(\theta(t))[\dot \theta(t)]
=
-\|\mathrm{d}L(\theta(t))\|_{\ast}^2.
\end{equation}
As a consequence, for all $t\in [0,T_\max)$ 
$$\int_0^t\|\mathrm{d}L(\theta(s))\|_{\ast}^2
ds
\leq L(\thetaInit)-L_\star.$$
Thus if $T_\max$ was finite we would have a constant $C>0$ such that
$$\int_0^{T_\max}\|\dot \theta(t)\|_{2}^2
dt
\leq C \int_0^{T_\max}\|\mathrm{d}L(\theta(t))\|_{\ast}^2dt \leq 
C \left( L(\thetaInit)-L_\star \right).$$
By Cauchy-Schwarz inequality, for $0 \le s < t \le T_\max$,
$$\|\theta(t)-\theta(s)\|_2=\int_s^{t}
\|
\dot \theta(\tau)
\|_2
d\tau\leq \sqrt{|t-s|}\sqrt{\int_0^{t} \|\dot \theta(\tau)\|^2_2d\tau}\leq \sqrt{C |t-s|}\sqrt{L(\thetaInit)-L_\star}.$$ 
Hence we see that $\theta$ has a limit at $t=T_\max$ which contradicts the maximality of $T_\max$. 

A similar reasoning provides that $T_\max=+\infty$ for Newton's method or gradient clipping. In the normalized case, the proof is even more direct because the velocity is globally bounded and the Cauchy property derived above boils down to $$\|\theta(t)-\theta(s)\|\leq |t-s|$$
for all $t,s$ in $[0,T_\max)$.

In all cases uniqueness is not required.
\end{proof}

\paragraph{Absence of finite time convergence.}
While normalized gradient descent can lead in a finite time towards a critical point, this is not possible for the normal gradient descent despite multivaluedness.

\begin{proposition}[Absence of finite time convergence for the normal gradient descent]
    Under \Cref{ass-cost}, let $\theta$ be an absolutely continuous solution to 
    $$\dot \theta(t)\in-\gradnorm{\|\cdot\|}{L(\theta(t))}$$
with $dL(\thetaInit)\neq 0$. Then for all time $t>0$, $dL(\theta(t))\neq 0$.
\end{proposition}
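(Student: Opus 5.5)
We argue by contradiction and compare the decay of $\|\mathrm{d}L(\theta(t))\|_\ast^2$ with a linear differential inequality, in the spirit of a Grönwall argument. Set $T:=\inf\{t>0:\ \mathrm{d}L(\theta(t))=0\}$ and suppose $T<\infty$. By continuity of $\mathrm{d}L$ and $\theta$, we have $\mathrm{d}L(\theta(T))=0$, while $\mathrm{d}L(\theta(t))\neq 0$ on $[0,T)$ because $\mathrm{d}L(\thetaInit)\neq0$. Let $\theta^\star:=\theta(T)$. On the compact segment of trajectory $\theta([0,T])$, the Hessian is bounded by $C^2$-regularity of $L$ (\Cref{ass-cost}). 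Hence $\nabla L$ is $\Lambda$-Lipschitz on a convex neighbourhood $B$ of that set, for some $\Lambda>0$. By equivalence of norms, $\|\mathrm{d}L(\theta)\|_\ast\le c\,\|\nabla L(\theta)\|_2$ for some constant $c$.

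The key quantity is $g(t):=L(\theta(t))-L(\theta^\star)\ge0$ on $[0,T]$, which is nonincreasing by \Eqref{eq-descent-grad}. We would first prove a local Łojasiewicz-type bound with exponent one, namely $g(t)\le K\,\|\mathrm{d}L(\theta(t))\|_\ast^2$ near $T$. For this we would use a Taylor expansion of $L$ at $\theta(t)$ towards $\theta^\star$:
\[
L(\theta^\star)\ge L(\theta(t))-\|\nabla L(\theta(t))\|_2\,\|\theta(t)-\theta^\star\|_2-\tfrac{\Lambda}{2}\|\theta(t)-\theta^\star\|_2^2 .
\]
We also need to control $\|\theta(t)-\theta^\star\|_2$. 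Since $\|\dot\theta\|$ equals $\|\mathrm{d}L\|_\ast$ by \eqref{eq:l_op}, we get $\|\theta(t)-\theta^\star\|_2\le c'\int_t^T\|\mathrm{d}L(\theta(s))\|_\ast\,ds$, which involves future values of the gradient. This is where I expect the main obstacle. A Łojasiewicz inequality is not available for general $C^2$ functions, so I would avoid it and work directly with the gradient.

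The cleaner route is to differentiate along the flow. Set $h(t):=\nabla L(\theta(t))$. Then $h$ is absolutely continuous with $\dot h=\nabla^2L(\theta)\dot\theta$. Hence, for a.e.\ $t$,
\[
\|\dot h(t)\|_2\le \Lambda\|\dot\theta(t)\|_2\le \Lambda c''\|\dot\theta(t)\|=\Lambda c''\|\mathrm{d}L(\theta(t))\|_\ast\le \Lambda c''c\,\|h(t)\|_2 .
\]
With $A:=\Lambda c c''$ this reads $\|\dot h\|_2\le A\|h\|_2$. Therefore $u(t):=\|h(t)\|_2^2$ satisfies $|\dot u|\le 2\|h\|_2\|\dot h\|_2\le 2Au$ a.e.

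By Grönwall's inequality applied backwards (equivalently, $\frac{d}{dt}\log u\ge -2A$ on $[0,T)$, where $u>0$), we get $u(t)\ge u(0)e^{-2At}$. Letting $t\to T$ and using continuity gives $u(T)\ge u(0)e^{-2AT}>0$. This contradicts $\nabla L(\theta(T))=0$.

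The only delicate points are technical. First, $h$ is absolutely continuous, since it is the composition of the $C^1$ map $\nabla L$ with the absolutely continuous curve $\theta$ taking values in a compact set. Second, the uniform constant $\Lambda$ is valid because the whole trajectory on $[0,T]$ stays in a compact set, by continuity of $\theta$. Multivaluedness of $\gradnorm{\|\cdot\|}{L}$ plays no role, since only the norm identity \eqref{eq:l_op} is used. This Grönwall step replaces the Łojasiewicz attempt, so the plan reduces to it.
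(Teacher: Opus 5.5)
Your argument is correct and is essentially the paper's proof. Both arguments bound $\|\dot\theta\|_2$ by a constant times $\|\nabla L(\theta)\|_2$ through norm equivalence, use a uniform Lipschitz (Hessian) bound for $\nabla L$ on the compact arc of the trajectory, and conclude with Gr\"onwall. The only difference is the form: the paper works in integral form anchored at the zero, $\|\nabla L(\theta(t))\|_2\le \Lambda C\int_t^{t_0}\|\nabla L(\theta(s))\|_2\,ds$, which forces the gradient to vanish on all of $[0,t_0]$, whereas you work in differential form, $|\tfrac{d}{dt}\|h\|_2^2|\le 2A\|h\|_2^2$, which gives the forward lower bound $\|\nabla L(\theta(t))\|_2\ge\|\nabla L(\thetaInit)\|_2e^{-At}$; your abandoned {\L}ojasiewicz paragraph can simply be deleted.
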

\begin{proof}
Suppose, toward a contradiction, that there exists $t_0>0$ such that
\[
    dL(\theta(t_0))=0, \mbox{i. e., }    \nabla L(\theta(t_0))=0 .
\]

By the definition of the norm-gradient and norm equivalence in finite
dimension, there exists a constant $C>0$ such that for a.e. $t$,
\[
    \|\dot\theta(t)\|_2
    \leq C \|\nabla L(\theta(t))\|_2 .
\]
Moreover, since $L$ is $C^2$ and $\theta([0,t_0])$ is compact, $\nabla L$
is Lipschitz on a neighbourhood of $\theta([0,t_0])$. Let $\Lambda>0$ be
a corresponding Lipschitz constant. 

For every $t\in[0,t_0]$, using $\nabla L(\theta(t_0))=0$, we get
\[
\|\nabla L(\theta(t))\|_2
    = \|\nabla L(\theta(t))-\nabla L(\theta(t_0))\|_2 
    \leq \Lambda \|\theta(t)-\theta(t_0)\|_2 
    \leq \Lambda \int_t^{t_0} \|\dot\theta(s)\|_2\,ds 
    \leq \Lambda C \int_t^{t_0}
        \|\nabla L(\theta(s))\|_2\,ds.\]
Define
\[
    g(t):=\|\nabla L(\theta(t))\|_2 .
\]
Then \(g\geq 0\) is continuous and satisfies
\[
    g(t)\leq \Lambda C \int_t^{t_0} g(s)\,ds
    \qquad\text{for all }t\in[0,t_0].
\]
Hence
\[
    g(t_0 - (t_0-t))\leq \Lambda C \int_0^{t_0-t} g(t_0-s)\,ds
    \qquad\text{for all }t\in[0,t_0].
\]
Thus, by Gronwall's lemma, \(g(t)=0\) for all
\(t\in[0,t_0]\). In particular, $\nabla L(\theta(0))=0,$ that is,
    $dL(\theta_{\rm init})=0$ which is impossible.
\end{proof}
\subsection{Bounds}
We recall the definition of the dual norm
\[ \|\ell\|_{\ast}
:=
\max_{\|z\|\leq1}\ell[z].\]
\begin{lemma}[Smoothness gap bounds for an arbitrary norm]
\label{lem-smooth-fct}
Let \(\|\cdot\|\) be a norm on \(\mathbb{R}^d\), and let
\(\|\cdot\|_{\ast}\) be the corresponding operator norm on linear forms. Consider
\(h\in \mathcal{C}^1(\mathbb{R}^d,\mathbb{R})\) which is \(L\)-smooth with
respect to \(\|\cdot\|\), i.e, with $L>0$,
\begin{equation} \label{eq:smoothness:inequality}
\forall x,y \in \mathbb{R}^d,\qquad
h(x)-h(y)-\mathrm{d}h(y)[x-y]\leq \frac{L}{2}\|x-y\|^{2}.
\end{equation}
Assume that \(h\) reaches its minimum at \(\theta_\star\). Then, for every
\(\theta\in\mathbb{R}^d\),
\[
\frac{1}{2L}\|\mathrm{d}h(\theta)\|_{\ast}^2
\leq
h(\theta)-h(\theta_\star)
\leq
\frac{L}{2}\|\theta-\theta_\star\|^{2}.
\]
\end{lemma}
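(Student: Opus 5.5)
We prove the two inequalities separately. Both follow from the smoothness inequality \eqref{eq:smoothness:inequality} combined with the first-order optimality of $\theta_\star$; no convexity of $h$ is needed.

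\emph{Upper bound.} Apply \eqref{eq:smoothness:inequality} with $x=\theta$ and $y=\theta_\star$. Since $\theta_\star$ is a global minimizer of the $\mathcal C^1$ function $h$, we have $\mathrm{d}h(\theta_\star)=0$, so the linear term vanishes. This gives
\[
h(\theta)-h(\theta_\star)\leq \frac L2\|\theta-\theta_\star\|^2 .
\]

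\emph{Lower bound.} This is the standard descent-lemma argument, adapted to an arbitrary norm.

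First apply \eqref{eq:smoothness:inequality} with $y=\theta$ and $x=\theta-tz$, where $\|z\|\le 1$ and $t\ge 0$. This yields
\[
h(\theta-tz)\le h(\theta)-t\,\mathrm{d}h(\theta)[z]+\frac L2 t^2 .
\]

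Next choose $z$ attaining the maximum in the definition of the dual norm, $\|\mathrm{d}h(\theta)\|_\ast=\max_{\|z\|\le1}\mathrm{d}h(\theta)[z]$. The maximum exists by compactness of the unit ball. Then $\mathrm{d}h(\theta)[z]=\|\mathrm{d}h(\theta)\|_\ast$.

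Now set $t=\|\mathrm{d}h(\theta)\|_\ast/L$, which minimizes the quadratic in $t$. This gives
\[
h(\theta-tz)\le h(\theta)-\frac{1}{2L}\|\mathrm{d}h(\theta)\|_\ast^2 .
\]

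Finally, $h(\theta_\star)\le h(\theta-tz)$ because $\theta_\star$ is a global minimizer. Rearranging gives the lower bound
\[
\frac1{2L}\|\mathrm{d}h(\theta)\|_\ast^2\le h(\theta)-h(\theta_\star).
\]

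The only point requiring care is choosing the direction via the dual-norm maximizer; this is what makes the argument work for a general norm rather than only the Euclidean one. Everything else is a routine one-variable minimization.
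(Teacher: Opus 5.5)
Your proof is correct and follows essentially the same route as the paper: the upper bound comes from the smoothness inequality at $y=\theta_\star$ with $\mathrm{d}h(\theta_\star)=0$, and the lower bound comes from a step of length $\|\mathrm{d}h(\theta)\|_\ast/L$ along a dual-norm maximizer, followed by $h(\theta_\star)\le h(\theta-tz)$. The only cosmetic difference is that you take $\|z\|\le 1$ where the paper normalizes to $\|v_\theta\|=1$; this changes nothing and also covers the case $\mathrm{d}h(\theta)=0$ automatically.
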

\begin{proof}
For the upper bound, apply the smoothness inequality, \Eqref{eq:smoothness:inequality}, with \(x=\theta\) and
\(y=\theta_\star\). Since \(\theta_\star\) is a minimizer and \(h\in C^1\), one
has \(\mathrm{d}h(\theta_\star)=0\), so
\[
h(\theta)-h(\theta_\star)\leq \frac{L}{2}\|\theta-\theta_\star\|^2.
\]

For the lower bound, fix \(\theta\in\mathbb{R}^d\). By compactness of the unit
\(\|\cdot\|\)-ball and continuity of \(\mathrm{d}h(\theta)\), there exists
\(v_\theta\in\mathbb{R}^d\) such that
\[
\|v_\theta\|=1,
\qquad
\mathrm{d}h(\theta)[v_\theta]=\|\mathrm{d}h(\theta)\|_{\ast}.
\]
Apply the smoothness inequality with
\[
y=\theta,
\qquad
x=\theta-\frac{\|\mathrm{d}h(\theta)\|_{\ast}}{L}v_\theta.
\]
Then
\begin{align*}
h\!\left(\theta-\frac{\|\mathrm{d}h(\theta)\|_{\ast}}{L}v_\theta\right)
&\leq
h(\theta)
-\frac{\|\mathrm{d}h(\theta)\|_{\ast}}{L}\mathrm{d}h(\theta)[v_\theta]
+\frac{L}{2}\frac{\|\mathrm{d}h(\theta)\|_{\ast}^2}{L^2}\|v_\theta\|^2\\
&=
h(\theta)-\frac{1}{2L}\|\mathrm{d}h(\theta)\|_{\ast}^2.
\end{align*}
Since \(h(\theta_\star)\leq h(x)\) for every \(x\), this yields
\[
\frac{1}{2L}\|\mathrm{d}h(\theta)\|_{\ast}^2
\leq
h(\theta)-h(\theta_\star).
\]
\end{proof}

\subsection{Extra-material}

\begin{proposition}[Subdifferential of the dual  norm]\label{sousdiff}
Let \(E\) be a finite-dimensional real vector space, and let
\(\|\cdot\|\) be a norm on \(E\). Its dual norm is defined by
\[
\|V\|_*=\max_{\|Z\|\leq 1}\langle V,Z\rangle,
\]
where $\langle \cdot,\cdot \rangle$ is the duality bracket.\footnote{In our setting $E$ is identified with its dual with the Euclidean scalar product (Frobenius-like for matrices). So, after identification the bracket is actually the standard scalar product.}
Then,
$ \displaystyle
\partial \|V\|_*
=
\operatorname*{argmax}_{\|Z\|\leq 1}\langle V,Z\rangle
$, for every $V$ in \(E^*\).
Equivalently,
\[
Z\in\partial \|V\|_*
\quad\Longleftrightarrow\quad
\|Z\|\leq 1
\ \text{ and }\
\langle V,Z\rangle=\|V\|_*.
\]
\end{proposition}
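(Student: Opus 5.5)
The plan is to prove the two inclusions separately, using only the definition of the dual norm and the standard convex-analysis characterization of subdifferentials of support functions. The key observation is that $\|\cdot\|_*$ is the support function of the unit ball $B:=\{Z:\|Z\|\leq 1\}$, which is compact and convex. Hence $\|\cdot\|_*$ is finite, convex and positively homogeneous.

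\textbf{Inclusion $\supseteq$.} Let $Z\in B$ with $\langle V,Z\rangle=\|V\|_*$. For any $W\in E^*$ we have $\|W\|_*\geq \langle W,Z\rangle$, since $Z\in B$. Therefore
\[
\|W\|_*\geq \langle W,Z\rangle=\|V\|_*+\langle W-V,Z\rangle .
\]
This is exactly the subgradient inequality, so $Z\in\partial\|V\|_*$.

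\textbf{Inclusion $\subseteq$.} Let $Z\in\partial\|V\|_*$, so that $\|W\|_*\geq\|V\|_*+\langle W-V,Z\rangle$ for all $W$.
\begin{itemize}
\item Taking $W=0$ gives $\langle V,Z\rangle\geq\|V\|_*$.
\item Taking $W=2V$ and using homogeneity gives $\|V\|_*\geq\langle V,Z\rangle$.
\end{itemize}
Hence $\langle V,Z\rangle=\|V\|_*$. Substituting this back into the subgradient inequality yields $\langle W,Z\rangle\leq\|W\|_*$ for every $W$.

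It remains to deduce $\|Z\|\leq 1$ from $\langle W,Z\rangle\leq\|W\|_*$ for all $W$. This is the only non-routine step. It requires the bidual identity $\|Z\|=\sup_{\|W\|_*\leq1}\langle W,Z\rangle$, which holds in finite dimension.

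I would prove the bidual identity by Hahn--Banach separation. Suppose $Z\notin B$. Since $B$ is closed and convex, there is a functional $W$ with $\langle W,Z\rangle>\sup_{Y\in B}\langle W,Y\rangle=\|W\|_*$. This contradicts the previous inequality, so $Z\in B$.

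Combining the two inclusions gives $\partial\|V\|_*=\operatorname{argmax}_{\|Z\|\leq1}\langle V,Z\rangle$, and the stated equivalence is a restatement of this equality. I expect the separation step to be the main obstacle; everything else is direct manipulation of the definitions.
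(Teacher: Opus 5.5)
Your proof is correct and follows essentially the same route as the paper: the $\supseteq$ inclusion is identical, and for $\subseteq$ both arguments test the subgradient inequality at $W=0$ and then use the finite-dimensional bidual identity to conclude $\|Z\|\leq 1$. The differences are minor: you get $\langle W,Z\rangle\leq\|W\|_*$ by testing at $W=2V$ and substituting the resulting equality back, where the paper tests at $Y=tH$ and lets $t\to+\infty$; and you justify the bidual step by a separation argument, where the paper simply invokes it.
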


\begin{proof}
Recall that the subdifferential of the convex function
\(V\mapsto \|V\|_*\) is defined by
\[
Z\in\partial \|V\|_*
\]
if and only if, for every \(Y\in E^*\),
\[
\|Y\|_*\geq \|V\|_*+\langle Y-V,Z\rangle.
\]
We prove both inclusions. First, assume
\[
Z\in \operatorname*{argmax}_{\|X\|\leq 1}\langle V,X\rangle,
\]
so that
\[
\|Z\|\leq 1
\mbox{ and }
\langle V,Z\rangle=\|V\|_*.
\]
For any \(Y\in E^*\), by definition of the dual norm,
\begin{equation}\label{eq:boundsub}
\langle Y,Z\rangle\leq \|Y\|_* \|Z\|\leq \|Y\|_*.
\end{equation}
Using \(\langle V,Z\rangle=\|V\|_*\), we get
\[
\langle Y,Z\rangle
=
\langle V,Z\rangle+\langle Y-V,Z\rangle
=
\|V\|_*+\langle Y-V,Z\rangle.
\]
Hence by \Eqref{eq:boundsub}
\[
\|Y\|_*
\geq
\|V\|_*+\langle Y-V,Z\rangle, \mbox{ i.e., }
Z\in\partial \|V\|_*.
\]

Conversely, assume
\[
Z\in\partial \|V\|_*.
\]
Then, for every \(Y\in E^*\),
\[
\|Y\|_*
\geq
\|V\|_*+\langle Y-V,Z\rangle.
\]
Taking \(Y=0\), we obtain
$
\langle V,Z\rangle\geq \|V\|_*.
$ On the other hand, the dual norm always satisfies
$
\langle V,Z\rangle\leq \|V\|_* \|Z\|.
$

To conclude we show that \(\|Z\|\leq 1\). Let \(Y=tH\), where \(t>0\) and
\(H\in E^*\). The subgradient inequality gives
\[
t\|H\|_*
\geq
\|V\|_*+\langle tH-V,Z\rangle.
\]
That is,
\[
t\|H\|_*
\geq
\|V\|_*-\langle V,Z\rangle+t\langle H,Z\rangle.
\]
Divide by \(t\):
\[
\|H\|_*
\geq
\frac{\|V\|_*-\langle V,Z\rangle}{t}+\langle H,Z\rangle.
\]
Letting \(t\to+\infty\), we get
\[
\langle H,Z\rangle\leq \|H\|_*
\qquad
\text{for every }H\in E^*.
\]
Taking the supremum over all \(H\) with \(\|H\|_*\leq 1\), we obtain
\[
\|Z\|\leq 1,
\]
because \(\|\cdot\|\) is the dual norm of \(\|\cdot\|_*\) in finite dimension.

Therefore
\[
\|Z\|\leq 1.
\]
By definition of \(\|V\|_*\), this implies
\[
\langle V,Z\rangle\leq \|V\|_*.
\]
Together with the earlier inequality
\[
\langle V,Z\rangle\geq \|V\|_*,
\]
we conclude that
\[
\langle V,Z\rangle=\|V\|_*.
\]
Hence
\[
Z\in \operatorname*{argmax}_{\|X\|\leq 1}\langle V,X\rangle.
\]
This proves the equality $ \displaystyle
\partial \|V\|_*
=
\operatorname*{argmax}_{\|Z\|\leq 1}\langle V,Z\rangle
$.
\end{proof}

\begin{remark}
\label{rem:strict:ineq}
In order to have the convenient variational representation in \Eqref{eq:calQ-multivalued} of $\Zmin$, 
our \Cref{def-Maj} differs slightly from
the corresponding definition in 
\cite{bachoc2025majorityrulesminorityloses} (where only univalued $V_L$ are considered), by
using a strict inequality. Observe that $\Zmaj(L_0)$ could be empty in some cases ---for instance when $L_0 = 0$ or $L_0 = - \epsilon L$ with $\epsilon>0$ small.
\end{remark}

\subsection{Spectral Descent}
\label{section:appendix-spectral-Descent}

\begin{definition} \label{def:compact:eigenvalue:decomposition}
Let $a,b \in \mathbb{N}$ and $M$ be an $a \times b$ non-zero matrix. A compact eigenvalue decomposition of $M$ is a decomposition $M = U \Sigma V^\top$ where $U$ is $a \times r$, with $r \in \mathbb{N}$, $r \le \min(a,b)$, with $U^\top U = I_r$, where $\Sigma$ is $r \times r$ and diagonal with non-zero diagonal values, and where $V$ is $b \times r$ with $V^\top V = I_r$.
\end{definition}

Assume the parameter space is of the form $\mathbb{R}^{a\times b}$. The differential is
represented by the matrix gradient through
\[
\mathrm{d}L(\Theta)[H]=\operatorname{tr}\bigl(\nabla L(\Theta)^\top H\bigr)
\qquad \forall \Theta \in\mathbb{R}^{a\times b}
\qquad \forall H\in\mathbb{R}^{a\times b}.
\] 
The  matrix Spectral norm (w.r.t. to Euclidean norm) is denoted by $\|\cdot\|_{\rm sp}$, its dual norm --the nuclear norm-- is denoted by 
$\|\cdot\|_{\mathrm{nuc}}$. Let us denote by $\Spec=\partial \|\cdot\|_{\mathrm{nuc}}$ the subdifferential mapping of the nuclear norm. From classical results, see \Cref{sousdiff}, 
one has, 
\begin{equation}
\Spec(M)= \argmax_{\|Z\|_{\rm sp}\leq 1}\operatorname{tr}(M^\top Z) \mbox{ for all $M$ in $\mathbb{R}^{a\times b}$,}
\end{equation}
and, using \cite[Example 2, Section 2]{watson}, if $M=U\Sigma V^\top$ is a compact singular value decomposition (see \Cref{def:compact:eigenvalue:decomposition}), then
\[
\Spec(M)
=
\left\{
UV^\top+W:
U^\top W=0,\ WV=0,\ \|W\|_{\mathrm{sp}} \leq 1
\right\}.
\]

Therefore, by definition of normalized gradients,
\begin{equation}
\label{eq:SM:tr}
\ngradnorm{\|\cdot\|_{\mathrm{sp}}}{ L(\Theta)}
=
\Spec (\nabla L(\Theta)) .
\end{equation}

Spectral descent is the corresponding normalized spectral-norm descent:
\begin{equation}
\label{eq:operator-norm-gradient}
\dot\Theta(t)\in
-\ngradnorm{\|\cdot\|_{\mathrm{sp}}}{ L(\Theta(t))}.
\end{equation}

Since \(\Theta\mapsto\Spec(\nabla L(\Theta))\) has nonempty compact convex and bounded values and
closed graph,
standard existence results for differential inclusions again apply to \Eqref{eq:normalized:spec:desc} (\Cref{Th:aubincell,cor:global-existence-norm-gradient}).

Note that the (unnormalized) spectral-norm descent would be
\[ \dot\Theta(t)\in
-\gradnorm{\|\cdot\|_{\mathrm{sp}}}{ L(\Theta(t))} \Leftrightarrow  \dot\Theta(t)\in
- \| \nabla L(\Theta(t))\|_{\rm nuc}\,
\ngradnorm{\|\cdot\|_{\rm sp}}{ L(\Theta(t))} \]
with 
\[ v \in -\gradnorm{\|\cdot\|_{\mathrm{sp}}}{ L(\Theta)} \implies \| v\|_{\rm sp} = \|\nabla L(\Theta)\|_{\mathrm{nuc}}. \]


\subsection{Majority adverse and Minority awareness zones}

Our definition of a majority adverse zone differs slightly from the one in \cite{bachoc2025majorityrulesminorityloses} because of the strict inequality (\Cref{rem:strict:ineq}). As a result, in our setting the minority aware zone $\Zmin$ typically has nonempty interior, see \Cref{cor-non-emptyness-robust-zones}. This is a benefit of its definition, compared to $\Zmaj(L_0)$ which can be empty for some choices of $L_0$ and non-empty for other choices.

\medskip

The next lemma shows that for an optimizer $V_L$ satisfying Equations \ref{eqdescent} and \ref{eqdescent1}, 
membership in the majority adverse zone forces a strict decrease of the
minority loss along the trajectory. 
A proof is given in \Cref{sec-appendix-proof-section-3}.

\begin{lemma}
\label{lemma-majority-increase}
Let $\theta$ be a solution to
\Eqref{eq-dynamical-system}. Assume that $V_L$ is a descent field, i.e. Equations \ref{eqdescent} and \ref{eqdescent1}  hold and $\Zmaj(L_0) \neq \emptyset$ .
Then, for every $L_0\in\Kmin$, for a.e. $t\geq 0$,
$\displaystyle
\theta(t)\in\Zmaj(L_0)
\quad\Longrightarrow\quad
\frac{d}{dt}L_0(\theta(t))<0.$
\end{lemma}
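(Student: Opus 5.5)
The plan is to rewrite the minority loss along the trajectory as the total loss minus the majority loss. Then the descent property of $V_L$ and the defining inequality of $\Zmaj(L_0)$ should combine into a sign statement on $\frac{d}{dt}L_0(\theta(t))$.

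First, since $\theta$ is absolutely continuous and $L_0$ is $C^2$, the composition $t\mapsto L_0(\theta(t))$ is absolutely continuous. For a.e.\ $t$, $\theta$ is differentiable at $t$ and $\dot\theta(t)\in -V_L(\theta(t))$; the chain rule then gives $\frac{d}{dt}L_0(\theta(t))=\langle \nabla L_0(\theta(t)),\dot\theta(t)\rangle$. Fix such a $t$ with $\theta(t)\in\Zmaj(L_0)$, and write $v:=-\dot\theta(t)\in V_L(\theta(t))$. Using $\nabla L_0=\nabla L-\nabla L_1$, we get
\[
\frac{d}{dt}L_0(\theta(t))=-\langle v,\nabla L(\theta(t))\rangle+\langle v,\nabla L_1(\theta(t))\rangle .
\]

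Next, the two terms are handled separately. The second term is strictly negative by the definition of $\Zmaj(L_0)$, since $\sup_{w\in V_L(\theta(t))}\langle w,\nabla L_1(\theta(t))\rangle<0$ and $v$ is one such $w$. For the first term, I will split according to whether $\theta(t)$ is critical. If $dL(\theta(t))\neq 0$, then \Eqref{eqdescent1} gives $\langle v,\nabla L(\theta(t))\rangle>0$, so the first term is negative. If $dL(\theta(t))=0$, then $\nabla L(\theta(t))=0$ and the first term vanishes. In both cases $-\langle v,\nabla L(\theta(t))\rangle\le 0$, and the sum is strictly negative.

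The main obstacle is only the a.e.\ bookkeeping, namely choosing $t$ so that simultaneously $\dot\theta(t)$ exists, the inclusion holds, and the chain rule applies. This is a full-measure set because it is the intersection of two full-measure sets. I note that \Eqref{eqdescent} and the hypothesis $\Zmaj(L_0)\neq\emptyset$ are not really needed, since the implication is vacuous when the zone is empty. The critical case is already covered by $\nabla L(\theta(t))=0$.
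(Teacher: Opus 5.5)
Your proof is correct and is essentially the paper's argument. At a.e.\ $t$ you write $\frac{d}{dt}L_0(\theta(t)) = -\langle v,\nabla L(\theta(t))\rangle + \langle v,\nabla L_1(\theta(t))\rangle$, bound the first term by the descent property and the second by the strict inequality defining $\Zmaj(L_0)$; your critical/noncritical case split simply spells out the paper's one-line claim that the first term is nonpositive.
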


The next theorem shows uniform majority dominance out of $\Zmin$.
\begin{theorem}[Uniform majority dominance:
the multivalued case]
\label{thm:uniform-majority-dominance-general}
Under Assumptions~2.1 and~3.1, for every
$\theta\in\mathbb{R}^d$,
\[
\theta\notin\Zmin
\quad\Longleftrightarrow\quad
\exists v_\theta\in V_L(\theta),\
\forall L_1\in\Kmax(L),\
dL_1(\theta)[-v_\theta]\leq 0.
\]
Thus, outside $\Zmin$, there exists a common admissible
update that is majority-favouring to first order for every
admissible decomposition.

If $V_L$ is single-valued, this is the actual update.
In particular, let $\theta:I\to\mathbb{R}^d$ be an
absolutely continuous solution of
$\dot\theta(t)=-V_L(\theta(t))$ on an interval $I$,
with $\theta(t)\notin\Zmin$ for every $t\in I$.
Then, for every $L_1\in\Kmax(L)$,
\[
\frac{d}{dt}L_1(\theta(t))\leq 0
\qquad\text{for almost every }t\in I,
\]
and $L_1\circ\theta$ is nonincreasing on $I$.
\end{theorem}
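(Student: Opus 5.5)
We prove the equivalence by combining \Cref{lemma:robust-zone-calQ-main} with the explicit formula of \Cref{thm-calQ}, and then read the resulting inequality as a statement about every admissible majority loss through the gradient envelope of \Cref{lemma:pointwise-envelope-K}. Set $r(\theta):=\psi(\|\theta\|_2+D)$.

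\emph{Reformulation.} By \Cref{lemma:robust-zone-calQ-main} and \Cref{thm-calQ},
\[
\theta\notin\Zmin\iff\Qrob(\theta)\geq 0\iff \sup_{v\in V_L(\theta)}\big(\langle v,\nabla L(\theta)\rangle-r(\theta)\|v\|_2\big)\geq 0 .
\]
The map $v\mapsto \langle v,\nabla L(\theta)\rangle-r(\theta)\|v\|_2$ is continuous, and $V_L(\theta)$ is nonempty and compact by \Cref{ass-VL}. Hence the supremum is attained, and $\theta\notin\Zmin$ holds if and only if there is $v_\theta\in V_L(\theta)$ with
\[
\langle v_\theta,\nabla L(\theta)\rangle\geq r(\theta)\|v_\theta\|_2 .
\]
Attainment of the supremum is the one place where compactness is genuinely used.

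\emph{Translating to majority losses.} Every $L_1\in\Kmax(L)$ has the form $L_1=L-M$ with $M\in\Kmin$. Therefore
\[
dL_1(\theta)[-v]=-\langle v,\nabla L(\theta)\rangle+\langle v,\nabla M(\theta)\rangle .
\]
By \Cref{lemma:pointwise-envelope-K}, as $M$ ranges over $\Kmin$, the vector $\nabla M(\theta)$ ranges over exactly the closed Euclidean ball of radius $r(\theta)$. Taking the supremum over this ball gives, for each fixed $v$,
\[
\sup_{L_1\in\Kmax(L)} dL_1(\theta)[-v]=-\langle v,\nabla L(\theta)\rangle+r(\theta)\|v\|_2 ,
\]
and the supremum is attained at $\nabla M(\theta)=r(\theta)v/\|v\|_2$ when $v\neq0$. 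Consequently, the condition ``$dL_1(\theta)[-v_\theta]\le 0$ for all $L_1\in\Kmax(L)$'' is exactly the inequality $\langle v_\theta,\nabla L(\theta)\rangle\geq r(\theta)\|v_\theta\|_2$ from the reformulation step. This gives both directions of the equivalence at once.

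\emph{Dynamical consequence.} Assume $V_L$ is single-valued. Then $v_\theta=V_L(\theta)$ is forced, so the common update is the actual update. Let $\theta$ be absolutely continuous with $\theta(t)\notin\Zmin$ on $I$. Since $L_1$ is $C^2$, the composition $L_1\circ\theta$ is absolutely continuous, and for almost every $t\in I$,
\[
\tfrac{d}{dt}L_1(\theta(t))=dL_1(\theta(t))[\dot\theta(t)]=dL_1(\theta(t))[-V_L(\theta(t))]\leq 0 .
\]
An absolutely continuous function whose derivative is almost everywhere nonpositive is nonincreasing, so $L_1\circ\theta$ is nonincreasing on $I$. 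The only step needing care is the chain rule for an absolutely continuous curve composed with a $C^1$ function, which is standard.
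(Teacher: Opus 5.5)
Your proof is correct, but it is organized differently from the paper's.

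The paper does not use the closed-form expression for $\Qrob$. It forms the compact convex set $C_\theta$, the closure of the admissible majority gradients at $\theta$. It extends $\max_{v\in V_L(\theta)}\langle v,g\rangle\geq 0$ from the gradients $\nabla L_1(\theta)$ to all of $C_\theta$ by continuity. It then applies the bilinear minimax theorem on $C_\theta\times V_L(\theta)$ and takes $v_\theta$ as a maximizer of $v\mapsto\min_{g\in C_\theta}\langle v,g\rangle$.

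You instead treat the statement as a corollary of \Cref{thm-calQ} and \Cref{lemma:pointwise-envelope-K}. The envelope lemma gives $\sup_{L_1\in\Kmax(L)}dL_1(\theta)[-v]=-\langle v,\nabla L(\theta)\rangle+r(\theta)\|v\|_2$ in closed form. The ``common direction'' condition therefore reduces to one scalar inequality, and existence of $v_\theta$ is just attainment of the supremum in \Cref{thm-calQ}.

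What your route buys: it is shorter. It also describes the admissible common directions explicitly, as the set $\{v\in V_L(\theta):\langle v,\nabla L(\theta)\rangle\geq r(\theta)\|v\|_2\}$; any maximizer in \Cref{thm-calQ} is one.

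What the paper's route buys: it never uses the reverse inclusion of the envelope lemma, i.e.\ the exact ball shape of the admissible minority gradients. It only needs that the set of admissible majority gradients at $\theta$ is convex and bounded. It would therefore carry over to other convex admissible classes where no closed-form $\Qrob$ is available.

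One correction to your commentary: attainment is not the only place where compactness of $V_L(\theta)$ matters. Compactness and convexity are both used in the minimax exchange inside the proof of \Cref{thm-calQ}. Convexity is what makes a single common $v_\theta$ exist; your argument uses it implicitly there.
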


\begin{proof}
Fix $\theta$ and write $V=V_L(\theta)$. Define
\[
C_\theta
:=
\overline{
\{\nabla L_1(\theta):L_1\in\Kmax(L)\}
}.
\]
The set $\Kmin$ is nonempty and convex, and so is
$\Kmax(L)$. Moreover, for every $L_1\in\Kmax(L)$,
\[
\|\nabla L_1(\theta)-\nabla L(\theta)\|_2
\leq \psi(\|\theta\|_2+D).
\]
Hence $C_\theta$ is nonempty, convex, and compact.
The set $V$ has the same properties by Assumption~3.1.

Suppose first that $\theta\notin\Zmin$. By the definitions
of $\Zmin$ and $\Zmaj(L_0)$,
\[
\max_{v\in V}\langle v,\nabla L_1(\theta)\rangle
\geq 0
\qquad\text{for every }L_1\in\Kmax(L).
\]
Since $g\mapsto\max_{v\in V}\langle v,g\rangle$ is
continuous, this inequality extends to every $g\in C_\theta$.
The bilinear minimax theorem therefore gives
\[
0
\leq
\min_{g\in C_\theta}\max_{v\in V}\langle v,g\rangle
=
\max_{v\in V}\min_{g\in C_\theta}\langle v,g\rangle.
\]
The last maximum is attained. Any maximizer $v_\theta$
satisfies
\[
\langle v_\theta,g\rangle\geq 0
\qquad\text{for every }g\in C_\theta.
\]
In particular,
$dL_1(\theta)[-v_\theta]\leq 0$
for every $L_1\in\Kmax(L)$.

Conversely, suppose such a common direction $v_\theta$
exists. For every $L_0\in\Kmin$, with $L_1=L-L_0$,
\[
\sup_{v\in V_L(\theta)}
\langle v,\nabla L_1(\theta)\rangle
\geq
\langle v_\theta,\nabla L_1(\theta)\rangle
\geq 0.
\]
Thus $\theta\notin\Zmaj(L_0)$ for every $L_0\in\Kmin$,
and therefore $\theta\notin\Zmin$.

Finally, when $V_L$ is single-valued, the common direction
is necessarily $v_\theta=V_L(\theta)$. Along the trajectory
in the statement, the chain rule yields
\[
\frac{d}{dt}L_1(\theta(t))
=
dL_1(\theta(t))[-V_L(\theta(t))]
\leq 0
\qquad\text{for almost every }t\in I.
\]
Integrating proves the claimed monotonicity and, in
particular, \Cref{thm:uniform-majority-dominance}.
\end{proof}

\begin{lemma}[Minority activation]
\label{lem:minority-activation}
Under Assumptions~2.1 and~3.1, suppose that $V_L$ satisfies
the descent conditions~(5)--(6).
For every $\bar\theta\in\Zmin$, there exist an admissible
minority loss $L_0\in\Kmin$ and a neighbourhood $U$ of
$\bar\theta$ such that, writing $L_1=L-L_0$,
\[
dL_0(\theta)[-v]<0<dL_1(\theta)[-v],
\qquad
\forall\theta\in U,\quad \forall v\in V_L(\theta).
\]
Thus, every admissible update in $U$ strictly favors
the same minority loss while increasing its corresponding
majority loss, to first order.\footnote{In particular, if an absolutely continuous solution of
$\dot\theta(t)\in-V_L(\theta(t))$ satisfies
$\theta(t_0)\in\Zmin$, then there is a time neighborhood
of $t_0$, relative to the solution's domain, on which
one fixed admissible minority loss strictly decreases
and its corresponding majority loss strictly increases.}

\end{lemma}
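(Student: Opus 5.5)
The plan is to fix one admissible minority loss using the definition of $\Zmin$, and then to show that the strict inequality defining $\Zmaj(L_0)$ persists on a neighbourhood of $\bar\theta$. Since $\bar\theta\in\Zmin=\bigcup_{L_0\in\Kmin}\Zmaj(L_0)$, there is some $L_0\in\Kmin$ with $\bar\theta\in\Zmaj(L_0)$. Setting $L_1=L-L_0$, this means
\[
Q_{L_0}(\bar\theta)=\sup_{v\in V_L(\bar\theta)}\langle v,\nabla L_1(\bar\theta)\rangle<0 .
\]
This $L_0$ stays fixed for the rest of the argument.

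\textbf{Step 1 (openness; the main obstacle).} I will show that $\theta\mapsto Q_{L_0}(\theta)$ is upper semicontinuous. Then $U:=\{\theta: Q_{L_0}(\theta)<0\}=\Zmaj(L_0)$ is an open set containing $\bar\theta$.

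Argue by contradiction. Suppose there is a sequence $\theta_k\to\bar\theta$ with $Q_{L_0}(\theta_k)\geq 0$. Since $V_L(\theta_k)$ is compact, the supremum is attained at some $v_k\in V_L(\theta_k)$ with $\langle v_k,\nabla L_1(\theta_k)\rangle\geq 0$. By local boundedness in \Cref{ass-VL}, the sequence $(v_k)$ is bounded, so a subsequence converges to some $v$. By the closed graph property, $v\in V_L(\bar\theta)$. Since $L_1$ is $C^2$ (hence $\nabla L_1$ is continuous), passing to the limit gives $\langle v,\nabla L_1(\bar\theta)\rangle\geq 0$, which contradicts $Q_{L_0}(\bar\theta)<0$.

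Consequently, on $U$ we have $\langle v,\nabla L_1(\theta)\rangle<0$ for all $v\in V_L(\theta)$. This is exactly $0<dL_1(\theta)[-v]$.

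\textbf{Step 2 (the minority side).} Let $\theta\in U$. First, $\theta$ is not critical. Indeed, if $dL(\theta)=0$, then by \eqref{eqdescent} we would have $0\in V_L(\theta)$, which would force $Q_{L_0}(\theta)\geq\langle 0,\nabla L_1(\theta)\rangle=0$, a contradiction.

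Hence the descent condition \eqref{eqdescent1} applies and gives $\langle v,\nabla L(\theta)\rangle>0$ for every $v\in V_L(\theta)$. Writing $\nabla L_0=\nabla L-\nabla L_1$, we get
\[
\langle v,\nabla L_0(\theta)\rangle=\langle v,\nabla L(\theta)\rangle-\langle v,\nabla L_1(\theta)\rangle>0 .
\]
The first term is positive and the subtracted term is negative, so the difference is strictly positive. This is $dL_0(\theta)[-v]<0$. Together with Step 1, this gives the two-sided inequality on $U$.

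\textbf{Step 3 (the footnote).} Let $\theta(\cdot)$ be an absolutely continuous solution with $\theta(t_0)\in\Zmin$. Apply the lemma at $\bar\theta=\theta(t_0)$. By continuity of $\theta(\cdot)$, $\theta(t)\in U$ for all $t$ in a relative time neighbourhood of $t_0$. On that neighbourhood the chain rule gives, for a.e. $t$,
\[
\tfrac{d}{dt}L_0(\theta(t))=dL_0(\theta(t))[\dot\theta(t)]<0,
\qquad
\tfrac{d}{dt}L_1(\theta(t))>0,
\]
since $\dot\theta(t)\in -V_L(\theta(t))$. Integrating yields strict monotonicity of $L_0\circ\theta$ (decreasing) and $L_1\circ\theta$ (increasing) on that neighbourhood.

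The only genuinely delicate point is Step 1. There one must use compactness of the values, local boundedness and the closed graph of $V_L$ together to transfer the strict inequality from $\bar\theta$ to nearby points, uniformly over all admissible velocities.
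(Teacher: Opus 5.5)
Your proof is correct and follows the paper's argument essentially step for step. You fix $L_0$ with $\bar\theta\in\Zmaj(L_0)$, propagate the strict inequality to a neighbourhood by a sequential contradiction argument using local boundedness and the closed graph, subtract to obtain the minority inequality, and integrate along the trajectory for the footnote. The only difference is in Step 2: you first rule out critical points via (5) so that you can invoke (6) strictly, whereas the paper just notes that the descent conditions give $dL(\theta)[-v]\le 0$ in all cases, which already suffices because $dL_1(\theta)[-v]>0$.
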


\begin{proof}
Since $\bar\theta\in\Zmin$, the union defining $\Zmin$
provides $L_0\in\Kmin$ such that
$\bar\theta\in\Zmaj(L_0)$. With $L_1=L-L_0$, this means
\[
\max_{v\in V_L(\bar\theta)}
\langle v,\nabla L_1(\bar\theta)\rangle<0.
\]
This strict inequality persists in a neighbourhood $U$
of $\bar\theta$. Indeed, otherwise there would exist
$\theta_n\to\bar\theta$ and $v_n\in V_L(\theta_n)$ with
$\langle v_n,\nabla L_1(\theta_n)\rangle\geq0$.
Local boundedness gives a convergent subsequence of
$(v_n)$, and the closed-graph property places its limit
in $V_L(\bar\theta)$, contradicting the displayed
inequality by continuity of $\nabla L_1$.

Consequently, for every $\theta\in U$ and
$v\in V_L(\theta)$,
\[
dL_1(\theta)[-v]
=-\langle v,\nabla L_1(\theta)\rangle>0.
\]
The descent conditions give $dL(\theta)[-v]\leq0$.
Since $L_0=L-L_1$, it follows that
\[
dL_0(\theta)[-v]
=dL(\theta)[-v]-dL_1(\theta)[-v]<0.
\]

Finally, continuity of the trajectory gives a time
interval $I$ around $t_0$, within its domain, such that
$\theta(t)\in U$ for all $t\in I$.
The chain rule yields
\[
\frac{d}{dt}L_0(\theta(t))<0
\quad\text{and}\quad
\frac{d}{dt}L_1(\theta(t))>0
\qquad\text{for almost every }t\in I.
\]
Absolute continuity and integration establish the
claimed strict monotonicity.
\end{proof}

\section{Proofs of \Cref{section:learning:process:majority,section:training:times} and lower bounds}


\subsection{Proofs of \Cref{section:learning:process:majority}}
\label{sec-appendix-proof-section-3}

\begin{proof}[Proof of \Cref{lemma-majority-increase}]
Let $L_1=L-L_0$. Since $\theta$ is absolutely continuous and $L,L_0,L_1$ are
$C^1$, the chain rule holds for a.e. $t\geq 0$. Fix such a time and write
\[
\dot\theta(t)=-v(t),
\qquad
v(t)\in V_L(\theta(t)).
\]
Assume moreover that \(\theta(t)\in \Zmaj(L_0)\). By definition of
\(\Zmaj(L_0)\),
\[
\langle v(t),\nabla L_1(\theta(t))\rangle<0.
\]
Using \(L_0=L-L_1\), \(\dot\theta(t)=-v(t)\), and the descent property, Equations
\ref{eqdescent} and \ref{eqdescent1}, we obtain
\begin{align*}
\frac{d}{dt}L_0(\theta(t))
&=
\left\langle\nabla L_0(\theta(t)),\dot\theta(t)\right\rangle\\
&=
-\left\langle v(t),\nabla L(\theta(t))\right\rangle
+\left\langle v(t),\nabla L_1(\theta(t))\right\rangle.
\end{align*}
The first term is nonpositive by Equations \ref{eqdescent} and \ref{eqdescent1}, and the second term is
strictly negative. Hence \(\frac{d}{dt}L_0(\theta(t))<0\).
\end{proof}

\begin{proof}[Proof of \Cref{lemma:pointwise-envelope-K}]
Fix $\theta\in\mathbb{R}^d$ and set $r:=\|\theta\|_2$. The inclusion
``$\subseteq$'' is exactly the gradient bound in \Eqref{def-K}. Conversely, fix
$\xi\in\mathbb{R}^d$ with $\|\xi\|_2\leq \psi(r+D)$. If $r+D=0$, then
$\xi=0$ and the zero function realizes it. Assume now that $r+D>0$ and split
\[
\xi_r:=\frac{r}{r+D}\xi,
\qquad
\xi_D:=\frac{D}{r+D}\xi .
\]
Then $\|\xi_r\|_2\leq \psi r$ and $\|\xi_D\|_2\leq \psi D$. Let
$A:\mathbb{R}^d\to\mathbb{R}^d$ be linear. If $r=0$ or $\xi_r=0$, take $A=0$.
Assume now that $r>0$ and $\xi_r\neq0$, and set
\[
u:=\frac{\theta}{r},
\qquad
s:=\frac{\xi_r}{\|\xi_r\|_2}.
\]
Then \(\|u\|_2=\|s\|_2=1\). If \(s=u\), set \(B:=I\). Otherwise define
\[
B
:=
I-2\frac{(u-s)(u-s)^\top}{\|u-s\|_2^2}.
\]
The matrix
\[
P:=\frac{(u-s)(u-s)^\top}{\|u-s\|_2^2}
\]
is the orthogonal projector onto \(\operatorname{span}\{u-s\}\), so \(B=I-2P\)
is symmetric and satisfies \(B^2=I\). Hence \(\|B\|_{\rm sp}=1\). Moreover,
using \(\|u\|_2=\|s\|_2=1\),
\[
\|u-s\|_2^2=2(1-\langle u,s\rangle),
\]
therefore
\[
Bu
=
u-2(u-s)\frac{\langle u-s,u\rangle}{\|u-s\|_2^2}
=
u-(u-s)
=
s.
\]
Set
\[
A:=\frac{\|\xi_r\|_2}{r}\,B.
\]
Then \(A\) is symmetric, \(\|A\|_{\rm sp}=\|\xi_r\|_2/r\leq\psi\), and
\[
A\theta
=
\frac{\|\xi_r\|_2}{r}Br u
=
\|\xi_r\|_2 s
=
\xi_r.
\]
Now define
\[
L_{0,\xi}(z):=\frac12\langle Az,z\rangle+\langle \xi_D,z\rangle .
\]
Then $L_{0,\xi}$ is $C^2$, \(\nabla L_{0,\xi}(\theta)=\xi_r+\xi_D=\xi\), and,
for every $z\in\mathbb{R}^d$,
\[
\|\nabla L_{0,\xi}(z)\|_2
\leq
\|A\|_{\rm sp}\|z\|_2+\|\xi_D\|_2
\leq
\psi(\|z\|_2+D).
\]
Thus $L_{0,\xi}\in\Kmin$, proving the reverse inclusion in
\Eqref{eq:pointwise-envelope-K}.
\end{proof}

\begin{proof}[Proof of \Cref{thm-calQ}]

Fix $\theta\in\mathbb{R}^d$ and set $\rho(\theta):=\psi(\|\theta\|_2+D)$. 
By the definitions of $\Qrob$ and $Q_{L_0}$,
\[
\Qrob(\theta)
=
\inf_{L_0\in\Kmin}
\sup_{v\in V_L(\theta)}
\left\langle v,\nabla(L-L_0)(\theta)\right\rangle .
\]
At this fixed $\theta$, the expression depends on $L_0$ only through the vector
$\xi=\nabla L_0(\theta)$. The pointwise envelope identity
\Eqref{eq:pointwise-envelope-K}, given by \Cref{lemma:pointwise-envelope-K},
therefore gives
$\displaystyle 
\Qrob(\theta)
=
\inf_{\|\xi\|_2\leq \rho(\theta)}
\sup_{v\in V_L(\theta)}
\left\langle v,\nabla L(\theta)-\xi\right\rangle,$
which proves the following equation
\begin{align*}
\Qrob(\theta) =
\inf_{\|\xi\|_2\leq \psi(\|\theta\|_2+D)}
\sup_{v\in V_L(\theta)}
\left\langle v,\nabla L(\theta)-\xi\right\rangle .
\end{align*}
Let us observe that $\displaystyle C(\theta) := \{ \xi \in \mathbb{R}^d, \quad \| \xi\|_2 \leq \psi(\| \theta\|_2 + D) \}$ is nonempty convex and compact. 
Consider the bilinear form
\begin{align*}
\phi_{\theta} :
\begin{cases}
 C(\theta) \times V_L(\theta)  &  \xrightarrow{} \mathbb{R} \\
 (\xi, v ) & \xrightarrow{} \langle v, \nabla L(\theta)- \xi \rangle. 
\end{cases}
\end{align*}
Using von Neumann's minimax Theorem (see also \cite{sion1958general}), we deduce 
\begin{align*}
\inf_{\xi \in C(\theta)} \sup_{v \in V_L(\theta)} \phi_{\theta}(\xi, v ) =  \sup_{v \in V_L(\theta)} \inf_{\xi \in C(\theta)} \phi_{\theta}(\xi, v ).
\end{align*}
Therefore, for $\theta\in\mathbb{R}^d$,
\begin{align*}
\Qrob(\theta) & =
\sup_{v\in V_L(\theta)} \inf_{\|\xi\|_2\leq \psi(\|\theta\|_2+D)}
\left\langle v,\nabla L(\theta)-\xi\right\rangle \\
& = \sup_{v\in V_L(\theta)} \Big( \langle v, \nabla L(\theta) \rangle + \inf_{\xi \in C(\theta)} \left( - \langle v, \xi \rangle \right) \Big) \\ 
& = \sup_{v\in V_L(\theta)} \Big(  \langle v, \nabla L(\theta) \rangle- \psi(\|\theta\|_2+D) \| v\|_2 \Big).
\end{align*}
Hence \Eqref{eq:calQ-gradient-multivalued} holds.
\Cref{lemma:robust-zone-calQ-main} yields the final equivalence in \Cref{thm-calQ}.
\end{proof}

\begin{corollary}[Inner minority aware Zones]
\label{cor-non-emptyness-robust-zones}
Under Assumptions \ref{ass-cost} and \ref{ass-VL} and \Eqref{eqdescent}, we know 
\[  \{ \theta \in \mathbb{R}^d, 0 < \| \nabla L(\theta)\|_2 < \psi(\|\theta \|_2 +  D)\} \subseteq  \Zmin.  \]
\end{corollary}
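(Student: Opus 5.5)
The plan is to apply \Cref{thm-calQ} directly and show that $\Qrob(\theta)<0$ at every point of the displayed set. Fix $\theta$ with $0<\|\nabla L(\theta)\|_2<\psi(\|\theta\|_2+D)$ and write $\rho(\theta):=\psi(\|\theta\|_2+D)$. By \Cref{thm-calQ}, $\theta\in\Zmin$ if and only if
\[
\sup_{v\in V_L(\theta)}\Big(\langle v,\nabla L(\theta)\rangle-\rho(\theta)\|v\|_2\Big)<0 .
\]
So it suffices to bound each term of this supremum and then ensure the supremum itself stays strictly negative.

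First, for each $v\in V_L(\theta)$, Cauchy--Schwarz gives
\[
\langle v,\nabla L(\theta)\rangle-\rho(\theta)\|v\|_2\leq \|v\|_2\big(\|\nabla L(\theta)\|_2-\rho(\theta)\big).
\]
The second factor is strictly negative by hypothesis. Hence the expression is strictly negative as soon as $v\neq 0$.

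Second, I need to rule out $v=0$. Since $\nabla L(\theta)\neq 0$, we have $\mathrm{d}L(\theta)\neq0$. The equivalence in \Eqref{eqdescent} then gives $0\notin V_L(\theta)$, so $\|v\|_2>0$ for every $v\in V_L(\theta)$.

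The main obstacle is passing from pointwise strict negativity to strict negativity of the supremum. I will use \Cref{ass-VL}: $V_L(\theta)$ is nonempty and compact. The map $v\mapsto \langle v,\nabla L(\theta)\rangle-\rho(\theta)\|v\|_2$ is continuous, so the supremum is attained at some $v^\ast\in V_L(\theta)$. It therefore equals a value that is strictly negative by the two previous steps. (Alternatively, $\|v\|_2$ is bounded below by $\min_{V_L(\theta)}\|\cdot\|_2>0$, which gives an explicit negative upper bound.) Therefore $\Qrob(\theta)<0$, and \Cref{lemma:robust-zone-calQ-main} (equivalently, the final statement of \Cref{thm-calQ}) yields $\theta\in\Zmin$, which proves the inclusion.
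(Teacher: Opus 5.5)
Your proof is correct and follows essentially the same route as the paper: Cauchy--Schwarz applied to the formula of \Cref{thm-calQ}, exclusion of $0$ from $V_L(\theta)$ via \Eqref{eqdescent}, and compactness of $V_L(\theta)$ to keep the supremum strictly negative. The paper takes exactly your parenthetical alternative, bounding $\Qrob(\theta)\le m(\theta)\bigl(\|\nabla L(\theta)\|_2-\psi(\|\theta\|_2+D)\bigr)$ with $m(\theta):=\min_{v\in V_L(\theta)}\|v\|_2>0$.
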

\begin{proof}
Let $\theta \in \mathbb{R}^d$ such that 
\[ 0 < \| \nabla L(\theta)\|_2 < \psi(\|\theta \|_2 +  D). \]
Since $\nabla L(\theta) \neq 0$, we know $0 \notin V_L(\theta)$. Therefore, since $V_L(\theta)$ is compact, we can define
\[ m(\theta) := \min_{v \in V_L(\theta)} \| v\|_2 > 0. \]
We know for all $v \in V_L(\theta)$,
\begin{align*}
\langle v, \nabla L(\theta)\rangle - \psi(\|\theta \|_2 +  D) \| v\|_2 & \leq \| v\|_2 \| \nabla L(\theta) \|_2 - \psi(\|\theta \|_2 +  D)\| v\|_2 \\
& = \| v\|_2 \Big( \| \nabla L(\theta) \|_2 - \psi(\|\theta \|_2 +  D) \Big) \\
& \le m(\theta) \Big( \| \nabla L(\theta) \|_2 - \psi(\|\theta \|_2 +  D) \Big).
\end{align*}
Therefore, from \Cref{thm-calQ}, we know :
\begin{align*}
\Qrob(\theta) =\sup_{v \in V_L(\theta)} \langle v, \nabla L(\theta)\rangle - \psi(\|\theta \|_2 +  D)  \| v\|_2 \leq m(\theta) \Big( \| \nabla L(\theta) \|_2 - \psi(\|\theta \|_2 +  D) \Big) < 0,
\end{align*}
which implies $\theta \in \Zmin$.
\end{proof}

\begin{corollary}[Robust zone for norm-gradient fields]
\label{cor-robust-zone-norm-gradient-fields}
Under the assumptions of \Cref{thm-calQ}, one has:
\begin{itemize}
\item If \(V_L(\theta)=\ngradnorm{\|\cdot\|}{L(\theta)}\), then
\[
\Qrob(\theta)
=
\|\mathrm{d}L(\theta)\|_{\ast}
-
\psi(\|\theta\|_2+D) m_{\|\cdot\|}(\theta),
\]
and therefore
\begin{equation}\label{eq:zmaj-grad:norm}
\theta\in\Zmin
\quad\Longleftrightarrow\quad
\|\mathrm{d}L(\theta)\|_{\ast}
<
\psi m_{\|\cdot\|}(\theta)(\|\theta\|_2+D).
\end{equation}
\item If \(V_L(\theta)=\gradnorm{\|\cdot\|}{L(\theta)}\), then
\[
\Qrob(\theta)
=
\|\mathrm{d}L(\theta)\|_{\ast}^2
-
\psi(\|\theta\|_2+D) m_{\|\cdot\|}(\theta)
\|\mathrm{d}L(\theta)\|_{\ast},
\]
and therefore
\begin{equation}\label{eq:zmaj-grad}
\theta\in\Zmin
\quad\Longleftrightarrow\quad
\|\mathrm{d}L(\theta)\|_{\ast}^2
<
\psi m_{\|\cdot\|}(\theta)
\|\mathrm{d}L(\theta)\|_{\ast}(\|\theta\|_2+D).
\end{equation}
\end{itemize}
\end{corollary}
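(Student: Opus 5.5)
The plan is to specialize \Cref{thm-calQ} to the two norm-gradient fields and evaluate the supremum over $V_L(\theta)$ explicitly, using the structural identities \Eqref{eq:l_op}. Throughout, fix $\theta$ and write $\rho(\theta):=\psi(\|\theta\|_2+D)$. By \Cref{thm-calQ},
\[
\Qrob(\theta)=\sup_{v\in V_L(\theta)}\bigl(\langle v,\nabla L(\theta)\rangle-\rho(\theta)\|v\|_2\bigr),
\]
and the equivalences for $\Zmin$ then follow immediately from \Cref{lemma:robust-zone-calQ-main}. So the whole task is computing this supremum.

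\emph{Normalized case.} For every $v\in\ngradnorm{\|\cdot\|}{L(\theta)}=\argmax_{\|z\|\le1}\mathrm{d}L(\theta)[z]$, the linear term is constant: $\langle v,\nabla L(\theta)\rangle=\mathrm{d}L(\theta)[v]=\|\mathrm{d}L(\theta)\|_\ast$. Hence
\[
\Qrob(\theta)=\|\mathrm{d}L(\theta)\|_\ast-\rho(\theta)\inf_{v}\|v\|_2=\|\mathrm{d}L(\theta)\|_\ast-\rho(\theta)\,m_{\|\cdot\|}(\theta),
\]
by the definition \Eqref{mtheta-cnorm} of $m_{\|\cdot\|}$. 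Since the argmax set is compact, the infimum is attained, although this is not needed.

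\emph{Unnormalized case.} I will first check that $\gradnorm{\|\cdot\|}{L(\theta)}=\|\mathrm{d}L(\theta)\|_\ast\,\ngradnorm{\|\cdot\|}{L(\theta)}$. When $\mathrm{d}L(\theta)\neq0$ this holds by homogeneity, since maximizing a linear form over the ball of radius $r$ is the same as rescaling the unit-ball maximizers by $r$. When $\mathrm{d}L(\theta)=0$, both sides equal $\{0\}$, because the ball has radius zero. By \Eqref{eq:l_op}, every such $v$ satisfies $\langle v,\nabla L(\theta)\rangle=\|\mathrm{d}L(\theta)\|_\ast^2$. Moreover, $\inf\|v\|_2=\|\mathrm{d}L(\theta)\|_\ast\,m_{\|\cdot\|}(\theta)$. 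Together these give the stated formula for $\Qrob$.

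The only delicate point is the degenerate case $\mathrm{d}L(\theta)=0$. There $\ngradnorm{\|\cdot\|}{L(\theta)}$ is the whole unit ball, so $m_{\|\cdot\|}(\theta)=0$, and both formulas give $\Qrob(\theta)=0$. This is consistent with a direct evaluation: in the normalized case $\sup_{\|v\|\le1}(-\rho\|v\|_2)=0$, attained at $v=0$, and in the unnormalized case $V_L=\{0\}$. In both cases $\theta\notin\Zmin$, and the strict inequalities in \Eqref{eq:zmaj-grad:norm} and \Eqref{eq:zmaj-grad} fail as well, so the equivalences hold there too.
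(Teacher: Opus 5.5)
Your proposal is correct and follows the paper's proof. You specialize \Cref{thm-calQ}, use \Eqref{eq:l_op} to make the linear term constant on $V_L(\theta)$ so that the supremum reduces to $-\psi(\|\theta\|_2+D)$ times the infimum of $\|v\|_2$, and in the unnormalized case rescale the normalized set by $\|\mathrm{d}L(\theta)\|_{\ast}$ (what the paper calls a ``change of variable''). Your explicit check of the critical-point case $\mathrm{d}L(\theta)=0$, where $m_{\|\cdot\|}(\theta)=0$ and both formulas give $\Qrob(\theta)=0$, is a small addition that the paper leaves implicit.
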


Note that in Equations \ref{eq:zmaj-grad:norm} and \ref{eq:zmaj-grad},
the zones do not change. Indeed the strict negativity in \Eqref{eq:Zmaj-Q} implies that the zones do not contain critical points, so that we may divide \Eqref{eq:zmaj-grad} by $\|dL(\theta)\|_{\ast}$.

\begin{proof} Fix \(\theta \in \mathbb{R}^d\) and set
\(\rho:=\psi(\|\theta\|_2+D)\) and \(N:=\ngradnorm{\|\cdot\|}{L(\theta)}\).
For the normalized field in \Eqref{eq:zmaj-grad:norm}, 
by \Cref{thm-calQ}, and from the fact that $\mathrm{d}L(\theta)[N]=\{\|\mathrm{d}L(\theta)\|_{\ast}\}$ (see \Eqref{eq:l_op}), one has 
\begin{align*}
\Qrob(\theta)
& = \sup_{v\in N}  \Big( \langle v, \nabla L(\theta) \rangle- \psi(\|\theta\|_2+D) \| v\|_2 \Big) \\
& = \|\mathrm{d}L(\theta)\|_{\ast} -
\psi(\|\theta\|_2+D) \inf_{v\in N} \| v\|_2 \\
& = \|\mathrm{d}L(\theta)\|_{\ast} -
\psi(\|\theta\|_2+D) m_{\|\cdot\|}(\theta).
\end{align*}
For the norm-gradient field in \Eqref{eq:zmaj-grad}, using \Eqref{eq:l_op} and a change of variable argument,
\begin{align*}
\Qrob(\theta)
& = \sup_{v\in \|\mathrm{d}L(\theta)\|_{\ast} N}  \Big( \langle v, \nabla L(\theta) \rangle- \psi(\|\theta\|_2+D) \| v\|_2 \Big) \\
& = 
\|\mathrm{d}L(\theta)\|^2_{\ast} -
\psi(\|\theta\|_2+D) \inf_{v\in \|\mathrm{d}L(\theta)\|_{\ast} N} \| v\|_2 \\
& = 
\|\mathrm{d}L(\theta)\|^2_{\ast} -
\psi(\|\theta\|_2+D) m_{\|\cdot\|}(\theta) \|\mathrm{d}L(\theta)\|_{\ast}.
\end{align*}
The two characterizations of \(\Zmin\), Equations \ref{eq:zmaj-grad:norm} and \ref{eq:zmaj-grad}, follow from
\Cref{lemma:robust-zone-calQ-main}.
\end{proof}

Now we present different minority aware zones.

\paragraph{Gradient descent}
For gradient descent $V_L=\nabla L$,
\Cref{thm-calQ} gives
\begin{equation}
\label{eq:calQ:gd}
\Qrob_{\rm GD}(\theta)
=
\|\nabla L(\theta)\|_2^2-\psi(\|\theta\|_2+D)\|\nabla L(\theta)\|_2 .
\end{equation}
Therefore
\begin{equation}
\label{eq:robust-zone-gd}
\Zmin^{\rm GD}
=
\left\{\theta\in\mathbb{R}^d:
\|\nabla L(\theta)\|_2^2
<
\psi\|\nabla L(\theta)\|_2(\|\theta\|_2+D)
\right\}.
\end{equation}

\paragraph{Gradient clipping}
Using the notation of \Cref{subsection:learning:with:descent}, \Cref{thm-calQ} gives
\begin{equation}
\label{eq:calQ:clip}
\Qrob_{\rm clip}(\theta)
=
\| \Clip_{\tau_{\rm clip}}(\nabla L(\theta))\|_2 \left( \|\nabla L(\theta)\|_2 -
\psi(\|\theta\|_2+D) \right).
\end{equation}
Consequently,
\begin{equation}
\label{eq:robust-zone-clip}
\Zmin^{\rm clip}
=
\Zmin^{\rm GD}.
\end{equation}
Thus clipping merely changes velocities but not the minority aware zone.

\paragraph{Sign descent}
For the supremum norm (\(\|\cdot\|=\|\cdot\|_\infty\)), 
letting $|\supp(v)|$ be the number of non-zero components of a vector $v$,
we get
\begin{equation}
\label{eq:calQ:q-sd}
\Qrob_{\rm SD}(\theta)
=
\|\nabla L(\theta)\|_1
-
\psi\sqrt{|\supp(\nabla L(\theta))|}(\|\theta\|_2+D).
\end{equation}
Thus 
\begin{equation}
\label{eq:robust-zone-sd}
\Zmin^{\rm SD}
=
\left\{\theta\in\mathbb{R}^d:
\|\nabla L(\theta)\|_1
<
\psi\sqrt{|\supp(\nabla L(\theta))|}(\|\theta\|_2+D)
\right\}.
\end{equation}

\paragraph{Spectral descent}
Let $\nabla L(\Theta)=U\Sigma V^\top$ be a compact singular value decomposition, as in \Cref{subsection:learning:with:descent}. Combining
\Eqref{eq:operator-norm-gradient} with \Eqref{eq:calQ-gradient-multivalued}
gives
\begin{eqnarray}
\Qrob_{\rm Spec}(\Theta)
 & =& 
\|\nabla L(\Theta)\|_{\rm nuc}- \psi(\|\Theta\|_F+D) \inf_{Z\in \ngradnorm{\|\cdot\|_{\rm sp}}{ L(\Theta)}}
\|  Z \|_F \\
& = & 
\|\nabla L(\Theta)\|_{\rm nuc}-\psi(\|\Theta\|_F+D)\sqrt{\operatorname{rank}\nabla L(\Theta)}.\label{eq:calQ:spectral}
\end{eqnarray}
The full proof of \Eqref{eq:calQ:spectral} is below.
Thus
\begin{equation}
\label{eq:robust-zone-spectral}
\Zmin^{\rm Spec}
=
\left\{\Theta\in\mathbb{R}^{a \times b}:
\|\nabla L(\Theta)\|_{\mathrm{nuc}}
<
\psi\sqrt{\operatorname{rank}(\nabla L(\Theta))}(\|\Theta\|_F+D) \right\}.
\end{equation}

\paragraph{Newton descent}
For Newton descent, assuming in addition \Cref{ass-newton},
we have, by \Cref{thm-calQ}, 
\begin{equation}
\label{eq:calQ:newton}
\Qrob_{\rm N}(\theta)
=
\left\langle v_N(\theta),\nabla L(\theta)\right\rangle
-
\psi\|v_N(\theta)\|_2(\|\theta\|_2+D),
\end{equation}
with $v_N(\theta) := \nabla^2L(\theta)^{-1}\nabla L(\theta)$ for all $\theta$.

\begin{proof}[Proof of the expression of $\Qrob$ for Spectral Descent (\Eqref{eq:calQ:spectral})]
To establish the formula it suffices to apply \Cref{cor-robust-zone-norm-gradient-fields}, i.e. it suffices to compute  
\[
m_{\|\cdot\|}(\Theta):=
\operatorname{dist}\bigl(0,\ngradnorm{\|\cdot\|}{L(\Theta)} \bigr)=\operatorname{dist}\bigl(0,\partial \|\nabla L(\Theta)\|_{\rm nuc} \bigr).
\] Set
\(G=\nabla L(\Theta)\) and \(r=\mbox{rank } G\), then in view of the previous developments
\[
\partial \|G\|_{\rm nuc}
=
\{UV^\top+W:\ U^\top W=0,\ WV=0,\ \|W\|_{\rm sp}\leq 1\}.
\]
Hence
\[
\|UV^\top+W\|_F^2
=
\|UV^\top\|_F^2+\|W\|_F^2
=
r+\|W\|_F^2,
\]
where the cross term vanishes by the orthogonality constraints. Therefore
\[
\operatorname{dist}\bigl(0,\partial\|G\|_{\rm nuc}\bigr)
=
\min_{Z\in\partial\|G\|_{\rm nuc}}\|Z\|_F
=
\|UV^\top\|_F
=
\sqrt r.
\]

Consequently,
\begin{equation}
\Theta\in\Zmin^{\rm Spec}
\quad\Longleftrightarrow\quad
\|\nabla L(\Theta)\|_{\rm nuc}
<
\psi\sqrt{\operatorname{rank}(\nabla L(\Theta))}(\|\Theta\|_F+D).
\end{equation}
\end{proof}

\subsection{Proofs of \Cref{section:training:times}}
\label{sec-appendix-c-3}

\Cref{table-reference-entry-time} indicates the references for the upper bounds and lower bounds for each mechanism studied in \Cref{section:training:times}.
\begin{table}[H]
    \centering
    \begin{tabular}{lcc}
        \toprule
         \textbf{Dynamics} &  \textbf{Upper bounds} &  \textbf{Lower bounds}  \\
        \midrule
         Gradient descent & \Cref{sec:appendix-entry-time-applications} & \Cref{th:LB:min:awa}\\
        Gradient clipping & \Cref{thm-clip-entry-time-upper} & $\times $\\ 
        Sign descent & \Cref{sec:appendix-entry-time-applications} & \Cref{th:LB:min:awa:sign}\\
        Spectral descent & \Cref{sec:appendix-entry-time-applications} & \Cref{th:LB:min:awa:spec}  \\
        Newton descent & \Cref{thm-newton-entry-time-upper} & $\times $\\
        \bottomrule
    \end{tabular}
    \caption{References for the entry times in \Cref{section:training:times}.}
    \label{table-reference-entry-time}
\end{table}

\subsubsection{Preliminaries}

\begin{lemma}[Entry-time  for multivalued dynamics]
\label{lemma:before:tentry}
Under \Cref{ass-cost,ass-VL}, let $\theta$ be an absolutely continuous solution of
\Eqref{eq-dynamical-system} with $V_L$ satisfying the descent conditions, Equations \ref{eqdescent} and \ref{eqdescent1}. 
For all $t$ such that $\theta(t)$ is not in the minority aware zone, we have the following property, for all $L_0\in \Kmin$,
\[ \sup_{v \in V_L(\theta(t))} \langle v, \nabla L_1(\theta(t))\rangle  \geq 0. \] 
For almost all $t$ such that $\theta(t)$ is in the minority aware zone, there exists a decomposition $L=L_1+L_0$, with $L_0\in \Kmin$, $L_1\in\Kmax$, such that 
\begin{align*}
 \frac{d}{dt}L_1(\theta(t)) \geq - \sup_{v \in V_L(\theta(t))} \langle v, \nabla L_1(\theta(t))\rangle  >0, \quad \frac{d}{dt}L_0(\theta(t))<0 .
\end{align*}
\end{lemma}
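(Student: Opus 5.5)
The plan has two parts, one for each claim, and both come directly from the definitions of $\Zmaj(L_0)$ and $\Zmin$, together with the chain rule along absolutely continuous trajectories.

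\emph{First claim.} Suppose $\theta(t)\notin\Zmin$. By \Cref{def-RMaj}, $\theta(t)\notin\Zmaj(L_0)$ for every $L_0\in\Kmin$. Negating the strict inequality in \Eqref{eq:Zmaj-Q} then gives
\[
\sup_{v\in V_L(\theta(t))}\langle v,\nabla L_1(\theta(t))\rangle\geq 0
\]
for every $L_0\in\Kmin$, where $L_1=L-L_0$. This needs nothing beyond the definitions; equivalently, it follows from $\Qrob(\theta(t))\ge 0$ via \Cref{lemma:robust-zone-calQ-main}.

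\emph{Second claim.} I restrict to the full-measure set of times $t$ at which three things hold simultaneously: $\theta$ is differentiable, $\dot\theta(t)=-v(t)$ with $v(t)\in V_L(\theta(t))$, and the chain rule applies to the $C^1$ functions $L$, $L_0$, $L_1$. Fix such a $t$ with $\theta(t)\in\Zmin$. The union in \Cref{def-RMaj} provides some $L_0\in\Kmin$ with $\theta(t)\in\Zmaj(L_0)$. Set $L_1=L-L_0$, which lies in $\Kmax(L)$ by \Eqref{eq:Kmax}. Then
\[
\frac{d}{dt}L_1(\theta(t))=-\langle v(t),\nabla L_1(\theta(t))\rangle\geq -\sup_{v\in V_L(\theta(t))}\langle v,\nabla L_1(\theta(t))\rangle ,
\]
and the right-hand side is strictly positive precisely because $\theta(t)\in\Zmaj(L_0)$. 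For the minority loss I will reuse the computation in the proof of \Cref{lemma-majority-increase}:
\[
\frac{d}{dt}L_0(\theta(t))=-\langle v(t),\nabla L(\theta(t))\rangle+\langle v(t),\nabla L_1(\theta(t))\rangle .
\]
The first term is $\le 0$ by the descent conditions \Eqref{eqdescent}--\Eqref{eqdescent1}. If $\nabla L(\theta(t))=0$ this term is $0$; otherwise it is strictly negative. The second term is strictly negative by membership in $\Zmaj(L_0)$. Hence $\frac{d}{dt}L_0(\theta(t))<0$.

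\emph{Main obstacle.} The only delicate point is bookkeeping. The decomposition $L_0$ depends on $t$, so the statement must be read pointwise in $t$, and the "almost all $t$" must be tied to the null set coming from absolute continuity, not to $L_0$. The chain rule for $L_0\circ\theta$ holds at every differentiability point of $\theta$, because $L_0$ is $C^1$. So a single null set, namely the non-differentiability points of $\theta$ together with the times where the inclusion fails, serves for every choice of $L_0$, and no measurability issue arises from choosing $L_0$ depending on $t$.
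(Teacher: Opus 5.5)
Your proposal is correct and follows the paper's proof essentially line by line. The first claim is the negation of the strict inequality defining $\Zmaj(L_0)$ for every $L_0\in\Kmin$; the paper phrases this as $\inf_{L_0\in\Kmin}Q_{L_0}(\theta(t))\geq 0$. The second claim uses the chain rule at a differentiability time with the selected velocity $v(t)$, membership in some $\Zmaj(L_0)$, and the descent condition for the sign of $\frac{d}{dt}L_0(\theta(t))$. Your remark that one null set, independent of the $t$-dependent choice of $L_0$, works for every decomposition is a correct clarification that the paper leaves implicit.
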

\begin{proof}
For the first inequality, by definition, 
\begin{align*}
\theta(t) \not\in \Zmin \Leftrightarrow \inf_{L_0 \in \Kmin} Q_{L_0}(\theta(t)) = \inf_{L_0 \in \Kmin}  \sup_{v \in V_L(\theta(t))} \langle v, \nabla L_1(\theta(t)) \rangle \geq 0.
\end{align*}
For the second inequality,  denote by $v(t)\in V_L(\theta(t))$ the selected velocity, i.e., $\dot \theta(t)=-v(t)$. 
Fix a time \(t\) like in the assumptions at which $\theta$ is differentiable with derivative coinciding with $-v(t)$. Since \(\theta(t)\in\Zmaj(L_0)\) for some $L_0$, then 
$\langle v(t),\nabla L_1(\theta(t))\rangle<0.$
Thus
\[
\frac{d}{dt}L_1(\theta(t))
=
\left\langle\nabla L_1(\theta(t)),\dot\theta(t)\right\rangle
=
-\left\langle v(t),\nabla L_1(\theta(t))\right\rangle
\geq - \sup_{v \in V_L(\theta(t))} \langle v, \nabla L_1(\theta(t))\rangle 
>0.
\]
Since \(V_L\) satisfies \Eqref{eqdescent1},
one has 
$
\frac{d}{dt}L_0(\theta(t))
=
-\left\langle v(t),\nabla L(\theta(t))\right\rangle
+\left\langle v(t),\nabla L_1(\theta(t))\right\rangle
<0.
$
\end{proof}

\begin{lemma}[Lower bound of $m_{\|\cdot\|}$]
\label{lem-lower-bound-m}
Consider an arbitrary norm $\| \cdot\|$ on $\mathbb{R}^d$. We recall from \Cref{mtheta-cnorm} that
\[ c_{\| \cdot \|} := \inf_{v \in \mathbb{R}^d /\{ 0\}} \frac{\|v\|_2}{\| v\|}. \]
Then we know that for all $\theta$ such that $\nabla L(\theta) \neq 0$,
\[ m_{\|\cdot\|}(\theta) \geq  c_{\| \cdot \|}  > 0. \] 
\end{lemma}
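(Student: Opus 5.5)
The claim splits into two parts: first that $c_{\|\cdot\|}>0$, and then that $m_{\|\cdot\|}(\theta)\ge c_{\|\cdot\|}$ whenever $\nabla L(\theta)\neq 0$.

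For positivity of $c_{\|\cdot\|}$ we use the standard compactness argument. The ratio $v\mapsto \|v\|_2/\|v\|$ is invariant under positive scaling, so
\[
c_{\|\cdot\|}=\inf_{\|v\|=1}\|v\|_2 .
\]
The unit sphere of $\|\cdot\|$ is closed and bounded in finite dimension, hence compact. The map $v\mapsto\|v\|_2$ is continuous and does not vanish on this sphere, so the infimum is attained and is strictly positive. Equivalently, by equivalence of norms there is $C>0$ with $\|v\|\le C\|v\|_2$, and then $c_{\|\cdot\|}\ge 1/C$.

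For the lower bound on $m_{\|\cdot\|}(\theta)$, fix $\theta$ with $\nabla L(\theta)\neq0$, so that $\|\mathrm{d}L(\theta)\|_\ast>0$. Take any $v\in\ngradnorm{\|\cdot\|}{L(\theta)}$. By definition $v$ maximizes $\mathrm{d}L(\theta)[z]$ over $\|z\|\le1$, so
\[
\mathrm{d}L(\theta)[v]=\|\mathrm{d}L(\theta)\|_\ast>0 .
\]
The main step is to show $\|v\|=1$. Since the maximal value is positive, $v\neq0$. Suppose $\|v\|<1$. Then $v/\|v\|$ is feasible and
\[
\mathrm{d}L(\theta)\bigl[v/\|v\|\bigr]=\|\mathrm{d}L(\theta)\|_\ast/\|v\|>\|\mathrm{d}L(\theta)\|_\ast ,
\]
which contradicts maximality. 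Hence $\|v\|=1$, and therefore
\[
\|v\|_2=\frac{\|v\|_2}{\|v\|}\ge c_{\|\cdot\|}.
\]

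Taking the infimum over $v\in\ngradnorm{\|\cdot\|}{L(\theta)}$ gives $m_{\|\cdot\|}(\theta)\ge c_{\|\cdot\|}$. This set is nonempty and compact, as noted before \Cref{cor:global-existence-norm-gradient}, so the infimum is well defined.

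The only real obstacle is the step showing that every normalized gradient direction lies on the unit sphere rather than strictly inside the ball. That step depends on $\nabla L(\theta)\neq0$: at a critical point $0$ belongs to the argmax set and the bound fails.
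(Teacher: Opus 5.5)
Your proof is correct and takes the same route as the paper: show that, when $\nabla L(\theta)\neq 0$, every normalized direction lies on the unit sphere of $\|\cdot\|$, then use $\|v\|_2\geq c_{\|\cdot\|}\|v\|$ and take the infimum. Your version is slightly more careful than the paper's. The paper only observes that the maximum is attained on the boundary, whereas the infimum defining $m_{\|\cdot\|}(\theta)$ needs \emph{every} maximizer to have norm one, which is exactly what your rescaling argument gives; the paper also asserts $c_{\|\cdot\|}>0$ without the compactness argument you supply.
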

\begin{proof}
Let $\theta \in \mathbb{R}^d$ such that $\nabla L(\theta) \neq 0$. Recall that
\begin{align*}
m_{\|\cdot\|} (\theta) := \inf_{v\in \ngradnorm{\|\cdot\|}{L(\theta)}}\|v\|_2.
\end{align*}
Since $\nabla L(\theta)\neq 0$, the linear function
\begin{align*}
\phi : \begin{cases}
\{ v, \| v\|\leq 1 \} &\xrightarrow{} \mathbb{R}  \\
z & \mapsto \langle \nabla L(\theta),z \rangle 
\end{cases}
\end{align*}
reaches its maximum over the boundary $ \| v\|=1$. 
For all $v\in \ngradnorm{\|\cdot\|}{L(\theta)}$ such that $\| v\|=1$, we know 
\begin{align*}
    \| v \|_2 \geq c_{\| \cdot \|} \| v\| = c_{\| \cdot \|}. 
\end{align*}
We conclude the proof by taking the $\inf$ over $v$.
\end{proof}

\subsubsection{Proofs of \Cref{thm-ngrad-entry-time-upper-1,thm-gradnorm-entry-time-upper-1}}
\label{Proof-entry-time-1}
\paragraph{Common pre-entry estimates}
We use the following elementary estimates in the entry-time proofs.
\begin{itemize}
\item The constant \(\lambda_{\max}\) is any constant satisfying
\Cref{assumption-l-smooth}; for a \(C^2\) loss, the Lipschitz-gradient part is
ensured by
\[
\lambda_{\max}\geq
\sup_{\theta\in\mathbb{R}^d}
\sigma_{\max}
\bigl(\nabla^2L(\theta)\bigr).
\]
The parameter \(\psi\) is the granularity of the admissible minority class
\(\Kmin\), and \(D\) is the offset scale in the bound
\(\|\nabla L_0(\theta)\|_2\leq\psi(\|\theta\|_2+D)\).
\item We know from \Cref{cor-non-emptyness-robust-zones} that $\|\nabla L(\theta) \| < \psi D \implies \theta \in \Zmin$. By the continuity of $\theta \xrightarrow{} \| \nabla L(\theta)\|$, we deduce that if $\nabla L(\thetaInit) \neq 0 $, there is no critical point before the entry time. As consequence, from \Cref{lem-lower-bound-m}, $m_{\| \cdot \|}$ is greater than $c_{\| \cdot\|}$ before the entry time as well. 
\item Recall that 
\[
m_{\|\cdot\|}(\theta):=
\operatorname{dist}\bigl(0,\ngradnorm{\|\cdot\|}{L(\theta)}\bigr)
=
\inf_{v\in \ngradnorm{\|\cdot\|}{L(\theta)}}\|v\|_2 .
\]
For the normalized gradient $\ngradnorm{\|\cdot\|}{L(\theta)}$ dynamics, the cost encoding the minority aware zone is 
\[
\Qrob_{\|\cdot\|}(\theta)
=
\|\mathrm{d}L(\theta)\|_{\ast}
-
\psi m_{\|\cdot\|}(\theta)(\|\theta\|_2+D).
\]

For the normal gradient  \(\gradnorm{\|\cdot\|}{L(\theta)}\), the cost is 
\[
\Qrob(\theta)
=
\|\mathrm{d}L(\theta)\|_{\ast}^2
-
\psi(\|\theta\|_2+D) m_{\|\cdot\|}(\theta)
\|\mathrm{d}L(\theta)\|_{\ast}.
\]

Consequently, before entry,
\begin{equation}\label{preentrybound-1}
\|\mathrm{d}L(\theta(t))\|_{\ast}
\geq
\psi \, m_{\|\cdot\|}(\theta(t))(\|\theta(t)\|_2+D)
\geq \psi D m_{\|\cdot\|}(\theta(t)).
\end{equation}
\end{itemize}
We note that from \Cref{assumption-l-smooth}, one has the following lower bound for all \(\theta\)
\begin{equation}\label{eq:Lthetabound}
\|\theta\|_2+D
\geq
\sqrt{\frac{L(\theta)-L_\star}{\lambda_{\max}}}.
\end{equation}
Consequently, before entry,
\begin{equation}\label{preentrybound-2}
\|\mathrm{d}L(\theta(t))\|_{\ast}
\geq
\psi \, m_{\|\cdot\|}(\theta(t))(\|\theta(t)\|_2+D)
\geq
\frac{\psi m_{\|\cdot\|}(\theta(t))}{\sqrt{\lambda_{\max}}}
\sqrt{L(\theta(t))-L_\star}.
\end{equation}

\textbf{Two differential inequalities}

Let
\[
G(t):=L(\theta(t))-L_\star
\quad (t\geq 0),
\qquad
C:=
\frac{L(\thetaInit)-L_\star}
{\lambda_{\max}D^2}.
\]
Recall from \Eqref{eq:maverage} that
\[
\bar m(t):=
\frac{1}{t}
\int_0^t
m_{\|\cdot\|}(\theta(s))\,ds
\quad (t>0).
\]
Since there are no critical points before the entry time, one has
\[
G(t)>0
\quad\text{for all }t<\timeentry(\theta).
\]
Indeed, \(G(t)=0\) would imply that \(\theta(t)\) is a global minimizer
of \(L\), and hence that \(\nabla L(\theta(t))=0\).

\textbf{a) Normalized norm-gradient.}

Assume
\[
\dot\theta(t)\in
-\ngradnorm{\|\cdot\|}{L(\theta(t))}
\]
and that
\[
\nabla L(\thetaInit)\neq 0.
\]
Set
\[
T:=\timeentry(\theta).
\]
By the common pre-entry estimates above, there is no critical point
before the entry time. Consequently, by
\Cref{lem-lower-bound-m},
\[
m_{\|\cdot\|}(\theta(t))
\geq
c_{\|\cdot\|}
>0
\quad\text{for all }t<T.
\]

For almost every \(t<T\), the normalized norm-gradient dynamics gives
\[
-G'(t)
=
\|\mathrm{d}L(\theta(t))\|_{\ast}.
\]
Hence, by \Eqref{preentrybound-1},
\[
-G'(t)
\geq
\psi D
m_{\|\cdot\|}(\theta(t)).
\]
Therefore, for every \(t\) such that \(0<t<T\),
\[
\begin{aligned}
G(0)-G(t)
&\geq
\psi D
\int_0^t
m_{\|\cdot\|}(\theta(s))\,ds
\\
&=
\psi D t\bar m(t).
\end{aligned}
\]
Since \(G(t)\geq 0\), it follows that
\begin{equation}
\label{eq:tbarmt}
t\bar m(t)
\leq
\frac{G(0)}{\psi D}.
\end{equation}

Moreover, from \Cref{lem-lower-bound-m},
\[
\bar m(t)
=
\frac{1}{t}
\int_0^t
m_{\|\cdot\|}(\theta(s))\,ds
\geq
c_{\|\cdot\|}.
\]
The entry time \(T\) must therefore be finite. Indeed, if
\(T=+\infty\), then
\[
t c_{\|\cdot\|}
\leq
\frac{G(0)}{\psi D}
\]
would hold for every \(t>0\), which is impossible.

If \(T=0\), the theorem holds. Suppose now
that \(T>0\). Letting \(t\) increase to \(T\) in the preceding
inequality, Equation \ref{eq:tbarmt}, gives
\[
\int_0^T
m_{\|\cdot\|}(\theta(s))\,ds
\leq
\frac{G(0)}{\psi D}.
\]
By the definition of \(\bar M_\theta\) in \Eqref{eq:barMtheta},
\[
T\bar M_\theta
\leq
\frac{G(0)}{\psi D}.
\]
Since
\[
G(0)=L(\thetaInit)-L_\star,
\]
we obtain
\[
\timeentry(\theta)
\leq
\frac{L(\thetaInit)-L_\star}
{\psi D\bar M_\theta}.
\]

Now suppose that \Cref{assumption-l-smooth} also holds. By
\Eqref{preentrybound-2}, for almost every \(t<T\),
\[
-G'(t)
=
\|\mathrm{d}L(\theta(t))\|_{\ast}
\geq
\frac{\psi
m_{\|\cdot\|}(\theta(t))}
{\sqrt{\lambda_{\max}}}
\sqrt{G(t)}.
\]
Recall that
\[
G(t)>0
\quad\text{for all }t<T.
\]
Thus, for almost every \(t<T\),
\[
\begin{aligned}
\frac{d}{dt}\sqrt{G(t)}
&=
\frac{G'(t)}
{2\sqrt{G(t)}}
\\
&\leq
-\frac{\psi}
{2\sqrt{\lambda_{\max}}}
m_{\|\cdot\|}(\theta(t)).
\end{aligned}
\]
Integrating over \([0,t]\), for every \(t\) such that \(0<t<T\),
gives
\[
\sqrt{G(t)}
\leq
\sqrt{G(0)}
-
\frac{\psi}
{2\sqrt{\lambda_{\max}}}
\int_0^t
m_{\|\cdot\|}(\theta(s))\,ds.
\]
Since \(G(t)\geq 0\), we obtain
\[
\int_0^t
m_{\|\cdot\|}(\theta(s))\,ds
\leq
\frac{2\sqrt{\lambda_{\max}}}{\psi}
\sqrt{G(0)}.
\]
Letting \(t\) increase to \(T\) gives
\[
T\bar M_\theta
\leq
\frac{2\sqrt{\lambda_{\max}}}{\psi}
\sqrt{G(0)}.
\]
Consequently,
\[
\timeentry(\theta)
\leq
\frac{2\sqrt{\lambda_{\max}}}
{\psi\bar M_\theta}
\sqrt{L(\thetaInit)-L_\star}.
\]

\textbf{b) Norm-gradient without normalization.}
Assume
\[
\dot\theta(t)\in-\gradnorm{\|\cdot\|}{L(\theta(t))}
\]
and that $\nabla L(\thetaInit) \neq 0$. We get for $t < \timeentry(\theta)$
\[
\|\mathrm{d}L(\theta(t))\|_{\ast}^2
\geq
\psi^2 D^2  m_{\|\cdot\|}(\theta(t))^2. 
\]
For a.e. \(t<\timeentry(\theta)\),  \Cref{eq-descent-grad} gives
\[
-G'(t)
=
\|\mathrm{d}L(\theta(t))\|_{\ast}^2
\geq
\psi^2 D^2
m_{\|\cdot\|}(\theta(t))^2.
\]
By Jensen's inequality,
\[
\int_0^t
m_{\|\cdot\|}(\theta(s))^2\,ds
\geq
t\bar m(t)^2 \geq t c_{\| \cdot\|}^2 .
\]
Since $G(t) \geq 0$, we obtain that $\timeentry$ is finite and
\begin{equation}
\label{eq-gradient-without-normalization-1}
\bar M_\theta^2  \timeentry \leq \frac{G(0)}{\psi^2 D^2 }.
\end{equation}

Now suppose that \Cref{assumption-l-smooth} holds. Set
\[
T:=\timeentry(\theta)
\qquad\text{and}\qquad
C:=
\frac{G(0)}{\lambda_{\max}D^2}
=
\frac{L(\thetaInit)-L_\star}
{\lambda_{\max}D^2}.
\]
Recall that \(C>0\), since
\(\nabla L(\thetaInit)\neq 0\).

Let \(\beta\in(0,2]\). For almost every \(t<T\), using
Equations 
\ref{preentrybound-1} and \ref{preentrybound-2}, we obtain
\[
\begin{aligned}
-G'(t)
&=
\|\mathrm{d}L(\theta(t))\|_{\ast}^2
\\
&=
\|\mathrm{d}L(\theta(t))\|_{\ast}^{2-\beta}
\|\mathrm{d}L(\theta(t))\|_{\ast}^{\beta}
\\
&\geq
\left(
\frac{\psi m_{\|\cdot\|}(\theta(t))}
{\sqrt{\lambda_{\max}}}
\sqrt{G(t)}
\right)^{2-\beta}
\left(
\psi Dm_{\|\cdot\|}(\theta(t))
\right)^{\beta}
\\
&=
\frac{\psi^2}{\lambda_{\max}}
\left(\lambda_{\max}D^2\right)^{\beta/2}
m_{\|\cdot\|}(\theta(t))^2
G(t)^{1-\beta/2}.
\end{aligned}
\]
Since \(G(t)>0\) for every \(t<T\), it follows that, for almost every
\(t<T\),
\[
\frac{d}{dt}G(t)^{\beta/2}
=
\frac{\beta}{2}
G(t)^{\beta/2-1}G'(t)
\leq
-\frac{\beta\psi^2}{2\lambda_{\max}}
\left(\lambda_{\max}D^2\right)^{\beta/2}
m_{\|\cdot\|}(\theta(t))^2.
\]
Therefore, for every \(t<T\),
\[
\begin{aligned}
G(t)^{\beta/2}
&\leq
G(0)^{\beta/2}
\\
&\quad
-\frac{\beta\psi^2}{2\lambda_{\max}}
\left(\lambda_{\max}D^2\right)^{\beta/2}
\int_0^t
m_{\|\cdot\|}(\theta(s))^2\,ds.
\end{aligned}
\]
Since \(G(t)\geq0\), this gives
\[
\int_0^t
m_{\|\cdot\|}(\theta(s))^2\,ds
\leq
\frac{2\lambda_{\max}}{\beta\psi^2}
C^{\beta/2}.
\]
Letting \(t\) increase to \(T\), and using Jensen's inequality, we obtain
\[
\begin{aligned}
T\bar M_\theta^2
&=
\frac{1}{T}
\left(
\int_0^T
m_{\|\cdot\|}(\theta(s))\,ds
\right)^2
\\
&\leq
\int_0^T
m_{\|\cdot\|}(\theta(s))^2\,ds
\\
&\leq
\frac{2\lambda_{\max}}{\beta\psi^2}
C^{\beta/2}.
\end{aligned}
\]
Consequently, for every \(\beta\in(0,2]\),
\[
T
\leq
\frac{\lambda_{\max}}
{\psi^2\bar M_\theta^2}
\frac{2C^{\beta/2}}{\beta}.
\]

It remains to minimize the function
\[
f_C(\beta):=
\frac{2C^{\beta/2}}{\beta},
\qquad
\beta\in(0,2].
\]
Its derivative is
\[
f_C'(\beta)
=
\frac{2C^{\beta/2}}{\beta^2}
\left(
\frac{\beta}{2}\log C-1
\right).
\]
If \(0<C\leq e\), the minimum of \(f_C\) over \((0,2]\) is attained at
\(\beta=2\), and
\[
\inf_{\beta\in(0,2]}f_C(\beta)
=
f_C(2)
=
C.
\]
If \(C>e\), the minimum is attained at
\[
\beta_\star=
\frac{2}{\log C}\in(0,2),
\]
and
\[
\inf_{\beta\in(0,2]}f_C(\beta)
=
f_C(\beta_\star)
=
e\log C.
\]
We have thus proved the piecewise bound
\[
\timeentry(\theta)
\leq
\frac{\lambda_{\max}}
{\psi^2\bar M_\theta^2}
\begin{cases}
C,
&
0<C\leq e,
\\[1mm]
e\log C,
&
C>e.
\end{cases}
\]

\subsubsection{Clipped gradient descent}
\label{sec:appendix-entry-time-clipped}
\begin{theorem}[Upper bound of entry time for clipped gradient descent]
\label{thm-clip-entry-time-upper}
Suppose that \Cref{ass-cost,ass-VL} hold and consider the clipped gradient flow
\begin{align*}
\theta(0) &= \thetaInit \in \mathbb{R}^d,\\
\forall t \ge 0, \quad \dot{\theta}(t)
&=
- \Clip_{\tau_{\rm clip}}(\nabla L(\theta(t))),
\end{align*}
with threshold \(\tau_{\rm clip}>0\).
Assume that $\nabla L (\thetaInit) \neq 0$.
Then, 
\begin{equation} 
\label{eq:claim:clipped:one}
\timeentry(\theta)
\leq
\frac{L(\thetaInit)-L_\star}
{\min\{\psi^2D^2,\tau_{\rm clip}\psi D\}}.
\end{equation}
If \Cref{assumption-l-smooth} holds, we obtain that 
\begin{align}
\label{eq:claim:clipped:two}
\timeentry(\theta)
\leq
\frac{2\sqrt{\lambda_{\max}}}{\psi \min \{\tau_{\rm clip},\psi D \}}
\sqrt{L(\thetaInit)-L_\star}.
\end{align}
\end{theorem}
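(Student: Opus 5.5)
The plan is to reproduce the argument of \Cref{thm-ngrad-entry-time-upper-1}, with the clipped velocity in place of the normalized one. Set $G(t):=L(\theta(t))-L_\star$ and $T:=\timeentry(\theta)$.

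\textbf{Step 1: the pre-entry gradient bound.} By \Eqref{eq:robust-zone-clip}, the clipped minority aware zone coincides with $\Zmin^{\rm GD}$. Hence, for $t<T$,
\[
\|\nabla L(\theta(t))\|_2\geq \psi(\|\theta(t)\|_2+D)\geq \psi D>0 .
\]
By \Cref{cor-non-emptyness-robust-zones} and continuity, there is no critical point before entry, so $G(t)>0$ on $[0,T)$. The clipped vector field is globally Lipschitz in $\nabla L$, so the trajectory is unique and global (\Cref{cor:global-existence-norm-gradient}).

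\textbf{Step 2: the decay rate of $G$.} Along the flow,
\[
-G'(t)=\langle \nabla L,\Clip_{\tau_{\rm clip}}(\nabla L)\rangle=\|\nabla L\|_2\,\min\{\|\nabla L\|_2,\tau_{\rm clip}\}.
\]
Since $x\mapsto x\min\{x,\tau_{\rm clip}\}$ is nondecreasing, the bound $\|\nabla L\|_2\geq \psi D$ gives
\[
-G'(t)\geq \psi D\min\{\psi D,\tau_{\rm clip}\}=\min\{\psi^2D^2,\tau_{\rm clip}\psi D\}.
\]
Integrating over $[0,t]$ for $t<T$ and using $G(t)\geq 0$ yields $t\min\{\psi^2D^2,\tau_{\rm clip}\psi D\}\leq G(0)$. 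This forces $T<\infty$, and letting $t\uparrow T$ gives \eqref{eq:claim:clipped:one}. If $T=0$ there is nothing to prove.

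\textbf{Step 3: the bound under \Cref{assumption-l-smooth}.} This is the only slightly delicate point: $-G'$ has to be split as a product that keeps one factor proportional to $\sqrt{G}$. Write $-G'=\|\nabla L\|_2\cdot\min\{\|\nabla L\|_2,\tau_{\rm clip}\}$ and bound the two factors separately.
\begin{itemize}
\item For the first factor, \Eqref{eq:Lthetabound} gives $\|\nabla L\|_2\geq \psi(\|\theta\|_2+D)\geq \psi\sqrt{G/\lambda_{\max}}$.
\item For the second factor, $\min\{\|\nabla L\|_2,\tau_{\rm clip}\}\geq \min\{\psi D,\tau_{\rm clip}\}$ by Step 1.
\end{itemize}
Combining the two, $-G'\geq \frac{\psi}{\sqrt{\lambda_{\max}}}\min\{\tau_{\rm clip},\psi D\}\sqrt G$. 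Since $G>0$ before entry, this gives
\[
\frac{d}{dt}\sqrt{G}\leq -\frac{\psi\min\{\tau_{\rm clip},\psi D\}}{2\sqrt{\lambda_{\max}}}.
\]
Integrating on $[0,t]$, using $\sqrt{G(t)}\geq 0$, and letting $t\uparrow T$ yields \eqref{eq:claim:clipped:two}.

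I expect no real obstacle. The only care needed is using monotonicity of the map $x\mapsto x\min\{x,\tau_{\rm clip}\}$ in Step 2, and choosing the factorization in Step 3 so that only one factor is converted to $\sqrt{G}$.
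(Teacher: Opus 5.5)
Your proposal is correct and follows the paper's proof essentially line by line. It uses the same pre-entry bound $\|\nabla L(\theta(t))\|_2\geq\psi(\|\theta(t)\|_2+D)\geq\psi D$, obtained from the identity of the clipped and gradient-descent zones, the same constant-rate decay $-G'\geq\min\{\psi^2D^2,\tau_{\rm clip}\psi D\}$, and the same factorization $-G'=\|\nabla L\|_2\min\{\|\nabla L\|_2,\tau_{\rm clip}\}$ in which only the first factor is converted to $\sqrt{G}$ via \Cref{assumption-l-smooth}; your explicit appeal to \Cref{cor-non-emptyness-robust-zones} to exclude critical points before entry (needed, since critical points lie outside $\Zmin$) is a detail the paper leaves implicit in its common pre-entry estimates.
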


\begin{proof}
Set \(E(t):=L(\theta(t))-L_\star\). Before entry,
\Eqref{eq:robust-zone-clip} gives
\[
\|\nabla L(\theta(t))\|_2\geq \psi(\|\theta(t)\|_2+D)\geq\psi D .
\]
As in the proof of \Cref{thm-ngrad-entry-time-upper-1,thm-gradnorm-entry-time-upper-1},
the chain rule gives, for a.e. pre-entry time,
\[
E'(t)
=
-\min\{\|\nabla L(\theta(t))\|_2^2,
\tau_{\rm clip}\|\nabla L(\theta(t))\|_2\}
\leq
-\min\{\psi^2D^2,\tau_{\rm clip}\psi D\}.
\]
Integrating up to any \(s<\timeentry(\theta)\) and using \(E(s)\geq0\) gives
the claim in \Eqref{eq:claim:clipped:one}.

Now suppose that \Cref{assumption-l-smooth} holds. We have for $t< \timeentry$, from \Eqref{preentrybound-2} and  with
\(m_{\|\cdot\|}\equiv1\),
\begin{align*}
E'(t) \leq - \min \{ \tau_{\rm clip},\psi D \} \| \nabla L(\theta(t))\|_2 \leq - \min \{ \tau_{\rm clip},\psi D \} \frac{\psi}{\sqrt{\lambda_{\max}}} \sqrt{E(t)}.
\end{align*}
Similarly to \Cref{thm-ngrad-entry-time-upper-1}, we deduce that
\[
2\frac{d}{dt}\sqrt{L(\theta(t))-L_\star}
\leq
-\frac{\psi \min \{ \tau_{\rm clip},\psi D \}}{\sqrt{\lambda_{\max}}}
\]
for a.e. pre-entry time. Integration gives the stated bound in \Eqref{eq:claim:clipped:two}.
\end{proof}

\subsubsection{Newton's descent}
\label{sec:appendix-entry-time-newton}

\begin{theorem}[Upper bound of entry time for Newton method]
\label{thm-newton-entry-time-upper}
Consider a solution $\theta(t)$ to the following dynamical system,
\begin{align*}
\theta(0) &= \thetaInit \in \mathbb{R}^d \\
\forall t \ge 0, \quad \dot{\theta}(t) & = -\left(\nabla^2 L(\theta(t))\right)^{-1} \nabla L(\theta(t)),
\end{align*}
such that $\nabla^2L(\theta(t))$ is positive definite along the trajectory.
Suppose that we have at the initialization,
\[ \| \nabla L(\thetaInit) \|_2 > \psi ( \|\thetaInit\|_2 +  D). \]
Then we know that 
\begin{align*}
 \timeentry(\theta)  \leq \log\left( \frac{\| \nabla L(\thetaInit) \|_2 }{\psi D }\right). 
\end{align*}
\end{theorem}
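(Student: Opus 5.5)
The plan is to use the classical fact that the Newton flow drives the gradient to zero exponentially at unit rate. Set $g(t):=\nabla L(\theta(t))$. Since $L$ is $C^2$ and $\theta$ is absolutely continuous (indeed $C^1$ here, since the Newton field is continuous wherever the Hessian is invertible), the chain rule gives, for a.e. $t$,
\[
\dot g(t)=\nabla^2L(\theta(t))\,\dot\theta(t)=-\nabla^2L(\theta(t))\nabla^2L(\theta(t))^{-1}\nabla L(\theta(t))=-g(t).
\]
Hence $g(t)=e^{-t}\nabla L(\thetaInit)$ and $\|\nabla L(\theta(t))\|_2=e^{-t}\|\nabla L(\thetaInit)\|_2$ for all $t\ge0$. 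In particular the gradient never vanishes along the trajectory, which matches the absence of finite-time convergence established earlier.

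Next I would combine this with \Cref{cor-non-emptyness-robust-zones}, which applies because the Newton field satisfies \Eqref{eqdescent}: positive definiteness gives $0\in V_L(\theta)$ iff $\nabla L(\theta)=0$. That corollary states that any $\theta$ with
\[
0<\|\nabla L(\theta)\|_2<\psi(\|\theta\|_2+D)
\]
lies in $\Zmin$. Since $\psi(\|\theta\|_2+D)\ge\psi D$, a sufficient condition for $\theta(t)\in\Zmin$ is $0<e^{-t}\|\nabla L(\thetaInit)\|_2<\psi D$.

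This holds for every $t>t_0:=\log\bigl(\|\nabla L(\thetaInit)\|_2/(\psi D)\bigr)$. The hypothesis $\|\nabla L(\thetaInit)\|_2>\psi(\|\thetaInit\|_2+D)\ge\psi D$ ensures $t_0>0$, so the bound is meaningful. It follows that $\timeentry(\theta)=\inf\{t:\theta(t)\in\Zmin\}\le t_0$, which is the claim.

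The only delicate point is justifying the chain rule for $t\mapsto\nabla L(\theta(t))$ and the corollary's use of the Newton field's descent properties. The chain rule is fine because $\nabla L$ is $C^1$ and $\theta$ is absolutely continuous. For the descent property, $\langle\nabla^2L^{-1}\nabla L,\nabla L\rangle>0$ when $\nabla L\neq0$, by positive definiteness along the trajectory. If $\Zmin$ membership must be checked via \Cref{thm-calQ} directly, I would instead bound $\Qrob_{\rm N}(\theta)\le\|v_N\|_2\bigl(\|\nabla L(\theta)\|_2-\psi(\|\theta\|_2+D)\bigr)$ using Cauchy–Schwarz, exactly as in the corollary's proof.
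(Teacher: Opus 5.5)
Your proposal is correct and follows essentially the same route as the paper. Both arguments use the exponential decay $\|\nabla L(\theta(t))\|_2=e^{-t}\|\nabla L(\thetaInit)\|_2$, obtained by differentiating $\nabla L$ along the Newton flow, and then the Cauchy--Schwarz bound on $\Qrob_{\rm N}$ (equivalently \Cref{cor-non-emptyness-robust-zones}) to place $\theta(t)$ in $\Zmin$ once the gradient norm falls below $\psi D$; your explicit checks that the gradient never vanishes and that the hypothesis gives $t_0>0$ fill in details the paper leaves implicit.
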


\begin{proof}
Along the Newton flow,
\[
\frac{d}{dt}\nabla L(\theta(t))
=
\nabla^2L(\theta(t))\dot\theta(t)
=
-\nabla L(\theta(t)),
\]
hence \(\|\nabla L(\theta(t))\|_2=\|\nabla L(\thetaInit)\|_2e^{-t}\).
By \Eqref{eq:calQ:newton} and Cauchy--Schwarz,
\[
\|\nabla L(\theta(t))\|_2<\psi D
\quad\Longrightarrow\quad
\Qrob(\theta(t))<0.
\]
Thus the first time at which
\(\|\nabla L(\thetaInit)\|_2e^{-t}<\psi D\) is an upper bound for
\(\timeentry(\theta)\), which gives the result.
\end{proof}

\subsubsection{Applications to standard descents}
\label{sec:appendix-entry-time-applications}
We apply \Cref{thm-ngrad-entry-time-upper-1,thm-gradnorm-entry-time-upper-1}.

\paragraph{Gradient descent.}
For \(\|\cdot\|=\|\cdot\|_2\), before entry
\[
\ngradnorm{\|\cdot\|_2}{L(\theta)}
=
\left\{\frac{\nabla L(\theta)}{\|\nabla L(\theta)\|_2}\right\},
\qquad
m_{\|\cdot\|_2}(\theta)=1.
\]
Thus \(\bar m(s)=1\), and \Cref{thm-gradnorm-entry-time-upper-1} gives the stated bound.

\paragraph{Sign descent.}
For \(\|\cdot\|=\|\cdot\|_\infty\),
\begin{equation} \label{eq:m:SD}
m_{\|\cdot\|_\infty}(\theta)
=
\operatorname{dist}\bigl(0,\sign(\nabla L(\theta))\bigr)
=
\sqrt{|\supp(\nabla L(\theta))|}.
\end{equation}
Applying \Cref{thm-ngrad-entry-time-upper-1} gives the stated bound.

\paragraph{Spectral descent.}
Let \(G=\nabla L(\Gamma)\) have rank \(r\) and compact SVD \(G=U\Sigma V^\top\).
For \(\|\cdot\|_{\rm sp}\),
\begin{equation} \label{eq:rho:grad:spectral}
\ngradnorm{\|\cdot\|_{\rm sp}}{L(\Gamma)}
=
\{UV^\top+W:\ U^\top W=0,\ WV=0,\ \|W\|_{\rm sp}\leq1\},
\end{equation}
so
\begin{equation} \label{eq:bar:M:spec}
m_{\|\cdot\|_{\rm sp}}(\Gamma)
=
\operatorname{dist}\bigl(0,\ngradnorm{\|\cdot\|_{\rm sp}}{G}\bigr)
=
\sqrt{r}.
\end{equation}
Applying \Cref{thm-ngrad-entry-time-upper-1} gives the stated bound.

\subsection{Lower bounds}
\label{sec-lower-bounds}

We will build a one-dimensional function that will be useful for the lower bounds in this section. Consider $K \geq \psi $ where $K$ is a parameter of the function, that quantifies its second derivative on the positive part of its input space. For all $\delta >0$:
\begin{equation}
\label{eq-definition-gradient-descent-lower-bound}
\begin{aligned}
g_{\delta}(-\delta) & := 0 \\
g_{\delta}'(\theta)  & :=  \begin{cases}  K \theta + \psi D  & \text{for } \theta \ge 0 \\ P(\theta) & \text{for } \theta \in [-\delta, 0) \\ 0 & \text{for } \theta \le -\delta \end{cases}
\end{aligned}
\end{equation}
where $P$ is an interpolation polynomial of degree at most $3$ defined in the following way:
\begin{align*}
P(\theta) & := A(\theta+\delta)^3 + B(\theta+\delta)^2 \\
A & := \frac{K \delta - 2 \psi D}{\delta^3}\\
B& := \frac{3\psi D  - K \delta}{\delta^2}.
\end{align*}

\begin{lemma}
\label{lemma:properties:g}
For all $\delta >0$, $g_{\delta} \in \mathcal{C}^2(\mathbb{R},\mathbb{R})$, $g_{\delta}(0) = \frac{\delta \psi D }{2} - \frac{K \delta^2}{12} $ and 
\[ \forall 0<x<\delta, \quad |g_{\delta}(0) - g_{\delta}(-x)| \leq  K \delta^2 +  2 \delta \psi D. \]
By looking at the sign of $g_{\delta}'$, we know
\[g_{\delta}(0) - \min g_{\delta } \leq  K \delta^2 +  2 \delta \psi D.\]
\end{lemma}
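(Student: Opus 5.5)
The plan is to verify the interpolation conditions on $P$ directly and then bound $P$ in a rescaled variable; no earlier result of the paper is needed. Throughout I use $g_\delta(\theta)=\int_{-\delta}^{\theta}g_\delta'(s)\,ds$, which is consistent with $g_\delta(-\delta)=0$.

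\emph{Regularity.} Since $g_\delta$ is defined through its derivative, it suffices to show that $g_\delta'$ is $\mathcal{C}^1$. On each of the three open pieces $g_\delta'$ is a polynomial, so only the junctions $\theta=-\delta$ and $\theta=0$ need checking. I will use
\[
P'(\theta)=3A(\theta+\delta)^2+2B(\theta+\delta).
\]
At $\theta=-\delta$ we get $P(-\delta)=P'(-\delta)=0$, which matches the zero branch. At $\theta=0$ a direct substitution gives
\[
P(0)=A\delta^3+B\delta^2=(K\delta-2\psi D)+(3\psi D-K\delta)=\psi D,
\]
\[
P'(0)=3A\delta^2+2B\delta=\frac{3(K\delta-2\psi D)+2(3\psi D-K\delta)}{\delta}=K.
\]
These match the value $\psi D$ and the slope $K$ of the branch $K\theta+\psi D$ at $0$. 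Hence $g_\delta'\in\mathcal{C}^1$ and $g_\delta\in\mathcal{C}^2(\mathbb{R},\mathbb{R})$.

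\emph{Value at $0$.} Integrating $P$ over $[-\delta,0]$ gives
\[
g_\delta(0)=\frac{A\delta^4}{4}+\frac{B\delta^3}{3}=\frac{K\delta^2-2\psi D\delta}{4}+\frac{3\psi D\delta-K\delta^2}{3}=\frac{\delta\psi D}{2}-\frac{K\delta^2}{12}.
\]

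\emph{Increment bound.} This is the only step that needs care, because bounding $|A|$ and $|B|$ separately loses constants (it gives roughly $2K\delta^2+5\delta\psi D$). Instead I substitute $u=(\theta+\delta)/\delta\in[0,1]$ and regroup by $K$ and $\psi D$:
\[
P=K\delta\,(u^3-u^2)+\psi D\,(3u^2-2u^3).
\]
On $[0,1]$ one has $|u^3-u^2|=u^2(1-u)\le 4/27\le1$ and $0\le 3u^2-2u^3\le1$. Therefore $|P|\le K\delta+\psi D$ on $[-\delta,0]$. For $0<x<\delta$ this yields
\[
|g_\delta(0)-g_\delta(-x)|=\Bigl|\int_{-x}^0P\Bigr|\le \delta(K\delta+\psi D)\le K\delta^2+2\delta\psi D.
\]

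\emph{Minimum.} Since $K\ge\psi>0$, we have $g_\delta'(\theta)=K\theta+\psi D\ge\psi D>0$ for $\theta\ge0$, so $g_\delta$ is increasing on $[0,\infty)$. Since $g_\delta'=0$ on $(-\infty,-\delta]$, $g_\delta$ is constant there and equal to $g_\delta(-\delta)$. Hence $\min g_\delta=\min_{[-\delta,0]}g_\delta$, and this minimum is attained by compactness. By continuity, the increment bound extends to $x\in[0,\delta]$. It follows that
\[
g_\delta(0)-\min g_\delta=\max_{x\in[0,\delta]}\bigl(g_\delta(0)-g_\delta(-x)\bigr)\le K\delta^2+2\delta\psi D.
\]
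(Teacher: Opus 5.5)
Your proof is correct and follows the paper's structure. It uses the same interpolation checks $P(-\delta)=P'(-\delta)=0$, $P(0)=\psi D$, $P'(0)=K$, and the same integration of $P$ to obtain $g_\delta(0)$. It also uses the same monotonicity argument for the minimum, which you spell out more carefully than the paper's one-line ``by looking at the sign of $g_\delta'$''.

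The one real difference is the increment bound. The paper evaluates the antiderivative exactly, $g_\delta(0)-g_\delta(-x)=\frac{\delta\psi D}{2}-\frac{K\delta^2}{12}-\frac{A(\delta-x)^4}{4}-\frac{B(\delta-x)^3}{3}$, and bounds the four terms separately using $\delta-x\le\delta$. Because the factors $\frac14$ and $\frac13$ from integration are kept, this already gives $\frac23K\delta^2+2\delta\psi D$. So your remark that bounding $|A|$ and $|B|$ separately loses constants is only true if it is done on the integrand. Your rewriting $P=K\delta(u^3-u^2)+\psi D(3u^2-2u^3)$ instead gives a uniform bound on $P$ itself, $|P|\le\frac{4}{27}K\delta+\psi D$. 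That yields the sharper estimate $\frac{4}{27}K\delta^2+\delta\psi D$ before you relax it. It is slightly cleaner and avoids evaluating the antiderivative at $-x$.
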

\begin{proof}
The function
$g_{\delta}$ is $\mathcal{C}^2$ in each segment separately. We verify that the derivative $g'_{\delta}$ is continuous at the borders: 
\begin{align*}
P(0) & = A\delta^3 + B \delta^2 = \psi D  \\
P(-\delta) & = 0,
\end{align*}
and also $g'_{\delta}$ is continuously differentiable at the borders
 \begin{align*} 
P'(0) & = 3 A\delta^2 + 2 B \delta =  \frac{1}{\delta} \left( 3 K \delta - 6 \psi D  + 6\psi D  - 2 K\delta \right)  =K\\
P'(-\delta) & = 0.
\end{align*}
Now we can calculate the value at $0$:
\begin{align*}
g_{\delta }(0) = \int_{-\delta}^{0} P(\theta) d\theta & = \Big[ A\frac{(\theta+\delta)^4}{4} +  B\frac{(\theta+\delta)^3}{3} \Big]_{-\delta}^{0} \\
& = \frac{A\delta^4}{4} + \frac{B \delta^3}{3} \\
& = \frac{K \delta^2- 2\delta \psi D }{4} + \frac{3\delta \psi D - K \delta^2}{3} \\
& = \frac{\delta \psi D }{2} - \frac{K \delta^2}{12}.
\end{align*}
Last but not least, for $0<x<\delta$,
\begin{align*}
g_{\delta }(0)  - g_{\delta }(-x)  = \int^{0}_{-x} P(\theta) d\theta & = \Big[ A\frac{(\theta+\delta)^4}{4} +  B\frac{(\theta+\delta)^3}{3} \Big]_{-x}^{0} \\
& =
\frac{\delta \psi D }{2} - \frac{K \delta^2}{12}
-\frac{A (\delta-x)^4}{4} - \frac{B (\delta-x)^3}{3}. 
\end{align*}
By taking the absolute values:
\begin{align*}
| g_{\delta }(0) - g_{\delta }(-x) | & \leq \frac{\delta}{4} |K\delta - 2 \psi D  | + \frac{\delta}{3} |3 \psi D  - K\delta| + \frac{\delta \psi D }{2} + \frac{K \delta^2}{12}  \\
& \leq (\frac{1}{4} + \frac{1}{3} + \frac{1}{12}) K \delta^2 + \delta \psi D  (\frac{1}{2} + 1 + \frac{1}{2}). 
\end{align*}
\end{proof}

\subsubsection{Lower bound for Gradient Descent}
For gradient descent, we stay in dimension one with $g_{\delta }$ for the lower bound.
The strategy is to lower-bound the entry time by the initial suboptimality gap divided by \(3 \psi^2D^2\), up to an error that vanishes as \(\delta\to0\) (\Cref{remark:LB:GD}).

\begin{theorem}
\label{th:LB:min:awa}
Consider $0 < \thetaInit \leq \frac{\psi}{K}D$. For any $\delta >0$, we follow the Gradient descent dynamics with $g_{\delta}$ defined in \Cref{eq-definition-gradient-descent-lower-bound}. Then the entry time is lower bounded as
\begin{equation} \label{eq:LB:GD}
\timeentry \geq \frac{g_{\delta}(\thetaInit) - \min g_{\delta}}{3 \psi^2 D^2}  - \frac{K \delta^2}{3 \psi^2 D^2} - \frac{  2 \delta  }{3 \psi D}.
\end{equation}
\end{theorem}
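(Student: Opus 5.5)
The plan is to show that the gradient flow on $g_\delta$ cannot enter $\Zmin$ while $\theta(t)\geq 0$. This gives $\timeentry\geq t_0$, where $t_0$ is the first time the trajectory reaches $0$. Then I lower-bound $t_0$ explicitly on the region $\theta>0$, where $g_\delta'$ is affine.

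\textbf{Step 1 (no entry for $\theta\geq0$).} Since $g_\delta$ is $C^2$ by \Cref{lemma:properties:g}, the flow $\dot\theta=-g_\delta'(\theta)$ has a unique solution. By \Cref{eq:robust-zone-gd}, for $\theta\notin\crit g_\delta$ membership in $\Zmin$ means $|g_\delta'(\theta)|<\psi(|\theta|+D)$. For $\theta\geq0$ we have $g_\delta'(\theta)=K\theta+\psi D>0$, and $K\geq\psi$ gives $K\theta+\psi D\geq\psi\theta+\psi D$. Hence no point with $\theta\geq 0$ lies in $\Zmin$; in particular $\theta=0$ is excluded, since there $g_\delta'(0)=\psi D$ is not strictly below $\psi D$. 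Starting from $\thetaInit>0$, the trajectory decreases strictly while $\theta>0$, because the velocity is at most $-\psi D<0$. So it reaches $0$ at a finite time $t_0$, and $\timeentry\geq t_0$.

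\textbf{Step 2 (computing $t_0$).} On $[0,\thetaInit]$ the flow is linear, so by separation of variables
\[
t_0=\int_0^{\thetaInit}\frac{d\theta}{K\theta+\psi D}\geq\frac{\thetaInit}{K\thetaInit+\psi D}\geq\frac{\thetaInit}{2\psi D},
\]
where the last inequality uses $\thetaInit\leq\psi D/K$. On the other side,
\[
g_\delta(\thetaInit)-g_\delta(0)=\frac{K\thetaInit^2}{2}+\psi D\thetaInit\leq\frac32\psi D\thetaInit,
\]
again using $K\thetaInit\leq\psi D$. Combining the two bounds gives $t_0\geq\bigl(g_\delta(\thetaInit)-g_\delta(0)\bigr)/(3\psi^2D^2)$.

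\textbf{Step 3 (replacing $g_\delta(0)$ by $\min g_\delta$).} \Cref{lemma:properties:g} gives $g_\delta(0)-\min g_\delta\leq K\delta^2+2\delta\psi D$. Hence
\[
g_\delta(\thetaInit)-g_\delta(0)\geq g_\delta(\thetaInit)-\min g_\delta-K\delta^2-2\delta\psi D.
\]
Dividing by $3\psi^2D^2$ and chaining with Steps 1 and 2 yields \Eqref{eq:LB:GD}.

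The main point needing care is Step 1. It relies on the strict inequality in the definition of $\Zmin$ and on $K\geq\psi$ to exclude the boundary point $\theta=0$. The quantitative constants in Steps 2 and 3 are routine once the $\theta>0$ phase is isolated.
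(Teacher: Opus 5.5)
Your proof is correct and follows the paper's argument. You rule out entry into $\Zmin$ while $\theta\ge 0$ using $K\ge\psi$ and the strict inequality, lower-bound the travel time to the origin in terms of $g_\delta(\thetaInit)-g_\delta(0)$, and then replace $g_\delta(0)$ by $\min g_\delta$ via \Cref{lemma:properties:g}. The only difference is in Step 2: the paper solves the affine flow explicitly, writes $T_0=\frac{1}{2K}\log\bigl(1+2K(g_\delta(\thetaInit)-g_\delta(0))/(\psi^2D^2)\bigr)$ and applies $\ln(1+x)\ge x/3$ on $[0,3]$, whereas your bound $t_0\ge \thetaInit/(2\psi D)$ together with $g_\delta(\thetaInit)-g_\delta(0)\le \frac32\psi D\thetaInit$ reaches the same constant $1/3$ more directly.
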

\begin{proof}
We recall from \Cref{table:zoology:minority:zone} that the entry time condition for Gradient descent in dimension one is 
\[ 0 < |g'(\theta)| < \psi (|\theta| + D).\]
Note that we initialize at the positive part $\theta_{init} > 0$. We also notice that for $\theta > 0$,
\begin{align*}
| g'_{\delta}(\theta) |  =  K\theta +  \psi D \geq \psi (\theta + D). 
\end{align*}
Therefore, the entry time is at least the travelling time $T_0$ from $\thetaInit$ to $0$. Now we know for $0 < t < T_0$:
\begin{align*}
\theta'(t) & = - g'_{\delta}(\theta(t)) = - K \theta(t) - \psi D  \\
\theta(t) & = \mathrm{Const} \exp( - K t) - \frac{\psi D }{K}
\end{align*}
with $\mathrm{Const}$ a constant that depends on the initialization $\theta(t=0)=\thetaInit$, 
that is $\mathrm{Const} = \thetaInit +  \frac{\psi D}{K}$.
Next, we have
\begin{align*}
 \theta(t)  & = \left(\thetaInit +   \frac{\psi D}{K} \right)  \exp( - K t) - \frac{\psi D}{K}  \\
\implies  \quad  0 & = \left(\thetaInit +   \frac{\psi D}{K} \right)  \exp( - K T_0) - \frac{\psi D}{K}  \\
\implies  \quad  T_0 & = \frac{1}{K} \log \left( \frac{\thetaInit + \frac{\psi D}{K} }{\frac{\psi D}{K} }\right) = \frac{1}{K} \log \left( 1 + \frac{K \thetaInit}{\psi D} \right).
\end{align*}
The last step is to introduce the suboptimality gap:
\begin{align*}
g_{\delta}(\thetaInit) - g_{\delta}(0) &  = \int_{0}^{\thetaInit} \left( K \theta + \psi D \right) \, d\theta = \frac{K}{2}\theta_{init}^2 + \psi D\theta_{init} \\
\frac{\psi^2 D^2}{2K}+ g_{\delta}(\thetaInit) - g_{\delta}(0) & = \frac{1}{2} (\sqrt{K}\thetaInit +  \frac{\psi D}{\sqrt{K}})^2 \\
1+ \frac{2K(g_{\delta}(\thetaInit) - g_{\delta}(0))}{\psi^2 D^2}  & =  \Big( \frac{K\thetaInit}{\psi D}  + 1\Big)^2. 
\end{align*}
Now we can rearrange the terms:
\begin{align*}
T_0 = \frac{1}{K} \log \Big( 1 + \frac{K\thetaInit}{\psi D}\Big) = \frac{1}{2 K} \log \left( 1 +  \frac{2K(g_{\delta}(\thetaInit) - g_{\delta}(0))}{\psi^2 D^2} \right).
\end{align*}
Since we supposed that $\thetaInit \leq \frac{\psi}{K}D$, we know that 
\begin{align*}
2K(g_{\delta}(\thetaInit) - g_{\delta}(0)) & = K^2 \thetaInit^2 + 2K \psi \thetaInit D  \leq 3 \psi^2 D^2.
\end{align*}
Since $\forall x\in[0,3], \ln (1+x) \geq x/3$, we get 
\begin{align*}
t_{entry} \geq T_0 & \geq  \frac{1}{2K} \frac{2 K(g_{\delta}(\thetaInit) - g_{\delta}(0))}{3 \psi^2 D^2}  \\
& = \frac{g_{\delta}(\thetaInit) - \min g_{\delta}}{3\psi^2 D^2} + \frac{\min g_{\delta} -  g_{\delta}(0) }{3 \psi^2 D^2} \\
& \geq \frac{g_{\delta}(\thetaInit) - \min g_{\delta}}{3 \psi^2 D^2} - \frac{K \delta^2 +  2 \delta \psi D  }{3\psi^2 D^2}
\\
& = \frac{g_{\delta}(\thetaInit) - \min g_{\delta}}{3 \psi^2 D^2} - \frac{K \delta^2   }{3\psi^2 D^2}
- \frac{ 2\delta   }{3\psi D},
\end{align*}
where we used \Cref{lemma:properties:g} in the last inequality.
\end{proof}

\begin{remark}
\label{remark:LB:GD}
In \Cref{th:LB:min:awa}, for fixed  $\thetaInit$, $K$, $\psi$ and $D$, we can let $\delta >0$ go to zero. In this case, in the right-hand side of \Eqref{eq:LB:GD}, the quantity $- \frac{K \delta^2}{3 \psi^2 D^2} - \frac{  2 \delta  }{3 \psi D}$ vanishes, while the quantity $g_{\delta}(\thetaInit) - \min g_{\delta}$ converges to $K \thetaInit^2/2 + \psi D \thetaInit >0$. Hence, for $\delta$ small enough, the right-hand side of \Eqref{eq:LB:GD} is at least $\frac{g_{\delta}(\thetaInit) - \min g_{\delta}}{6 \psi^2 D^2}$.
Note also that the trajectory in \Cref{th:LB:min:awa} encounters a global loss function that has second derivative $K$, which is arbitrarily larger than the parameter $\psi$ defining the minority loss class in \Eqref{def-K}. A similar remark also applies to \Cref{th:LB:min:awa:sign,th:LB:min:awa:spec}.  
\end{remark}

\begin{remark}
Consider the lower bound $\timeentryLB$ as the right-hand side of \Eqref{eq:LB:GD} in \Cref{th:LB:min:awa}, so that $\timeentry \ge \timeentryLB$. Recall that $\timeentry$ is the entry time to the minority aware zone $\Zmin
=
\bigcup_{L_0\in\Kmin}\Zmaj(L_0)$.
For any decomposition $L = L_0 + L_1$,
consider the entry time to the majority adverse zone $\Zmaj(L_0)$, denoted $\timeentryMaj(L_0)$. Then, because  $\Zmaj(L_0) \subseteq \Zmin$, we have $\timeentryMaj(L_0) \ge \timeentry \ge \timeentryLB$. In words, $\timeentryLB$ is a lower bound not only on the minority aware zone, that is oblivious to the choice of the decomposition $L = L_0 + L_1$, but also on the majority adverse zone, for any choice of the decomposition $L = L_0 + L_1$. This remark also applies to \Cref{th:LB:min:awa:sign,th:LB:min:awa:spec}. 
\end{remark}

\subsubsection{Lower bound for Sign Descent}
Now we will make a small modification to $g_\delta$ by substituting $D \xleftarrow{} D/\sqrt{k}$ with $k \geq 1$. We define for all $\delta >0$:
\begin{equation}
\begin{aligned}
g_{\delta,k}(-\delta) & := 0 \\
g_{\delta,k}'(\theta)  & :=  \begin{cases}   K \theta + \frac{\psi D}{\sqrt{k}}  & \text{for } \theta \ge 0 \\ P(\theta) & \text{for } \theta \in [-\delta, 0) \\ 0 & \text{for } \theta \le -\delta \end{cases}
\end{aligned}
\end{equation}
where $P$ is an interpolation polynomial of degree at most $3$ defined in the following way:
\begin{align*}
P(\theta) & := A(\theta+\delta)^3 + B(\theta+\delta)^2 \\
A & := \frac{K\delta - 2 \psi (D/\sqrt{k})}{\delta^3}\\
B& := \frac{3\psi (D/\sqrt{k})  - K \delta}{\delta^2}.
\end{align*}
We know that the following lemma holds for the same reason as before.
\begin{lemma}
For all $\delta >0$, $g_{\delta,k} \in \mathcal{C}^2(\mathbb{R},\mathbb{R})$, $g_{\delta,k}(0) = \frac{\delta \psi D }{2\sqrt{k}} - \frac{K \delta^2}{12} $ and 
\[ \forall 0<x<\delta, \quad |g_{\delta,k}(0) - g_{\delta,k}(-x)| \leq  K \delta^2 +  \frac{2 \delta \psi D}{\sqrt{k}}.\]
\end{lemma}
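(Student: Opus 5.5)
The statement is the exact analogue of \Cref{lemma:properties:g} with the parameter $\psi D$ replaced by $c:=\psi D/\sqrt{k}$. We will redo that proof with this substitution, so that every identity carries over verbatim. Note first that $g_{\delta,k}$ is defined by prescribing $g_{\delta,k}(-\delta)=0$ and its derivative piecewise. Hence $g_{\delta,k}(\theta)=\int_{-\delta}^{\theta}g'_{\delta,k}$ is $C^1$ as soon as $g'_{\delta,k}$ is continuous, and $C^2$ as soon as $g'_{\delta,k}$ is $C^1$.

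\textbf{Regularity.} On each of the three pieces $g'_{\delta,k}$ is polynomial, so it suffices to check matching of values and first derivatives at $\theta=-\delta$ and $\theta=0$. With $A=(K\delta-2c)/\delta^3$ and $B=(3c-K\delta)/\delta^2$, the checks are as follows.
\begin{itemize}
\item $P(-\delta)=0$ and $P'(-\delta)=0$ hold trivially, since $P$ is divisible by $(\theta+\delta)^2$.
\item $P(0)=A\delta^3+B\delta^2=(K\delta-2c)+(3c-K\delta)=c$, which matches $K\cdot 0+c$.
\item $P'(0)=3A\delta^2+2B\delta=\frac{1}{\delta}\bigl(3K\delta-6c+6c-2K\delta\bigr)=K$, which matches the slope of $K\theta+c$.
\end{itemize}

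\textbf{Value at $0$.} Integrating $P$ on $[-\delta,0]$ gives
\[
g_{\delta,k}(0)=\frac{A\delta^4}{4}+\frac{B\delta^3}{3}=\frac{K\delta^2-2\delta c}{4}+\frac{3\delta c-K\delta^2}{3}=\frac{\delta c}{2}-\frac{K\delta^2}{12}.
\]
Substituting $c=\psi D/\sqrt{k}$ yields the stated formula.

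\textbf{Increment bound.} For $0<x<\delta$, we write
\[
g_{\delta,k}(0)-g_{\delta,k}(-x)=g_{\delta,k}(0)-\frac{A(\delta-x)^4}{4}-\frac{B(\delta-x)^3}{3}.
\]
We bound each term using $0<\delta-x<\delta$:
\begin{itemize}
\item $\bigl|A(\delta-x)^4/4\bigr|\le \delta|K\delta-2c|/4\le (K\delta^2+2\delta c)/4$;
\item $\bigl|B(\delta-x)^3/3\bigr|\le \delta|3c-K\delta|/3\le (3\delta c+K\delta^2)/3$;
\item $|g_{\delta,k}(0)|\le \delta c/2+K\delta^2/12$.
\end{itemize}
Summing, the coefficients of $K\delta^2$ add to $\tfrac14+\tfrac13+\tfrac1{12}\le 1$, and those of $\delta c$ add to $\tfrac12+1+\tfrac12=2$. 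This gives $K\delta^2+2\delta\psi D/\sqrt{k}$.

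There is no real obstacle. The only point requiring care is that the bounds on $|K\delta-2c|$ and $|3c-K\delta|$ use the triangle inequality, and not the sign assumption $K\ge\psi$. As a result, the estimate holds for all $\delta>0$ and $k\ge1$.
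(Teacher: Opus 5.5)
Your proposal is correct and is essentially the paper's argument: the paper just says this lemma holds ``for the same reason as'' \Cref{lemma:properties:g}, whose proof is the same computation (matching $P$ and $P'$ at $-\delta$ and $0$, integrating $P$, then applying the triangle inequality with coefficients $\tfrac14+\tfrac13+\tfrac1{12}$ and $\tfrac12+1+\tfrac12$), with $\psi D$ replaced by $c=\psi D/\sqrt{k}$, exactly as you did. One small correction to your closing remark: the step $|K\delta-2c|\le K\delta+2c$ does not need $K\ge\psi$, but it does use $K\ge0$ and $c\ge0$; both hold here since $K\ge\psi>0$.
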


Now we consider a class of diagonal functions. Let $1 \leq k \leq d$, and let
\begin{equation}
\label{eq-def-sign-descent-lower-bound}
\forall \theta \in \mathbb{R}^d, \quad L_{\delta, k}(\theta) := \sum_{i=1}^{k} g_{\delta,k}(\theta_i).
\end{equation}

In the next theorem, $\textbf{1}_k$ is the $d$-dimensional vector with first $k$ components equal to one, and the remaining ones equal to zero.

\begin{theorem}
\label{th:LB:min:awa:sign}
Consider $\delta >0$, $\thetaInit = R \textbf{1}_k$ with $R > 0$ and assume that $\| \thetaInit\|_2 \leq \frac{\psi D}{K}$ and that we follow the Sign Descent dynamics of $L_{\delta,k}$ defined in \Cref{{eq-def-sign-descent-lower-bound}}. Then, there exists an absolutely continuous solution of the sign-descent inclusion for which the entry time is lower bounded as
 \begin{align*}
\timeentry \geq  \frac{2}{3 \bar M_{\rm SD}} \frac{L_{\delta,k}(\thetaInit) - \min L_{\delta,k}}{\psi D} - \frac{K \bar M_{\rm SD} \delta^2 }{ \psi D } - 2 \delta.
 \end{align*}
\end{theorem}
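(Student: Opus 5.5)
The plan is to construct one explicit solution of the sign-descent inclusion, show it stays outside $\Zmin^{\rm SD}$ at least until its first $k$ coordinates reach $0$, and then convert that travel time into the suboptimality gap.

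\textbf{Step 1: the solution.} For $\theta$ with $\theta_1=\dots=\theta_k=s\geq 0$, the first $k$ components of $\nabla L_{\delta,k}(\theta)$ equal $g_{\delta,k}'(s)=Ks+\psi D/\sqrt{k}>0$, so their sign is $1$. The remaining $d-k$ components of the gradient vanish identically, so their multivalued sign is $[-1,1]$, and we may select $0$ there. Hence $\theta(t)=(R-t)\textbf{1}_k$ is an absolutely continuous solution of $\dot\theta\in-\sign(\nabla L_{\delta,k}(\theta))$ on $[0,R]$. This is the solution whose existence the statement asserts.

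\textbf{Step 2: no entry before time $R$.} Along this solution $|\supp(\nabla L_{\delta,k}(\theta(t)))|=k$. By \Eqref{eq:m:SD}, $m_{\|\cdot\|_\infty}(\theta(t))=\sqrt{k}$, and therefore $\bar M_{\rm SD}=\sqrt k$. Using \Eqref{eq:robust-zone-sd} with $s=R-t\geq0$, we compare the two sides:
\[
\|\nabla L_{\delta,k}\|_1=kKs+\sqrt{k}\psi D,
\qquad
\psi\sqrt{k}\bigl(\|\theta\|_2+D\bigr)=\psi ks+\sqrt{k}\psi D .
\]
Since $K\geq\psi$, the left side is at least the right side, so $\theta(t)\notin\Zmin$ for $t\in[0,R]$. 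This gives $\timeentry\geq R$.

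\textbf{Step 3: relating $R$ to the gap.} We have
\[
L_{\delta,k}(\thetaInit)-L_{\delta,k}(0)=k\Bigl(\tfrac{K}{2}R^2+\tfrac{\psi D}{\sqrt k}R\Bigr).
\]
The hypothesis $\sqrt{k}R=\|\thetaInit\|_2\leq\psi D/K$ gives $KR\leq\psi D/\sqrt k$. The quadratic term is then at most half the linear one, so the gap is at most $\tfrac32\sqrt{k}\,\psi D R$. Therefore
\[
R\geq\frac{2}{3\sqrt k}\,\frac{L_{\delta,k}(\thetaInit)-L_{\delta,k}(0)}{\psi D}.
\]

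\textbf{Step 4: replacing $L(0)$ by $\min L$.} We split $L_{\delta,k}(0)=\min L_{\delta,k}+\bigl(L_{\delta,k}(0)-\min L_{\delta,k}\bigr)$. The function separates across coordinates, and each $g_{\delta,k}$ attains its minimum on $[-\delta,0]$ by the sign of $g'_{\delta,k}$. By the analogue of \Cref{lemma:properties:g},
\[
L_{\delta,k}(0)-\min L_{\delta,k}\leq k\bigl(K\delta^2+2\delta\psi D/\sqrt k\bigr).
\]
Substituting into Step 3 produces correction terms $\tfrac23\sqrt{k}K\delta^2/(\psi D)$ and $\tfrac43\delta$. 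With $\bar M_{\rm SD}=\sqrt k$, these are dominated by the terms $K\bar M_{\rm SD}\delta^2/(\psi D)$ and $2\delta$ in the statement, which gives the claimed bound.

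The only delicate point is Step 1: we must use the freedom in the multivalued sign (choosing the selection $0$ on the inactive coordinates) so that the trajectory stays on the diagonal and has constant support size $k$. The remaining steps are direct computations.
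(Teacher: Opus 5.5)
\textbf{Gap: the entry time is only bounded below, so $\bar M_{\rm SD}$ is never identified.} Your diagonal solution, the comparison showing $\theta(t)\notin\Zmin$ on $[0,R]$, and the gap computations in Steps~3--4 match the paper's proof. (Your $\delta$-corrections $\tfrac23\sqrt{k}K\delta^2/(\psi D)$ and $\tfrac43\delta$ are even slightly sharper.) The problem is the conclusion $\bar M_{\rm SD}=\sqrt{k}$ in Step~2. By definition, $\bar M_{\rm SD}$ is the average of $m_{\|\cdot\|_\infty}(\theta(s))$ over $[0,\timeentry]$, and it is only defined when $0<\timeentry<\infty$. You have only shown $\timeentry\geq R$, and your solution is only defined on $[0,R]$. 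Moreover, $\theta(R)=0$ is \emph{not} in $\Zmin$: at $s=0$ both sides of the zone inequality equal $\sqrt{k}\,\psi D$, and the inequality is strict. So your argument neither identifies $\timeentry$ nor controls $m$ after time $R$.

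This matters for the final inequality. The support of $\nabla L_{\delta,k}$ is at most $k$, so $\bar M_{\rm SD}\leq\sqrt{k}$, and the right-hand side of the claim is decreasing in $\bar M_{\rm SD}$. Your argument does not rule out a continuation that stays outside $\Zmin$ until it reaches a zero of $g'_{\delta,k}$, where $m=0$. That would push the average below $\sqrt{k}$, or make $\timeentry$ infinite, and your Step~3 bound in terms of $\sqrt{k}$ would no longer give the stated inequality.

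\textbf{The missing step is the matching upper bound $\timeentry\leq R$, which the paper proves explicitly.} Since $g'_{\delta,k}(0)=\psi D/\sqrt{k}>0$ and $g''_{\delta,k}(0)=K>0$, there is $\varepsilon\in(0,\delta)$ with $0<g'_{\delta,k}(r)<\psi D/\sqrt{k}$ for $r\in(-\varepsilon,0)$. Your selection (velocity $-1$ on the first $k$ coordinates, $0$ elsewhere) therefore stays admissible, and $\theta(t)=(R-t)\textbf{1}_k$ on $(R,R+\varepsilon)$ with support still equal to $k$. On this interval,
\[
\|\nabla L_{\delta,k}(\theta(t))\|_1=k\,g'_{\delta,k}(R-t)<\sqrt{k}\,\psi D\leq\psi\sqrt{k}\bigl(\|\theta(t)\|_2+D\bigr),
\]
so $\theta(t)\in\Zmin$. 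Hence $\timeentry=R$ and $\bar M_{\rm SD}=\frac1R\int_0^R\sqrt{k}\,dt=\sqrt{k}$.

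The definition of the entry time also requires a solution on all of $[0,\infty)$. Continue with $r'=-1$ until $r$ reaches the first zero of $g'_{\delta,k}$ in $[-\delta,0)$, which exists since $g'_{\delta,k}(-\delta)=0$. Then keep the trajectory constant; zero velocity is admissible there because $\sign(0)=[-1,1]$. With this addition your proof is complete and coincides with the paper's.
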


\begin{proof}
First, by the symmetry of the system, there exists a solution that always stays on the diagonal:
\begin{align*}
\forall t \geq 0, \quad \theta(t) = r(t) \textbf{1}_k
\end{align*}
where $r : \mathbb{R}_{+} \xrightarrow{} \mathbb{R}$.  We notice that for $1 \leq i \leq k$, if $\theta_i(t) = r(t) \geq 0$,
\begin{align*}
\nabla_i L_{\delta, k} (\theta(t)) & = K r(t) + \frac{\psi D}{\sqrt{k}}. 
\end{align*}
Therefore for $t \geq 0$
\begin{align*}
r(t)  >  0 \implies & r'(t) = -\sign(K r(t) + \frac{\psi D}{\sqrt{k}} )  = -1. 
\end{align*}
Therefore, we can define the travel time $T_0$ for $r(t)$ from $R$ to $0$ and we have,
\[ \forall t \in [0,T_{0}), \quad r(t) = R - t, \quad |\supp(\nabla L_{\delta,k}(\theta(t)))| = k.\]
Now we calculate the entry time. We recall from Table \ref{table:zoology:minority:zone} that the entry time condition for Sign Descent is
\[ \| \nabla L(\theta) \|_1 < \psi \sqrt{|\supp(\nabla L(\theta))|}(\| \theta \|_2 + D ). \]
In our diagonal setting, for $0 \leq t < T_0$,
\begin{align*}
\| \nabla L(\theta(t))\|_1 & =  k |  K r(t) + \frac{\psi D}{\sqrt{k}} | \\
& =  | k K r(t) +  \sqrt{k} \psi  D |\\
\psi \sqrt{|\supp(\nabla L(\theta(t)))|}(\| \theta(t)\|_2 + D ) & = \psi \sqrt{k} (\sqrt{k}|r(t)| + D) \\
 & = \psi k |r(t)| + \psi \sqrt{k} D.
\end{align*}
We notice that 
\[ r(t) \geq  0 \implies \| \nabla L(\theta(t))\|_1  \geq \psi \sqrt{|\supp(\nabla L(\theta(t)))|}(\| \theta(t)\|_2 + D ). \]
Therefore, we deduce that $R=T_0 \leq \timeentry$.

We now show that $\timeentry=R$. Since
$g_{\delta,k}'(0)=\psi D/\sqrt{k}>0$ and
$g_{\delta,k}''(0)=K>0$, there exists
$\varepsilon\in(0,\delta)$ such that
\[
0<g_{\delta,k}'(r)<\frac{\psi D}{\sqrt{k}}
\qquad\text{for every }r\in(-\varepsilon,0).
\]
Consequently, the chosen diagonal solution continues
to satisfy $r(t)=R-t$ for $R<t<R+\varepsilon$. For these times,
$|\supp(\nabla L_{\delta,k}(\theta(t)))|=k$ and
\[
\begin{aligned}
\|\nabla L_{\delta,k}(\theta(t))\|_1
&=k\,g_{\delta,k}'(r(t))\\
&<\psi D\sqrt{k}\\
&\le \psi\sqrt{k}\bigl(\|\theta(t)\|_2+D\bigr).
\end{aligned}
\]
Thus the trajectory lies in the minority aware zone
for every $t\in(R,R+\varepsilon)$, which gives $\timeentry\leq R$.
Together with $\timeentry\geq R$, this proves $\timeentry=R$.

Now, we introduce the suboptimality gap. 
\begin{align*}
L_{\delta,k}(\thetaInit) - L_{\delta,k}(0) & = k \left( g_{\delta,k}(R) - g_{\delta,k}(0) \right) \\
& = k \int_{0}^{R} g_{\delta,k}'(s) ds \\
& = k \Big( \frac{K R^2}{2} + \frac{\psi D R}{\sqrt{k}}\Big) \leq \frac{3\psi D R \sqrt{k}}{2}
\end{align*}
since $\| \thetaInit\|_2= R\sqrt{k} \leq \frac{\psi D}{K} $. Therefore
\begin{align*}
    \timeentry \geq T_0 = R \geq \frac{2}{3 \sqrt{k}} \frac{L_{\delta,k}(\thetaInit) - L_{\delta,k}(0)}{\psi D}.
\end{align*}
By construction, we also know that $L_{\delta,k}(0) - \min L_{\delta,k} \leq k K\delta^2 + 2 \delta \sqrt{k} \psi D $. We deduce that
\begin{align*}
\timeentry \geq  \frac{2}{3 \sqrt{k}} \frac{L_{\delta,k}(\thetaInit) - \min L_{\delta,k}}{\psi D} - \frac{k K \delta^2 + 2 \delta \sqrt{k} \psi D }{\sqrt{k} \psi D }.
\end{align*}
After noting that
\[
\begin{aligned}
\bar M_{\rm SD}
&=\frac{1}{\timeentry}
  \int_0^{\timeentry}
  \sqrt{|\supp(\nabla L_{\delta,k}(\theta(t)))|}\,dt\\
&=\frac{1}{R}\int_0^R\sqrt{k}\,dt
=\sqrt{k}
\end{aligned}
\]
from \Eqref{eq:m:SD}, we know
\begin{align*}
\timeentry \geq  \frac{2}{3 \bar M_{\rm SD}} \frac{L_{\delta,k}(\thetaInit) - \min L_{\delta,k}}{\psi D} - \frac{K \bar M_{\rm SD} \delta^2 + 2 \psi \delta  D }{ \psi D }.
\end{align*}
\end{proof}

\subsubsection{Lower bound for Spectral Descent}
We consider a very similar diagonalizing process for the Spectral Descent lower bound. Let  $I_k \in \mathbb{R}^{a \times b}$ be the partial identity matrix with $1 \le k \le \min(a, b)$. Note that $\Vert{}I_k\Vert{}_F = \sqrt{k}$.
 We define the following objective function, for $k \geq 1 $:
\begin{equation}
\label{eq-def-spectral-descent-lower-bound}
\forall\Theta\in\mathbb{R}^{a\times b}, \quad L_{\delta,k}(\Theta) := \sum_{i=1}^k g_{\delta,k}(\Theta_{ii}).
\end{equation}

\begin{theorem}
\label{th:LB:min:awa:spec}
 Consider $\ThetaInit = R I_k$ with $R > 0$ and $\|\ThetaInit\|_F \leq \frac{\psi D}{K}$ and that we follow the Spectral Descent dynamics of $L_{\delta,k}$ defined in \Cref{{eq-def-spectral-descent-lower-bound}}. Then there exists an absolutely continuous solution of the spectral-descent inclusion for which the entry time is lower bounded as
\begin{align*}
\timeentry \geq  \frac{2}{3 m_{\Vert{}\cdot\Vert{}_{sp}}} \frac{L_{\delta,k}(\ThetaInit) - \min L_{\delta,k}}{\psi D} -  \frac{K m_{\Vert{}\cdot\Vert{}_{sp}} \delta^2 }{\psi D } - 2 \delta.
\end{align*}
Here $m_{\Vert{}\cdot\Vert{}_{sp}} = m_{\Vert{}\cdot\Vert{}_{sp}}(\Theta(t)) =\sqrt{k}$ ($0 \le t < \timeentry$).
\end{theorem}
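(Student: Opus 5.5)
I will mirror the proof of \Cref{th:LB:min:awa:sign}, replacing the coordinate diagonal by the matrix diagonal and the sign map by the nuclear-norm subdifferential $\Spec$.

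\textbf{Step 1: a diagonal solution.} I look for a solution of the form $\Theta(t)=r(t)I_k$. At such a point the gradient is
\[
\nabla L_{\delta,k}(r I_k)=g_{\delta,k}'(r)\,I_k ,
\]
since only the first $k$ diagonal entries appear in $L_{\delta,k}$. If $g'_{\delta,k}(r)>0$, this matrix has compact SVD $U\Sigma V^\top$ with $U=[e_1,\dots,e_k]\in\mathbb{R}^{a\times k}$, $V=[e_1,\dots,e_k]\in\mathbb{R}^{b\times k}$ and $\Sigma=g'_{\delta,k}(r)I$. Hence $UV^\top=I_k\in\Spec(\nabla L_{\delta,k}(rI_k))$, choosing $W=0$ in \Eqref{eq:rho:grad:spectral}. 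The selection $\dot\Theta=-I_k$ then corresponds to $r'(t)=-1$, which gives an absolutely continuous solution $r(t)=R-t$ while $g'_{\delta,k}(r(t))>0$. This holds for $r>0$, and for $r\in(-\varepsilon,0)$ with $\varepsilon$ chosen as in the sign-descent proof. Along this piece the rank of the gradient is $k$, so by \Eqref{eq:bar:M:spec} we get $m_{\|\cdot\|_{\rm sp}}(\Theta(t))=\sqrt{k}$.

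\textbf{Step 2: compute the entry time exactly.} I use the spectral zone from \Cref{table:zoology:minority:zone},
\[
\|\nabla L\|_{\rm nuc}<\psi\sqrt{\operatorname{rank}}\,(\|\Theta\|_F+D).
\]
On the diagonal, $\|\nabla L\|_{\rm nuc}=k\,g'_{\delta,k}(r)$ and $\|\Theta\|_F=\sqrt{k}|r|$.

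For $r\ge0$ the left side equals $kKr+\sqrt{k}\psi D$ and the right side equals $\psi kr+\psi\sqrt{k}D$. Since $K\ge\psi$, the trajectory is not in $\Zmin$, so $\timeentry\ge R$.

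For $r\in(-\varepsilon,0)$ we have $k\,g'_{\delta,k}(r)<\sqrt{k}\psi D\le$ the right side, so the trajectory is in $\Zmin$. Hence $\timeentry=R$.

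\textbf{Step 3: relate $R$ to the suboptimality gap.} The gap from the initial point to the origin is
\[
L_{\delta,k}(\ThetaInit)-L_{\delta,k}(0)=k\Bigl(\tfrac{KR^2}{2}+\tfrac{\psi DR}{\sqrt{k}}\Bigr).
\]
The assumption $\sqrt{k}R=\|\ThetaInit\|_F\le\psi D/K$ gives $kKR^2\le\sqrt{k}R\psi D$, so the gap is at most $\tfrac32\sqrt{k}\psi DR$. Therefore
\[
R\ge\frac{2}{3\sqrt{k}}\,\frac{L_{\delta,k}(\ThetaInit)-L_{\delta,k}(0)}{\psi D}.
\]

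\textbf{Step 4: pass from $L_{\delta,k}(0)$ to $\min L_{\delta,k}$.} The function $L_{\delta,k}$ is separable in $\Theta_{11},\dots,\Theta_{kk}$. Applying the lemma for $g_{\delta,k}$ coordinatewise gives
\[
L_{\delta,k}(0)-\min L_{\delta,k}\le kK\delta^2+2\delta\sqrt{k}\psi D .
\]
Substituting this into Step 3 and using $m_{\|\cdot\|_{\rm sp}}=\sqrt{k}$ produces exactly the stated inequality.

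\textbf{Main obstacle.} The delicate point is Step 1. The spectral field is multivalued, so I must check that $-I_k$ is an admissible velocity, i.e. the correct element of $\Spec$ with $W=0$, including when $k<\min(a,b)$. I also need the selected curve to be a genuine solution of the inclusion on $[0,R+\varepsilon)$, which follows from the explicit subdifferential formula. The remaining steps are the same computations as in the sign-descent case.
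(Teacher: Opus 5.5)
Your proposal is correct and follows the paper's proof essentially step for step: the diagonal solution $\Theta(t)=(R-t)I_k$ with admissible velocity $-I_k=-UV^\top$ (taking $W=0$), the exact computation $\timeentry=R$ via the $\varepsilon$-argument using $K\ge\psi$, and the suboptimality-gap bookkeeping borrowed from the sign-descent case. One small addition is needed, because the definition of $\timeentry$ asks for a solution on all of $[0,\infty)$ while you only build it on $[0,R+\varepsilon)$. The paper handles this by keeping the velocity $-I_k$ until $r(t)$ reaches the first zero of $g'_{\delta,k}$ in $[-\delta,0)$ and then holding the trajectory constant, which is admissible because zero velocity belongs to the spectral field at a critical point.
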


\begin{proof}
First, by symmetry, there exists a trajectory that is restricted to $\Theta(t) = r(t)I_k$ with $r:\mathbb{R}_{+} \xrightarrow{} \mathbb{R}$. For $t\geq 0$, the matrix gradient is :
\begin{align*}
\nabla L_{\delta,k}(r(t) I_k) = g'_{\delta,k}(r(t)) I_k. 
\end{align*}
Similar to the Sign Descent case, we note $T_0$ the travel time for $r(t)$ from $R$ to $0$. 
Whenever $g'_{\delta,k}(r(t))>0$, we have
\[
I_k \in
\operatorname*{arg\,max}_{\|V\|_{\rm sp}\leq 1}
\operatorname{tr}\bigl((g'_{\delta,k}(r(t))I_k)^\top V\bigr)
=
\ngradnorm{\|\cdot\|_{\mathrm{sp}}}{ L_{\delta,k}(\Theta(t))}.
\]
We select the velocity $-I_k$ until $r(t)$ reaches
the first zero of $g'_{\delta,k}$, and keep the trajectory constant
thereafter. This zero lies in $[-\delta,0)$, and zero velocity is
admissible there.
Hence, we know for $t \in [0,T_0)$,
\begin{align*}
r(t) = R - t. 
\end{align*}
Therefore, for the entry time we know for $0 \leq t \leq R $
\begin{align*}
\Vert{}\nabla L_{\delta,k}(\Theta(t))\Vert{}_{nuc} = k|g_{\delta,k}'(r(t))| & = | K k  r(t) + \psi D \sqrt{k} | \\
\psi \sqrt{\text{rank}(\nabla L_{\delta,k}(\Theta(t)))} \big(\Vert{}\Theta(t)\Vert{}_F + D\big) & =\psi \sqrt{k} ( \sqrt{k}|r(t)| + D ).
\end{align*}
Therefore $\timeentry \geq T_0 = R$ as in the Sign Descent case, from \Cref{table:zoology:minority:zone}.

We now show that $\timeentry=R$. Since
$g'_{\delta,k}(0)=\psi D/\sqrt{k}>0$ and
$g''_{\delta,k}(0)=K>0$, there exists $\varepsilon\in(0,\delta)$
such that
\[
0<g'_{\delta,k}(r)<\frac{\psi D}{\sqrt{k}}
\qquad\text{for every }r\in(-\varepsilon,0).
\]
For $R<t<R+\varepsilon$, the selected trajectory
therefore still satisfies $r(t)=R-t$, its gradient has rank $k$, and
\[
\begin{aligned}
\|\nabla L_{\delta,k}(\Theta(t))\|_{\rm nuc}
&=k\,g'_{\delta,k}(r(t))\\
&<\psi D\sqrt{k}\\
&\le \psi\sqrt{k}\bigl(\|\Theta(t)\|_F+D\bigr).
\end{aligned}
\]
Thus the trajectory lies in the minority aware zone
for every $t\in(R,R+\varepsilon)$, giving $\timeentry\leq R$.
Together with $\timeentry\geq R$, this proves $\timeentry=R$.
Moreover, $m_{\Vert{}\cdot\Vert{}_{sp}}(\Theta(t))=\sqrt{k}$ for
$0\leq t<R$, and its time average satisfies
\[
\begin{aligned}
\frac{1}{\timeentry}\int_0^{\timeentry}
m_{\Vert{}\cdot\Vert{}_{sp}}(\Theta(t))\,dt
&=\frac{1}{R}\int_0^R
\sqrt{\operatorname{rank}(\nabla L_{\delta,k}(\Theta(t)))}\,dt\\
&=\sqrt{k}.
\end{aligned}
\]

We introduce the suboptimality gap and obtain a similar conclusion as for the Sign Descent case, using that $m_{\Vert{}\cdot\Vert{}_{sp}} = \sqrt{k}$ from \Eqref{eq:bar:M:spec}: 
\begin{align*}
\timeentry \geq  \frac{2}{3 m_{\Vert{}\cdot\Vert{}_{sp}}} \frac{L_{\delta,k}(\ThetaInit) - \min L_{\delta,k}}{\psi D} - \frac{Km_{\Vert{}\cdot\Vert{}_{sp}} \delta^2 + 2 \psi  \delta  D }{ \psi D }.
\end{align*}
\end{proof}

\section{Details for numerical experiments}
\label{sec-appendix-numerical-experiments}

First, we present the material and computational time required to run our experiments in \Cref{tab:model_comparison}.

\begin{table}[ht]
\begin{center}
\begin{tabular}{lccc}
\toprule
\multicolumn{1}{c}{\bf Experiment }& \multicolumn{1}{c}{\bf Dataset}  & \multicolumn{1}{c}{\bf GPU} & \multicolumn{1}{c}{\bf Computational Time} \\
\midrule
Linear Regression & Superconductor & None & <10min \\
Linear Regression & YearPredictionMSD & None & <10min \\
Classification    & Covertype & Nvidia Spark & 12H \\
Classification    & Cifar10 & Nvidia Spark & 24H \\
LLM               & WikiText-103 & Nvidia Spark & 48H \\
\bottomrule
\end{tabular}
\end{center}
\caption{Computational resources and time across different models.}
\label{tab:model_comparison}
\end{table}

\subsection{Linear regression}
\label{sec-appendix-linear-regression}
To generate a family of problems with consistent properties across varying dimensions, we apply Principal Component Analysis (PCA). We will create a series of subproblems with only $\alpha\%$ of the dimension of the original problem. 

The minority loss function is defined on a subset of $k \in \{3, \ldots ,20\}$ samples that exhibit the largest second moments in the feature matrix $X$. We then numerically compute the entry times for both optimizers alongside the mass of the minority group, $\psi$. By deriving empirical scaling laws that relate these quantities to the dimension $d$, we can directly compare the experimental results against our theoretical predictions.

The experimental protocol is defined as follows.

$\bullet$ Choose a diameter $D \in  \{0.1, 0.3, 1\} $.
    
    $\bullet$
    Choose a shared learning rate $\eta \in \left\{ \frac{1}{100\lambda_{\max}}, \frac{1}{300\lambda_{\max}}, \frac{1}{1000\lambda_{\max}} \right\}$ for both GD and SD. Here $\lambda_{\max}$ is the largest eigenvalue of the global loss Hessian.

    $\bullet$ For each configuration $(\alpha,k)$, execute both GD and SD.
    
$\bullet$ Estimate $\psi$ as the maximum eigenvalue of the Hessian of the minority $\psi \xleftarrow{} \lambda_{\max}(\nabla^2(L_0))$. 

$\bullet$ Compute $\timeentry(\mathrm{GD})$  and $\timeentry(\mathrm{SD})$  using their formal definitions.

$\bullet$ Compute $\frac{\timeentry(\mathrm{GD})}{\timeentry(\mathrm{SD})}$ for each configuration $(\alpha, k)$ and make a log log plot to fit the model $\frac{\timeentry(\mathrm{GD})}{\timeentry(\mathrm{SD})} \propto d^{c_1} \psi^{c_2}.$

We will apply the protocol to the two following dataset.

\textbf{Superconductor.} The Superconductor dataset comprises $21,000$ samples and $80$ features, with the material's critical temperature serving as the target variable. 

\textbf{YearPredictionMSD.} The YearPredictionMSD dataset contains $513,000$ samples and $90$ features, with the release year serving as the prediction target.

The results are plotted in \Cref{fig:linear-regression} and the regression results can be found in Tables \ref{tab:entry-times-linear-regression-msd} and \ref{tab:entry-times-linear-regression-superconductivity}. 

\begin{figure}
    \centering
    \includegraphics[width=0.9\linewidth]{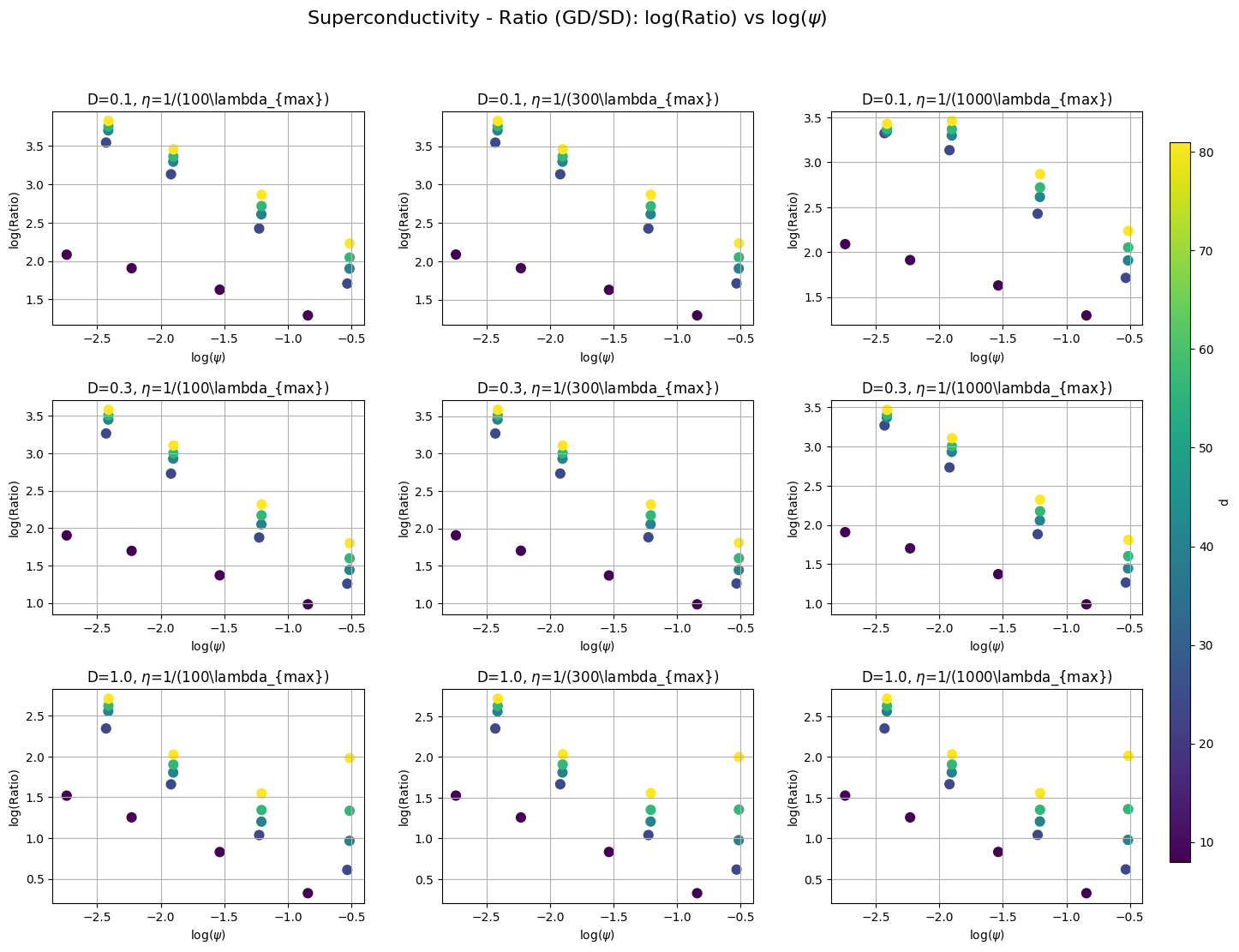}
    \includegraphics[width=0.9\linewidth]{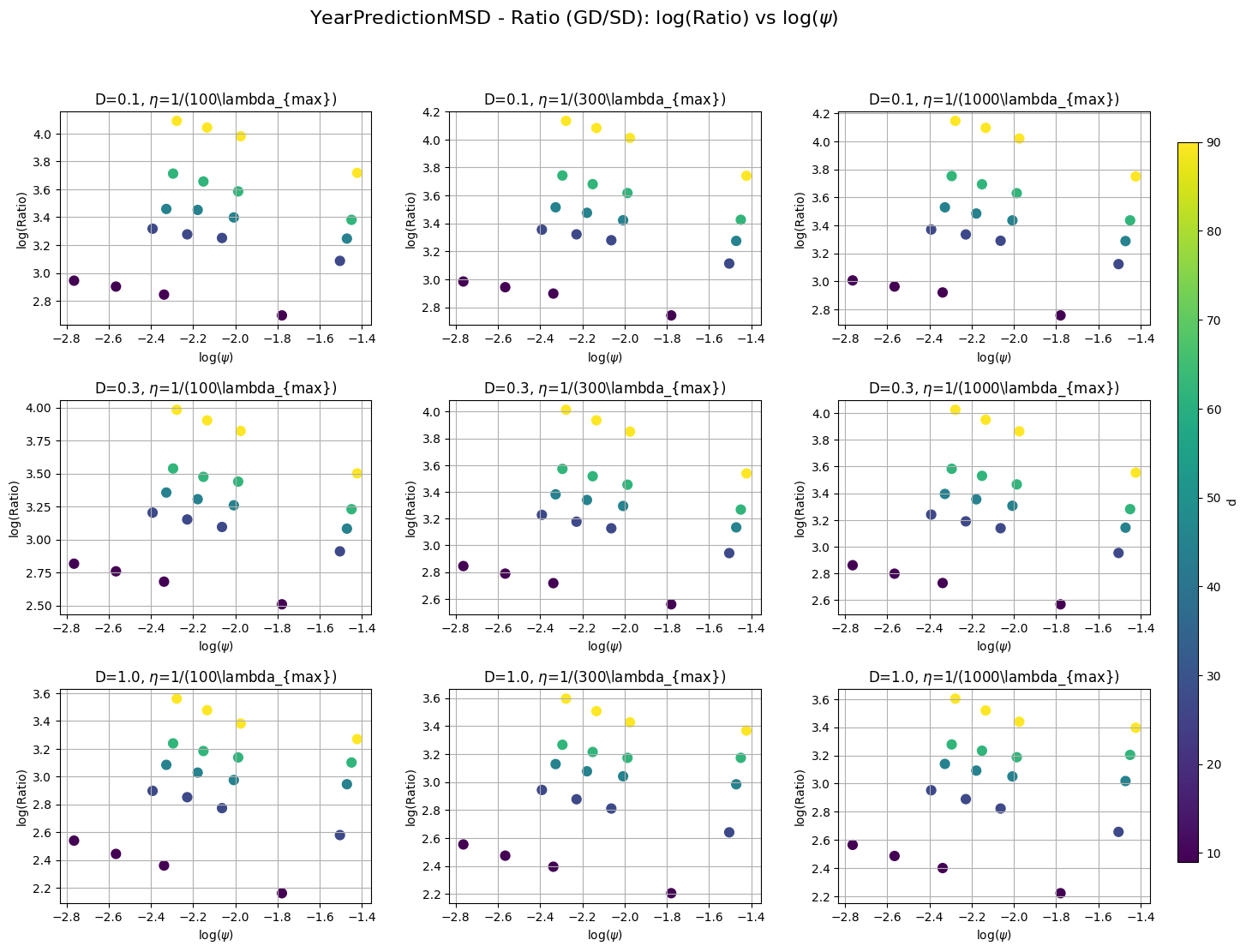}
    \caption{Ratio of Entry times for Gradient Descent and Sign Descent in the linear regression setting. The top panel corresponds to the Superconductor dataset, and the bottom panel corresponds to the YearPredictionMSD dataset.}
    \label{fig:linear-regression}
\end{figure}

The estimated log log regression coefficients are presented in \Cref{tab:entry-times-linear-regression-msd,tab:entry-times-linear-regression-superconductivity}. 

\begin{table}[ht]
\centering
\vspace{2mm}
\begin{tabular}{@{} lccc @{}}
\toprule
& $\eta = \frac{1}{100\lambda_{\max}}$ & $\eta = \frac{1}{300\lambda_{\max}}$ & $\eta = \frac{1}{1000\lambda_{\max}}$ \\
\midrule
$D = 0.1$ & $\frac{\mathcal{T}_{GD}}{\mathcal{T}_{SD}} \propto \frac{d^{0.51}}{\psi^{0.34}}$ & $\frac{\mathcal{T}_{GD}}{\mathcal{T}_{SD}} \propto \frac{d^{0.50}}{\psi^{0.35}}$ & $\frac{\mathcal{T}_{GD}}{\mathcal{T}_{SD}} \propto \frac{d^{0.50}}{\psi^{0.35}}$ \\ [3ex]
$D = 0.3$ & $\frac{\mathcal{T}_{GD}}{\mathcal{T}_{SD}} \propto \frac{d^{0.52}}{\psi^{0.39}}$ & $\frac{\mathcal{T}_{GD}}{\mathcal{T}_{SD}} \propto \frac{d^{0.51}}{\psi^{0.38}}$ & $\frac{\mathcal{T}_{GD}}{\mathcal{T}_{SD}} \propto \frac{d^{0.51}}{\psi^{0.38}}$ \\ [3ex]
$D = 1.0$ & $\frac{\mathcal{T}_{GD}}{\mathcal{T}_{SD}} \propto \frac{d^{0.49}}{\psi^{0.29}}$ & $\frac{\mathcal{T}_{GD}}{\mathcal{T}_{SD}} \propto \frac{d^{0.50}}{\psi^{0.26}}$ & $\frac{\mathcal{T}_{GD}}{\mathcal{T}_{SD}} \propto \frac{d^{0.50}}{\psi^{0.24}}$ \\ [1ex]
\bottomrule
\end{tabular}
\caption{Estimated log log regression coefficients for different choices of $(D,\eta)$ on the YearPredictionMSD dataset.}
\label{tab:entry-times-linear-regression-msd}
\end{table}

\begin{table}[ht]
\centering
\vspace{2mm}
\begin{tabular}{@{} lccc @{}}
\toprule
& $\eta = \frac{1}{100\lambda_{\max}}$ & $\eta = \frac{1}{300\lambda_{\max}}$ & $\eta = \frac{1}{1000\lambda_{\max}}$ \\
\midrule
$D = 0.1$ & $\frac{\mathcal{T}_{GD}}{\mathcal{T}_{SD}} \propto \frac{d^{0.71}}{\psi^{0.80}}$ & $\frac{\mathcal{T}_{GD}}{\mathcal{T}_{SD}} \propto \frac{d^{0.71}}{\psi^{0.80}}$ & $\frac{\mathcal{T}_{GD}}{\mathcal{T}_{SD}} \propto \frac{d^{0.64}}{\psi^{0.68}}$ \\ [3ex]
$D = 0.3$ & $\frac{\mathcal{T}_{GD}}{\mathcal{T}_{SD}} \propto \frac{d^{0.65}}{\psi^{0.91}}$ & $\frac{\mathcal{T}_{GD}}{\mathcal{T}_{SD}} \propto \frac{d^{0.65}}{\psi^{0.91}}$ & $\frac{\mathcal{T}_{GD}}{\mathcal{T}_{SD}} \propto \frac{d^{0.64}}{\psi^{0.88}}$ \\ [3ex]
$D = 1.0$ & $\frac{\mathcal{T}_{GD}}{\mathcal{T}_{SD}} \propto \frac{d^{0.55}}{\psi^{0.69}}$ & $\frac{\mathcal{T}_{GD}}{\mathcal{T}_{SD}} \propto \frac{d^{0.55}}{\psi^{0.68}}$ & $\frac{\mathcal{T}_{GD}}{\mathcal{T}_{SD}} \propto \frac{d^{0.56}}{\psi^{0.68}}$ \\ [1ex]
\bottomrule
\end{tabular}
\caption{Estimated log log regression coefficients for different choices of $(D,\eta)$ on the  Superconductivity Dataset.}
\label{tab:entry-times-linear-regression-superconductivity}
\end{table}

\subsection{Classification}
To closely align with our theoretical framework, we evaluate optimizers without momentum for the Covertype and CIFAR-10 classification tasks. Across all experiments, the batch size is fixed at $128$. Full hyperparameter configurations are detailed in \Cref{tab-hyperparameters-classification}.
\begin{table}[ht]
\begin{center}
\begin{tabular}{lc}
\toprule
Optimizers &  Parameters \\
\midrule
AdamW & $lr=10^{-3},  \beta=(0.0, 0.0),\text{weight_decay}=10^{-3} $\\
SGD & $lr=0.05,  \text{momentum}=0.0,\text{weight_decay}= 10^{-3} $ \\
Muon & $lr=0.02,  \text{momentum}=0.0, \text{adamw_lr}=10^{-3},\text{adamw_wd}=10^{-2} $\\
\bottomrule
\end{tabular}
\end{center}
\caption{Hyperparameters of optimizers for classification}
\label{tab-hyperparameters-classification}
\end{table}
We repeat all experiments across 3 fixed random seeds and report the average accuracy at each iteration. The results for CIFAR-10 are shown in \Cref{fig:cifar}. 
Those from Covertype are shown in
\Cref{tab:covertype_eval}.

\begin{figure}
    \centering
    \includegraphics[scale=0.4]{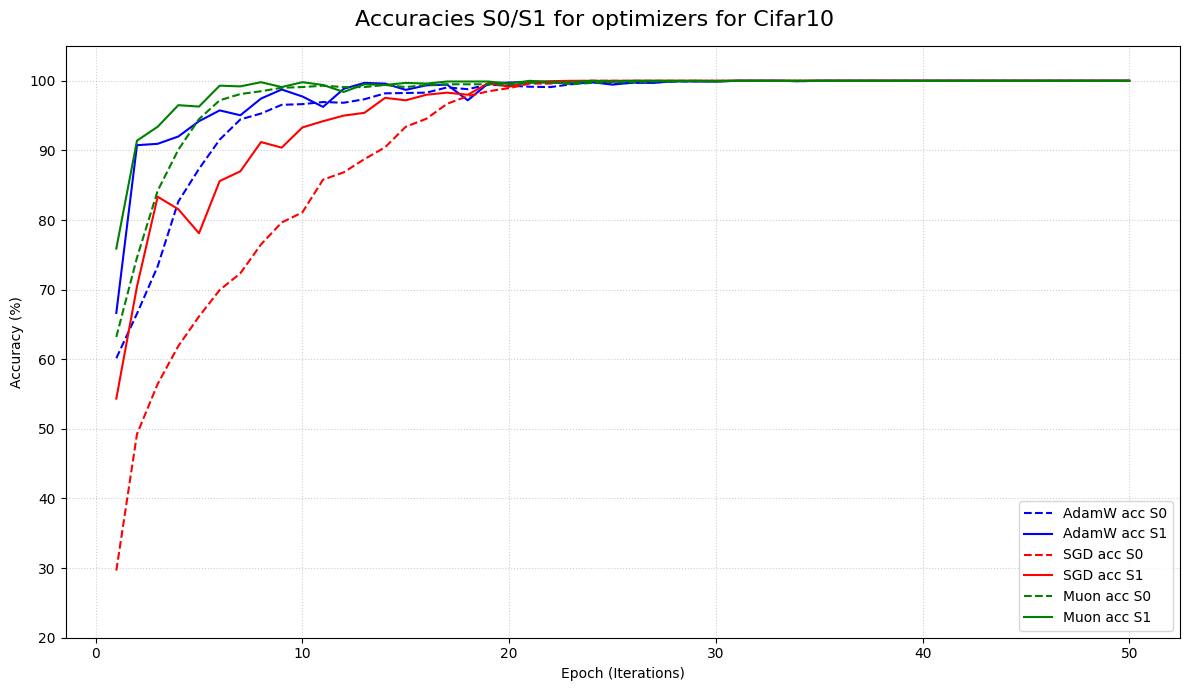}
    \caption{Evolution of $S_0/S_1$ accuracy during training phase for Cifar10. Here $S_0$ denotes the minority group and $S_1$ the majority group.}
    \label{fig:cifar}
\end{figure}

Across both the tabular and image classification experiments, we observe a consistent trend: although AdamW, SGD, and Muon achieve comparable final test performance on the majority population, their efficiency on the hidden minority diverges significantly. Notably, normalized gradient methods---such as AdamW, and particularly Muon---demonstrate substantially stronger learning capabilities on the minority group.

\begin{table}[h]
\centering
\small
\begin{tabular}{@{}lcc@{}}
\toprule
\cmidrule(lr){2-3} 
\textbf{Metric} & AdamW & SGD \\
\midrule
Average $S_0$ accuracy & $63.93\%$ & $51.40\%$ \\
Average $S_1$ accuracy & $90.99\%$ & $88.25\%$ \\
\bottomrule
\end{tabular}
\caption{\footnotesize Evaluation of minority/majority accuracy, averaged during training, on the Covertype dataset with AdamW and SGD.}
\label{tab:covertype_eval}
\end{table}

\subsection{Language modelling}
We train a SmallGPT2 model on WikiText-103 from scratch with the BPE Tokenizer. The details of the experiment can be found in \Cref{tab-llm-hyperparameters}.
We consider the normalized losses
\begin{align*}
\overline{L}_{1}(\theta) & = \frac{\sum_{i=1}^{N} L_i(\theta) \cdot \mathbb{I}_{\text{major}}(y_i)}{\sum_{i=1}^{N} \mathbb{I}_{\text{major}}(y_i)} \\
\overline{L}_{0}(\theta) & = \frac{\sum_{i=1}^{N} L_i \cdot \mathbb{I}_{\text{minor}}(y_i)}{\sum_{i=1}^{N} \mathbb{I}_{\text{minor}}(y_i)},
\end{align*}
where $\mathbb{I}_{\text{major}}(y_i)$ indicates that the token $y_i$ belongs to the majority group, and $\mathbb{I}_{\text{minor}}(y_i)$ indicates that the token $y_i$ belongs to the minority group.

To account for the distinct minimum values of these objectives, we normalize the loss trajectories using a fully pre-trained SmallGPT-2 model with parameters $\theta_\star$. By evaluating the normalized majority loss $\overline{L}_1 -\overline{L}_1(\theta_\star)$ and minority loss $\overline{L}_0 - \overline{L}_0(\theta_\star)$ , we plot the shifted objective trajectories,  to compare the optimizer dynamics.

The results are presented in \Cref{fig:smallgpt2-full}.
The optimization of the majority loss $L_1$ aligns with the established hierarchy of convergence rates, where Muon outpaces AdamW, and AdamW outperforms SGD. However, the optimization dynamics of SGD reveal a fundamental divergence: it neglects to minimize the minority loss $L_0$, prioritizing majority tokens at the direct expense of rare tokens.
Interestingly, by step 2000, the Muon-optimized parameters actually surpass the pre-trained baseline's performance on rare tokens, albeit at the cost of a degraded majority loss. This behavior suggests a potential multi-stage learning process, prompting the hypothesis that a later training phase may reverse this trade-off to recover majority performance.
\begin{table}[ht]
\begin{center}
\begin{tabular}{lc}
\toprule
 &  Settings \\
\midrule
Model architecture & $L=H=12, d_{\rm model}=768, \sigma_{\rm init} = 0.02$\\
Common Training& $\textrm{Sequence Length} =1024, \textrm{Accumulation steps = 16}, \textrm{Batch Size}=512$ \\
AdamW & $lr=5 \times 10^{-4},  \beta=(0.9, 0.999), \epsilon = 10^{-8}, \text{weight_decay}=10^{-2} $\\
SGD & $lr=0.02,  \text{momentum}=0.9 $ \\
Muon & $lr=0.02,  \text{momentum}=0.95 $\\
\bottomrule
\end{tabular}
\end{center}
\caption{Hyperparameters of optimizers for LLM}
\label{tab-llm-hyperparameters}
\end{table}

\begin{figure}[h]
    \centering
    \includegraphics[width=0.8\linewidth]{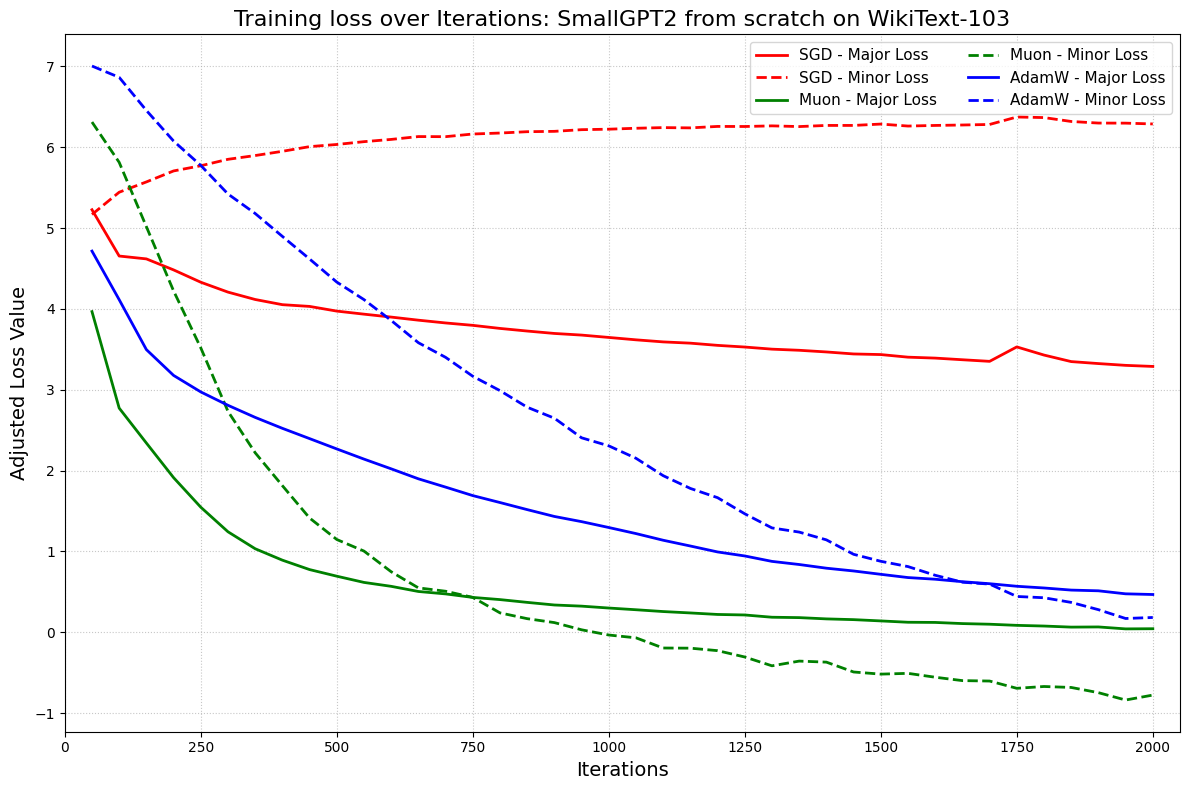}
    \caption{Evolution of training loss of SmallGPT2 from scratch on WikiText-103 over 2000 iterations. We plot the relative loss $\overline{L}_1 - \overline{L}_1(\theta_\star)$ and $\overline{L}_0 - \overline{L}_0(\theta_\star)$ for comparison.}
    \label{fig:smallgpt2-full}
\end{figure}

\section{Table of comparison of \texorpdfstring{$\kappa$}{kappa}}
\label{sec-range-of-kappa}

In Tables \ref{tab:vision}, \ref{tab:text_gen} and \ref{tab:tabular},
we now examine the values of $\kappa$ reported in the literature across three task families: image classification, autoregressive language modelling and tabular learning. Among the configurations reported below, peak learning rates range from $10^{-3}$ to $2\times10^{-2}$ for image classification, from $10^{-4}$ to $5\times10^{-2}$ for tabular learning, and from $1.5\times10^{-4}$ to $10^{-1}$ for language modelling. These ranges yield $\kappa\approx20$, $\kappa\approx500$, and $\kappa\approx700$, respectively.

From \Cref{section:intro},
recall that $\kappa=\eta_{\max}/\eta_{\min}$ denotes the ratio between the largest and smallest learning-rate magnitudes before scheduling. Applying a shared multiplicative schedule, such as cosine annealing, preserves these learning-rate ratios whenever the scheduling factor is positive.

\begin{table}[ht]
\centering
\begin{tabular}{lllllll}
\toprule
\textbf{Citation} & \textbf{Model (Size)} & \textbf{Dataset} & \textbf{Optimizer} & \textbf{Peak LR}& \textbf{Schedule} \\
\midrule
\cite{krizhevsky2012alexnet} & AlexNet (60M) & ImageNet & SGD & 0.01 &  Step \\
\cite{loshchilov2017adamw} & AlexNet (60M) & ImageNet & AdamW & 1e-3  & Cosine \\
\cite{dosovitskiy2020vit} & ViT-Base (86M) & ImageNet & AdamW & 1e-3 &  Cosine \\
\cite{ancey2026muon} & ViT-Base (86M) & ImageNet & Muon & 0.02 &  Cosine \\
\bottomrule
\end{tabular}
\caption{Image Classification, $\kappa \approx 20$.}
\label{tab:vision}
\end{table}

\begin{table}[ht]
\centering
\begin{tabular}{lllllll}
\toprule
\textbf{Citation} & \textbf{Model (Size)} & \textbf{Dataset} & \textbf{Optimizer} & \textbf{Peak LR} & \textbf{Schedule} \\
\midrule
\cite{touvron2023llama} & LLaMA (7B) & 1T tokens & AdamW & 3e-4 & Cosine  \\
\cite{touvron2023llama} & LLaMA (65B) & 1.4T tokens & AdamW & 1.5e-4  & Cosine  \\
\cite{karpathy2023nanogpt} & NanoGPT (124M) & OpenWebText & AdamW & 6e-4  & Cosine \\
\cite{jin2026modded} & NanoGPT (124M) & OpenWebText & SGD (Momentum) & $\sim$0.1 & Cosine \\
\cite{jordan2024muon} & NanoGPT (124M) & OpenWebText & Muon & 0.02  & Varies \\
\cite{jin2026modded} & Modded-NanoGPT & Fineweb & Muon & 0.025 & Trapezoidal \\
\bottomrule
\end{tabular}
\caption{Autoregressive Language Modeling, $\kappa \approx 700$.}
\label{tab:text_gen}
\end{table}

\begin{table}[ht]
\centering
\begin{tabular}{lllllll}
\toprule
\textbf{Citation} & \textbf{Model / Architecture} & \textbf{Dataset} & \textbf{Optimizer} & \textbf{Peak LR}  & \textbf{Schedule} \\
\midrule
\cite{gorishniy2021revisiting} & ResNet (Tabular) & Adult, Higgs, CovType & AdamW & 1e-3 & Constant / Step \\
\cite{arik2021tabnet} & TabNet & Forest Cover, KDD & Adam & 1e-3  & Step Decay \\
\cite{arik2021tabnet} & TabNet & Rossmann, Adult & Adam & 2e-2 &  Step Decay \\
\cite{wang2024tabkanet} & TabKANet & General Tabular Reg. & SGD & 1e-4  & Step \\
\cite{qu2026tabiclv2betterfasterscalable} & TabICLv2 & Synthetic & Muon & 1e-4 - 2e-5 &  Cosine / Trap \\
\bottomrule
\end{tabular}
\caption{Tabular Data, $\kappa \approx 1000$.}
\label{tab:tabular}
\end{table}

\section{Further Geometric Considerations under strong convexity}\label{derdeder}
In this section, we examine how strong convexity of the loss function impacts our theoretical results.

\begin{lemma}
Suppose that $L$ is $\mu$-strongly-convex, $L_0 \in \Kmin$ and $\hat{\theta} = \arg\min L$. Then  
\[ \forall \theta \in \mathbb{R}^d, \quad |L_0(\theta) - L_0(\hat{\theta})| \leq \frac{\psi}{2 \mu^2} \|  \nabla L(\theta) \|_2^2  + \frac{\psi}{\mu} \| \nabla L(\theta) \|_2 (  \| \hat{\theta}\|_2 + D ).\]
\end{lemma}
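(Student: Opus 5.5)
Since $L$ is $\mu$-strongly convex with minimizer $\hat\theta$, we have $\nabla L(\hat\theta)=0$. Strong monotonicity of the gradient gives $\mu\|\theta-\hat\theta\|_2^2\le\langle\nabla L(\theta)-\nabla L(\hat\theta),\theta-\hat\theta\rangle\le\|\nabla L(\theta)\|_2\,\|\theta-\hat\theta\|_2$, hence the distance-to-minimizer bound
\[
\|\theta-\hat\theta\|_2\le \frac{1}{\mu}\|\nabla L(\theta)\|_2 .
\]
The plan is to control $L_0(\theta)-L_0(\hat\theta)$ by integrating $\nabla L_0$ along the segment $\gamma(s)=\hat\theta+s(\theta-\hat\theta)$, $s\in[0,1]$. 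On this segment we use only the membership $L_0\in\Kmin$, that is, $\|\nabla L_0(z)\|_2\le\psi(\|z\|_2+D)$ for every $z$.

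Concretely, we write $L_0(\theta)-L_0(\hat\theta)=\int_0^1\langle\nabla L_0(\gamma(s)),\theta-\hat\theta\rangle\,ds$ and bound it in absolute value by
\[
\psi\,\|\theta-\hat\theta\|_2\int_0^1\bigl(\|\gamma(s)\|_2+D\bigr)\,ds .
\]
The triangle inequality gives $\|\gamma(s)\|_2\le\|\hat\theta\|_2+s\|\theta-\hat\theta\|_2$. Integrating in $s$ then yields
\[
|L_0(\theta)-L_0(\hat\theta)|\le \frac{\psi}{2}\|\theta-\hat\theta\|_2^2+\psi\,\|\theta-\hat\theta\|_2\,(\|\hat\theta\|_2+D).
\]
Substituting the distance bound from the first step gives exactly $\frac{\psi}{2\mu^2}\|\nabla L(\theta)\|_2^2+\frac{\psi}{\mu}\|\nabla L(\theta)\|_2(\|\hat\theta\|_2+D)$, which is the claimed inequality.

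There is no real obstacle here. The only point requiring care is to keep the factor $s$ inside the integral rather than bounding $\|\gamma(s)\|_2$ crudely by $\|\hat\theta\|_2+\|\theta-\hat\theta\|_2$; the crude bound would lose the constant $\tfrac12$ in the quadratic term. Existence and uniqueness of $\hat\theta$ follow from strong convexity together with \Cref{ass-cost}, so $\arg\min L$ is well defined. Differentiability of $L_0$ along the segment is guaranteed by $L_0\in\mathcal{C}^2$.
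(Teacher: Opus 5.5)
Your proposal is correct and follows essentially the same route as the paper: integrate $\nabla L_0$ along the segment from $\hat{\theta}$ to $\theta$, bound $\|(1-s)\hat{\theta}+s\theta\|_2 \le s\|\theta-\hat{\theta}\|_2+\|\hat{\theta}\|_2$ to get the $\tfrac12$ in the quadratic term, and then substitute $\|\theta-\hat{\theta}\|_2\le \|\nabla L(\theta)\|_2/\mu$. Your explicit derivation of that last bound from strong monotonicity of $\nabla L$ is a small addition; the paper simply cites strong convexity for it.
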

\begin{proof}
Fix $\theta \in \mathbb{R}^d$ and consider $g(t) := L_0((1-t)\hat{\theta} + t \theta)$. Since $L_0$ is $C^2$, we deduce that
\begin{align*}
g(1) - g(0) & = \int_{0}^{1} g'(s) ds \\
|L_0(\theta) - L_0(\hat{\theta})| & = \Big| \int_{0}^{1} \langle \nabla L_0((1-s)\hat{\theta} + s \theta), \theta - \hat{\theta} \rangle ds \Big| \\
& \leq \int_{0}^{1} \| \nabla L_0((1-s)\hat{\theta} + s \theta) \|_2 \| \theta - \hat{\theta} \|_2 ds \\
& \leq \psi \int_{0}^{1} \Big( \|((1-s)\hat{\theta} + s \theta)\|_2 + D \Big) \| \theta - \hat{\theta} \|_2 ds \\
& \leq \psi \int_{0}^{1} \Big(  s\|\hat{\theta} - \theta\|_2 + \| \hat{\theta} \|_2+ D \Big) \| \theta - \hat{\theta} \|_2 ds. 
\end{align*}
We deduce that
\begin{align*}
|L_0(\theta) - L_0(\hat{\theta})| & \leq \psi \| \theta - \hat{\theta} \|_2  \Big( \frac{\| \theta - \hat{\theta} \|_2 }{2} + \| \hat{\theta}\|_2 + D \Big) \\
& \leq \frac{\psi}{2 \mu^2} \| \nabla L(\theta) \|_2^2  + \frac{\psi}{\mu} \| \nabla L(\theta) \|_2 (  \| \hat{\theta}\|_2 + D )
\end{align*}
since $\| \nabla L(\theta) \|_2 \geq \mu \|\theta - \hat{\theta} \|_2$ due to the strong convexity of $L$.
\end{proof}
Building on the preceding lemma, strong convexity allows us to directly bound the minority loss gap, $|L_0(\theta) - L_0(\hat{\theta})| < \epsilon$ in terms of $\psi$ and the gradient norm  $\| \nabla L\|_2$.

\begin{lemma}
Under \Cref{ass-cost,ass-VL}, suppose that $L$ is $\mu$-strongly-convex. Then for any fixed $L$ and $\thetaInit$ such that $\nabla L(\thetaInit) \neq 0$,
\begin{align*}
\forall \psi,D > 0, \quad  & \timeentry^{\mathrm{SD}}  \leq \sqrt{\frac{2(L(\thetaInit) - L_\star)}{\mu}} \\
\forall D>0, \quad & \timeentry^{\mathrm{GD}} \xrightarrow[\psi \xrightarrow{} 0]{} + \infty.
\end{align*}
\end{lemma}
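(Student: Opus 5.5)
The two claims need separate arguments. For sign descent I will use a Polyak--Łojasiewicz differential inequality that does not involve $\psi$ or $D$. For gradient descent I will use an exponential lower bound on $\|\nabla L\|_2$ along the flow, which does not depend on $\psi$.

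\textbf{Sign descent.} Set $G(t):=L(\theta(t))-L_\star$.
\begin{itemize}
\item As in the common pre-entry estimates, \Cref{cor-non-emptyness-robust-zones} and continuity show that there is no critical point before $\timeentry$. Hence $G(t)>0$ for $t<\timeentry$.
\item Since $\dot\theta(t)\in-\sign(\nabla L(\theta(t)))$ and $\langle v,\nabla L\rangle=\|\nabla L\|_1$ for every $v\in\sign(\nabla L)$, we get $-G'(t)=\|\nabla L(\theta(t))\|_1\geq\|\nabla L(\theta(t))\|_2$ for a.e.\ $t$.
\item $\mu$-strong convexity gives the PL inequality $\|\nabla L(\theta)\|_2^2\geq 2\mu(L(\theta)-L_\star)$. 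Hence $-G'\geq\sqrt{2\mu G}$ before entry.
\item This gives $\frac{d}{dt}\sqrt{G(t)}\leq-\sqrt{\mu/2}$, so $0\le\sqrt{G(t)}\leq\sqrt{G(0)}-t\sqrt{\mu/2}$ for every $t<\timeentry$.
\item Letting $t\uparrow\timeentry$ yields $\timeentry\leq\sqrt{2G(0)/\mu}$, uniformly in $\psi,D>0$. The same inequality also rules out $\timeentry=+\infty$.
\end{itemize}

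\textbf{Gradient descent.} The GD trajectory $\theta(t)$, with $\dot\theta=-\nabla L(\theta)$, does not depend on $\psi$. Only the zone $\Zmin^{\rm GD}=\{0<\|\nabla L(\theta)\|_2<\psi(\|\theta\|_2+D)\}$ depends on $\psi$.
\begin{itemize}
\item By strong convexity, the sublevel set $\{L\leq L(\thetaInit)\}$ is compact and contains the whole trajectory. So $\|\theta(t)\|_2\leq R$ for all $t$.
\item Since $L$ is $C^2$, there is $\Lambda>0$ with $\nabla^2L\preceq\Lambda I$ on that set.
\item Write $g(t)=\nabla L(\theta(t))$. Then $\frac{d}{dt}\|g\|_2^2=-2\langle g,\nabla^2L(\theta)g\rangle\geq-2\Lambda\|g\|_2^2$, so $\|g(t)\|_2\geq\|\nabla L(\thetaInit)\|_2e^{-\Lambda t}>0$. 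This is also consistent with the absence of finite-time convergence proved earlier.
\item Hence $\theta(t)\in\Zmin$ forces $\|\nabla L(\thetaInit)\|_2e^{-\Lambda t}<\psi(R+D)$.
\item Therefore $\timeentry^{\rm GD}\geq\frac1\Lambda\log\frac{\|\nabla L(\thetaInit)\|_2}{\psi(R+D)}$, and this tends to $+\infty$ as $\psi\to0$ because $R$ and $\Lambda$ are independent of $\psi$.
\end{itemize}

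\textbf{Main obstacle.} The delicate point is the gradient lower bound in the GD part. It needs a Hessian upper bound, which is not assumed globally. I obtain it from compactness of the sublevel set, which strong convexity provides, and from the fact that GD decreases $L$. I must also make sure that none of the constants $R$, $\Lambda$ depends on $\psi$; this holds because the GD flow itself does not involve $\psi$.
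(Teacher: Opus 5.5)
Your proposal is correct and follows the paper's proof essentially step for step. For sign descent, the paper also combines $\|\nabla L\|_1\geq\|\nabla L\|_2$ with the PL inequality from strong convexity to get $\frac{d}{dt}\sqrt{L(\theta(t))-L_\star}\leq-\sqrt{\mu/2}$. For gradient descent, it bounds the Hessian on the compact sublevel set $\{L\leq L(\thetaInit)\}$ (with explicit radius $\|\thetaInit\|_2+\frac{2}{\mu}\|\nabla L(\thetaInit)\|_2$) and uses Gr\"onwall to obtain $\|\nabla L(\theta(t))\|_2\geq\|\nabla L(\thetaInit)\|_2e^{-\lambda_{\max}t}$; your explicit positivity of $G$ before entry, and your use of the zone condition at every $t$ with $\theta(t)\in\Zmin$ rather than at $t=\timeentry$ itself, are if anything slightly more careful than the paper's write-up.
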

\begin{proof}
We note that by strong-convexity :
\[ \forall \theta, \quad \frac{1}{2}\| \nabla L(\theta)\|_2^2 \geq \mu \Big( L(\theta) - L(\hat{\theta}) \Big).\]
\textbf{For Sign Descent dynamism}, by the chain rule, we obtain for  $t \geq 0$
\begin{align*}
\frac{d}{dt} \Big( L(\theta(t)) - L_\star \Big) & = - \| \nabla L(\theta(t))\|_1 \leq - \| \nabla L(\theta(t))\|_2 \\
\implies \frac{d}{dt} \Big(\sqrt{ L(\theta(t)) - L_\star }\Big) & = \frac{- \| \nabla L(\theta(t))\|_1}{2 \sqrt{ L(\theta(t)) - L_\star}} \leq -\frac{1}{2} \sqrt{2\mu} \\
\implies - \sqrt{ L(\thetaInit) - L_\star } &  \leq -\sqrt{\frac{\mu}{2}} \timeentry^{\mathrm{SD}} \\
\implies \timeentry^{\mathrm{SD}} & \leq \sqrt{\frac{2(L(\thetaInit) - L_\star)}{\mu}}.
\end{align*}

\textbf{For Gradient Descent}, we start by defining
\begin{align*}
\mathcal{S}(\thetaInit) := \{ \theta \in \mathbb{R}^d, L(\theta) \leq L(\thetaInit) \}.
\end{align*}
By the strong convexity of $L$, we know that for all $\theta$
\begin{align*}
L(\theta) - L(\thetaInit) \geq \langle \nabla L(\thetaInit), \theta - \thetaInit \rangle + \frac{\mu}{2} \| \theta - \thetaInit\|_2^2. 
\end{align*}
Therefore
\begin{align*}
L(\theta) \leq L(\thetaInit) \implies \| \nabla L(\thetaInit)\|_2 \| \theta - \thetaInit\|_2 \geq \frac{\mu}{2} \| \theta - \thetaInit\|_2^2.
\end{align*}
If $\theta \neq \thetaInit$, we have $ \| \nabla L(\thetaInit)\|_2  \geq \frac{\mu}{2} \| \theta - \thetaInit \|_2 $. Therefore 
\[ \theta \in \mathcal{S}(\thetaInit) \implies \| \theta - \thetaInit\|_2 \leq  \frac{2}{\mu} \| \nabla L(\thetaInit)\|_2. \]
Then $\mathcal{S}(\thetaInit)$ is compact, we note 
\[ \lambda_{\max} := \max_{\theta \in \mathcal{S}(\thetaInit)} \sigma_{\max}(\nabla^2 L(\theta)).\]

By the chain rule, we have for $t\geq 0$
\begin{align*}
\frac{d}{dt} \| \nabla L(\theta(t))\|_2^2 & = - 2 \langle \nabla L(\theta(t)), \nabla^2 L(\theta(t)) \nabla L(\theta(t)) \rangle \\
&  \geq - 2 \lambda_{\max}  \| \nabla L(\theta(t))\|_2^2   \\ 
\implies \| \nabla L(\theta(t))\|_2^2 & \geq \| \nabla L(\thetaInit) \|_2^2 e^{-2\lambda_{\max}t }.
\end{align*}
Since, from \Cref{table:zoology:minority:zone}, 
\[ 
\|\nabla L(\theta(\timeentry^{\mathrm{GD}}))\|_2 \le \psi (\| \theta(\timeentry^{\mathrm{GD}}) \|_2 + D) \leq \psi (\| \thetaInit \|_2 + \frac{2}{\mu} \| \nabla L(\thetaInit)\|_2 + D),
\]
we obtain
\begin{align*}
\timeentry^{\mathrm{GD}} & \geq \frac{1}{\lambda_{\max}}\log \left( \frac{\| \nabla L(\thetaInit)\|_2}{\|\nabla L(\theta(\timeentry^{\mathrm{GD}}))\|_2}\right) \\
& \geq \frac{1}{\lambda_{\max}}\log \left( \frac{\| \nabla L(\thetaInit)\|_2}{\psi (\| \thetaInit \|_2 + \frac{2}{\mu} \| \nabla L(\thetaInit)\|_2 + D)}\right) \xrightarrow[\psi \xrightarrow{} 0]{} + \infty.
\end{align*}
\end{proof}
The previous lemma demonstrates that for strongly convex objective functions, normalized methods such as Sign Descent exhibit fundamentally different entry time dynamics compared to standard Gradient Descent.

\end{document}

%% file: math_commands.tex
\usepackage{amsmath,amsfonts,bm}

\def\eqref#1{equation~\ref{#1}}
\def\Eqref#1{Equation~\ref{#1}}

\def\1{\bm{1}}

\DeclareMathAlphabet{\mathsfit}{\encodingdefault}{\sfdefault}{m}{sl}
\SetMathAlphabet{\mathsfit}{bold}{\encodingdefault}{\sfdefault}{bx}{n}

\newcommand{\R}{\mathbb{R}}

\DeclareMathOperator*{\argmax}{arg\,max}

\DeclareMathOperator{\sign}{sign}